\renewcommand{\k}{\!{king}}
\renewcommand{\a}{\!{arm}}
\DeclareRobustCommand{\checkbold}[1]{
 \edef\@tempa{\math@version}\edef\@tempb{bold}%
 \ifx\@tempa\@tempb%
  \def#1{1}%
 \else
  \def#1{0}%
 \fi}
\newcommand{\redarrow}[1][]{\operatorname{\begin{tikzpicture}[yscale=-0.038,xscale=0.03][#1] 
    \draw [color={rgb, 255:red, 201; green, 0; blue, 0}  ,draw opacity=1 ]   (108.8,128.07) .. controls (106.57,130.59) and (105.75,133.08) .. (105.11,136.04) ;
\draw [shift={(104.72,137.99)}, rotate = 280.8] [color={rgb, 255:red, 201; green, 0; blue, 0}  ,draw opacity=1 ][line width=0.75]    (4.37,-1.32) .. controls (2.78,-0.56) and (1.32,-0.12) .. (0,0) .. controls (1.32,0.12) and (2.78,0.56) .. (4.37,1.32)   ;
\end{tikzpicture}}}
\newcommand{\orangearrow}[1][]{\operatorname{\begin{tikzpicture}[yscale=-0.038,xscale=0.03][#1] 
    \draw [color={rgb, 255:red, 244; green, 164; blue, 96 }  ,draw opacity=1 ]   (28.75,128.27) .. controls (26.52,130.8) and (25.69,133.28) .. (25.06,136.25) ;
    \draw [shift={(24.67,138.2)}, rotate = 280.8] [color={rgb, 255:red, 244; green, 164; blue, 96 }  ,draw opacity=1 ][line width=0.75]    (4.37,-1.32) .. controls (2.78,-0.56) and (1.32,-0.12) .. (0,0) .. controls (1.32,0.12) and (2.78,0.56) .. (4.37,1.32)   ;
\end{tikzpicture}}}
\newcommand{\grayarrow}[1][]{\operatorname{\begin{tikzpicture}[yscale=-0.038,xscale=0.03][#1] 
    \draw [color={rgb, 255:red, 128; green, 128; blue, 128 }  ,draw opacity=1 ]   (104.73,141.11) .. controls (104.85,144.59) and (105.91,146.93) .. (107.42,149.43) ;
\draw [shift={(108.47,151.12)}, rotate = 237.5] [color={rgb, 255:red, 128; green, 128; blue, 128 }  ,draw opacity=1 ][line width=0.75]    (4.37,-1.32) .. controls (2.78,-0.56) and (1.32,-0.12) .. (0,0) .. controls (1.32,0.12) and (2.78,0.56) .. (4.37,1.32)   ;
\end{tikzpicture}}}
\newcommand{\ceil}[1]{\left\lceil #1 \right\rceil}
\title{Understanding Memory-Regret Trade-Off for Streaming Stochastic Multi-Armed Bandits}
\author{Yuchen He}
\author{Zichun Ye}
\author{Chihao Zhang}
\affil{Shanghai Jiao Tong University}
\date{}
\begin{document}

\maketitle

\begin{abstract}
  We study the stochastic multi-armed bandit problem in the $P$-pass streaming model. In this problem, the $n$ arms are present in a stream and at most $m<n$ arms and their statistics can be stored in the memory. We give a complete characterization of the optimal regret in terms of $m, n$ and $P$. Specifically, we design an algorithm with $\tilde O\tp{(n-m)^{1+\frac{2^{P}-2}{2^{P+1}-1}} n^{\frac{2-2^{P+1}}{2^{P+1}-1}} T^{\frac{2^P}{2^{P+1}-1}}}$\footnote{In this article, the notations $\tilde O(\cdot), \tilde\Omega(\cdot)$ and $\tilde \Theta(\cdot)$ subsume a logarithmic factor in $n$ and $P$.} regret and complement it with an $\tilde \Omega\tp{(n-m)^{1+\frac{2^{P}-2}{2^{P+1}-1}} n^{\frac{2-2^{P+1}}{2^{P+1}-1}} T^{\frac{2^P}{2^{P+1}-1}}}$ lower bound when the number of rounds $T$ is sufficiently large. Our results are tight up to a logarithmic factor in $n$ and $P$. 
\end{abstract}
\newpage

\tableofcontents

\section{Introduction} 

The stochastic multi-armed bandit (MAB) problem is a widely studied online decision-making problem defined as follows. A player is given $n$ arms. For each $i\in [n]$, the $i$-th arm is associated with a reward distribution $\+D_i$ of mean $\mu_i$. In each round $t\in [T]$, the player picks one arm $A_t$ from the $n$ arms and then observes and gains a reward drawn from its associated reward distribution. The goal is to maximize the \emph{expected} cumulative reward in $T$ rounds, which is equivalent to minimizing the \emph{regret} $R(T) \defeq \E{\sum_{t=1}^T \mu_{\a^*} - \mu_{A_t}}$ where $\a^* = \argmax_{i\in [n]} \mu_i$ is the arm with the largest mean reward. The minimax regret of the MAB problem, namely the worst case regret of the best algorithm, is well-known to be of the order $\Theta(n^{\frac{1}{2}} T^{\frac{1}{2}})$ (see e.g.~\cite{LS20}).

 A recent line of work focuses on the MAB problems in the streaming model with memory constraint. In this model, the arms arrive one by one in a stream. The player has a memory of size $m$ available which is usually less than the number of arms $n$. Only the indices of $m$ arms and their corresponding statistics can be stored by the player and arms not in memory cannot be explored. Once an arm is discarded from the memory, all its information will be forgotten. We also allow $P$ passes over the stream for any $P>0$. We refer the reader to \Cref{sec:model} for a formal definition and some discussions on the model. 

The memory constraint arises new challenges for the exploration-exploitation trade-off in the classic MAB algorithms. In the single-pass setting, the work of~\cite{Wang23} proved that $\Omega\tp{n^{\frac{1}{3}}T^{\frac{2}{3}}}$ regret is necessary if $m\leq \frac{n}{20}$. They also designed an algorithm that achieves $O\tp{n^{\frac{1}{3}}T^{\frac{2}{3}}}$ regret using $\Theta\tp{\log^* n}$ memory. This is in contrast to the $\Theta\tp{n^{\frac{1}{2}}T^{\frac{1}{2}}}$ regret when no memory constraint is present.

The impact of memory constraints can be mitigated by allowing additional passes over the stream. Intuitively, if the number of passes is equal to $T$, then in each round, the player can always wait until the desired arm appears in the stream and explore it. This was formally justified by the results in~\cite{AKP22}. They proved a regret lower bound of $\Omega\tp{4^{-P} T^{\frac{2^P}{2^{P+1}-1}}}$ for $m=o\tp{\frac{n}{P^2}}$ and designed an algorithm with $O\tp{T^{\frac{2^P}{2^{P+1}-1}} \sqrt{nP\log T}}$ regret  when $m=O(1)$. Their results showed that the effect of the number of passes $P$ provides a smooth transition for the dependency on $T$ from $T^{\frac{2}{3}}$ when $P=1$ to $T^{\frac{1}{2}}$ when $P$ is sufficiently large.

However, the previous results do not reveal the whole picture of the trade-off among regret, memory, and the number of passes. First, the results hold when $m$ is small, i.e., $m\le c\cdot n$ for some universal constant $c<1$. The lower bound in~\cite{Wang23} implies that increasing the memory from $\Theta(\log^*n)$ to $\frac{n}{20}$ does not help in reducing the regret. However, it is not known whether further increase in memory beyond $\frac{n}{20}$ would affect the regret, even in the single-pass setting.  Secondly, in the multi-pass setting, there still exists a huge gap between the current upper and lower bounds. The correct dependency on $n$ and $P$ is unknown, and how the memory $m$ affects the regret remains unclear. Clarifying these issues has been left as open problems in~\cite{AKP22} as well.

In this paper, we give an almost complete answer to the relationship between the memory $m$, the number of arms $n$ and the number of passes $P$ in the regret. We design an algorithm with regret at most $\tilde O\tp{(n-m)^{1+\frac{2^{P}-2}{2^{P+1}-1}} n^{\frac{2-2^{P+1}}{2^{P+1}-1}} T^{\frac{2^P}{2^{P+1}-1}}}$ for any $n,m$ and $P$  and complements it with an $\tilde\Omega\tp{(n-m)^{1+\frac{2^{P}-2}{2^{P+1}-1}} n^{\frac{2-2^{P+1}}{2^{P+1}-1}} T^{\frac{2^P}{2^{P+1}-1}}}$ lower bound. Therefore, our results are optimal up to a logarithmic factor in $n$ and $P$. In particular, our results show that if the memory further increases until $n-m=o(n)$, the regret will decrease, while the dependency on $T$ will not be affected.

\subsection{Main results}\label{sec:main_results} 

We present the formal statement of our results below. Our first contribution is achieving the following regret upper bound through a new $P$-pass streaming MAB algorithm.
\begin{theorem}\label{thm:ub-general-m}
    Given a stream with $n$ arms, assuming $T\geq (n+1)^2$, for arbitrary pass number $1\leq P\leq \log\log T - \log\tp{12\log\frac{n}{n-m}}$ and memory size $2\leq m< n$, there exists a $P$-pass algorithm using a memory of $m$ arms with regret 
    \[
        R(T)\leq O\tp{\frac{(n-m)^{1+\frac{2^{P}-2}{2^{P+1}-1}}}{n^{\frac{2^{P+1}-2}{2^{P+1}-1}}} \cdot T^{\frac{2^P}{2^{P+1}-1}} \cdot \tp{\-{ilog}^{(m-1)}(n)}^{\frac{2^P-1}{2^{P+1}-1}}}.\footnote{The notation ilog, meaning \emph{iterative logarithm}, is defined in \Cref{sec:prelim}.}
    \]
\end{theorem}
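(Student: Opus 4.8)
\quad The plan is to reduce the regret bound to a sample-complexity bound for $\epsilon$-best-arm identification in the $P$-pass streaming model, and then to build the identification routine by a recursion on $P$. For the reduction I would run an explore-then-exploit scheme operating in $P$ stages (one per pass) with a decreasing accuracy schedule $\epsilon_1 > \epsilon_2 > \cdots > \epsilon_P = \epsilon$: stage $i$ maintains a current champion arm together with a shrinking set $C_i$ of surviving candidates (the arms whose empirical mean is within $O(\epsilon_i)$ of the best), refines the candidate estimates to accuracy $\epsilon_i$, and in the rounds it does not spend on testing it simply plays the champion. Since the true best arm always survives (with a generous-enough elimination threshold and a union bound over the $O(P\log n)$ comparisons), the champion after stage $P$ is $\epsilon$-optimal, so the total regret is at most $\sum_{i=1}^{P}(\text{testing cost of stage }i) + \epsilon T$; it then remains to bound the testing cost and optimize over $\epsilon$ and the schedule. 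In particular one wants to show the worst-case testing cost over the $P$ passes is $\tilde O\!\left(\frac{(n-m)^3}{n^2}\cdot\-{ilog}^{(m-1)}(n)\cdot\epsilon^{-2^{P}/(2^{P}-1)}\right)$, after which plugging this into $\sum_i(\cdots)+\epsilon T$ and optimizing $\epsilon$ produces exactly the exponents $\frac{2^P}{2^{P+1}-1}$ on $T$, $1+\frac{2^P-2}{2^{P+1}-1}$ on $n-m$ and $-\frac{2^{P+1}-2}{2^{P+1}-1}$ on $n$ claimed in the theorem; the admissibility constraint $P\le \log\log T-\log(12\log\frac{n}{n-m})$ is precisely what keeps the coarsest accuracy $\epsilon_1$ below a universal constant.

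The heart of the matter is the recursive identification routine. With $P$ passes I would use the first pass at the coarse accuracy $\epsilon_1$: sweep the stream in memory-sized blocks, prune to a much smaller candidate set, and recurse with $P-1$ passes at finer accuracy on the survivors, the recursion bottoming out at $P=1$, where a single champion-carrying pass finds an $\epsilon$-optimal arm with $\tilde O(n/\epsilon^2)$ pulls. To make this give an actual speed-up one must control how the candidate count $n_i$ and the spread $\rho_i$ of the surviving means evolve across passes: either the adversary spreads the means out — in which case $|C_{i+1}|$ shrinks roughly by an $\epsilon_i/\rho_{i-1}$ factor — or it clusters them — in which case the survivors occupy a tiny window, so the routine may safely coarsen future accuracies (or, once the window drops below $\epsilon$, stop). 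Amortizing over the range of means, the two cases balance, and with the accuracies placed on the right grid (so that the geometric sum of the per-pass costs telescopes) the exponent on $1/\epsilon$ collapses from $2$ to $2^{P}/(2^{P}-1)$.

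Two further ingredients enter the per-pass cost. First, the memory factor $\-{ilog}^{(m-1)}(n)$: a naive champion-carrying pass over $c$ candidates at accuracy $\tau$ needs $\tilde O(c\log(c)/\tau^2)$ pulls, the $\log c$ boosting the confidence of the $c$ duels; following the idea of~\cite{Wang23} one replaces that $\log$ by an iterated logarithm via an $(m-1)$-level recursion that spends one memory slot per level, so $\log c$ becomes $\-{ilog}^{(m-1)}(c)$. Second, the factor $(n-m)$ rather than $n$: a memory of $m$ arms lets the routine hold $m-1$ arms persistently and shuttle only the remaining $\approx n-m$ arms past them, so when $m$ is close to $n$ the streaming constraint is almost vacuous; tracking this ratio carefully through the block decomposition — the block sizes must be chosen so that a block together with the carried data fits in $m$ slots — is what turns "$n$" into "$(n-m)^3/n^2$".

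The step I expect to be the main obstacle is the amortized analysis of the $(n_i,\rho_i)$ evolution against a fully adaptive adversary — in particular ruling out that a clustering adversary forces the routine to spend $\tilde\Omega(n/\epsilon^2)$ and thereby wash out the benefit of the extra passes — together with choosing the accuracy grid so that the per-pass costs telescope to the stated exponent rather than merely to $\epsilon^{-2}$. Getting the constant in the $(n-m)^3/n^2$ factor right (as opposed to just the right polynomial) requires the block sizes, the recursion depth and the accuracy grid to be tuned jointly, which is the other delicate point; the bookkeeping that the champion-playing rounds of the early passes contribute only $\epsilon T$ (and not $\epsilon_{P-1}T$) to the exploitation regret — by front-loading the passes into as few rounds as the testing needs and committing only at the end — is routine but must be handled with care.
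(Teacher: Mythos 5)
Your proposal takes a genuinely different route from the paper, and it has an unfilled gap at its core that the paper's route is specifically designed to avoid. The paper does not shrink a candidate set across passes at all, and it never proves a high-probability ``the best arm always survives'' guarantee. Instead, in every pass the memory stays full with $m$ arms; the algorithm runs a short \textsc{FindBest} to recover the previous pass's champion, then runs a long \textsc{FindBest} on a \emph{uniformly random} subset of only $n-m+1$ arms (plus the champion) and drops $n-m$ of them. The progress measure is the \emph{expected} gap of the best retained arm: the optimal arm is endangered only if it lands in the random subset and is not the champion, which happens with probability $O\tp{\frac{n-m}{n}}$, giving $\E{\mu_{\a^*}-\mu_{\k_p}}\le O\tp{\frac{n-m}{n}\sqrt{\frac{n-m}{L_p}}}$. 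The exponent collapse from $\eps^{-2}$ to the stated one then comes from a predetermined, geometrically growing schedule of per-pass budgets $L_p$ balanced so that $L_p\cdot\E{\mu_{\a^*}-\mu_{\k_{p-1}}}$ is the same for every $p$ and equals the exploitation term $T\cdot\E{\mu_{\a^*}-\mu_{\k_P}}$. This is also where the $(n-m)$ improvement really comes from --- only $n-m$ arms must be dropped per pass, and randomization over which ones are at risk dilutes the failure probability by $(n-m)/n$ --- not from ``holding $m-1$ arms persistently so the streaming constraint is almost vacuous,'' which would wrongly suggest the $T$-exponent improves to $1/2$ as $m\to n$ for fixed $P$.

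The concrete gap in your plan is the step you yourself flag: the amortized analysis of the candidate count and spread $(n_i,\rho_i)$ against a clustering instance. If all $n$ means sit in a window of width $\rho$ with $\eps_P<\rho<\eps_i$, your pruning rule keeps $|C_{i+1}|=n$ while your target testing cost for the next pass is far below $n/\eps_{i+1}^2$, and nothing in the proposal shows how to escape this regime; the ``coarsen or stop'' fix only applies once $\rho\le\eps_P$. You have not supplied the lemma that would make the telescoping work, and it is not clear one exists in this form. Separately, the high-probability framing with a union bound over all comparisons forces confidence $1-\delta/\mathrm{poly}(n,T)$ per duel and hence extra $\log$ factors; the paper explicitly identifies replacing such guarantees by the expected-gap potential as the ingredient that removes the $\log T$ from \cite{AKP22} and yields the tight dependence on $n,m,P$. (Your use of the $(m-1)$-level iterated-logarithm BAI routine for the small-memory regime does match the paper's \Cref{sec:ub-s} and appendix, but that is the only portion of the argument that aligns.)
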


In the statement of \Cref{thm:ub-general-m}, the requirement for an upper bound on $P$, i.e., $P\leq \log\log T - \log\tp{12\log\frac{n}{n-m}}$, is not important. In fact, if $P$ is beyond this upper bound, our algorithm already achieve the optimal regret of $\tilde O\tp{\sqrt{nT}}$ and the further passes are useless. 

\bigskip
We then complement our algorithm with a tight lower bound. 
\begin{theorem}\label{thm:lb-regret}
    Assume the parameters $P,n,m$ satisfying $1\leq P\leq \log {\log T} - \log \tp{14\log 8(n-m)}$, $T\geq n^2$ and $n> m\geq 2$. For any $P$-pass algorithm $\+A$, there exists a stochastic instance such that the expected regret of $\+A$ is 
    \[
        \Omega\tp{\frac{(n-m)^{1+{\frac{2^P-2}{2^{P+1}-1}}}}{n^{\frac{2^{P+1}-2}{2^{P+1}-1}} \cdot \tp{\log(64nP)}^{\frac{2^P-1}{2^{P+1}-1}}} \cdot T^{\frac{2^P}{2^{P+1}-1}} }
    \]
    when $T$ is sufficiently large.
\end{theorem}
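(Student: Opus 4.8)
The plan is to reduce from a carefully staged hard distribution built on $n-m$ "hidden" arms, where the memory budget $m$ forces the algorithm to discard all but $m$ arms at the end of each pass, so that at most $m$ arms survive into the next pass while roughly $n-m$ fresh candidates must be re-examined. The key quantitative idea is a recursive "gap-shrinking" construction in the spirit of the $\Omega(4^{-P} T^{2^P/(2^{P+1}-1)})$ lower bound of~\cite{AKP22}, but now the branching factor at every level of the recursion is governed by $n-m$ rather than by an absolute constant, and we must also track how the per-arm confidence width — which scales like $\sqrt{\log(nP)/(\text{pulls})}$ — enters the bookkeeping. Concretely, at level $j$ of the recursion we plant a set $S_j$ of $\Theta(n-m)$ arms, all but one of which have mean $\frac12$, and the special one has mean $\frac12+\Delta_j$ for a gap $\Delta_j$ to be chosen; an algorithm that has not yet "committed" to the special arm at level $j$ incurs regret $\Omega(\Delta_j \cdot (\text{rounds spent at level } j))$, while to commit it must spend $\Omega(\log(nP)/\Delta_j^2)$ pulls per candidate in $S_j$, and with only $m$ memory slots it can carry at most $m$ arms forward, so a constant fraction of $S_j$ must be (re-)streamed, consuming one of the $P$ passes. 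Balancing the gaps $\Delta_1 \gg \Delta_2 \gg \cdots \gg \Delta_P$ so that each level contributes equally to the regret yields the geometric-in-$P$ exponent $\frac{2^P}{2^{P+1}-1}$ on $T$, and the combinatorial factor $(n-m)$ per level collapses to the stated $(n-m)^{1+\frac{2^P-2}{2^{P+1}-1}} n^{-\frac{2^{P+1}-2}{2^{P+1}-1}}$ after the optimization; the $\log(64nP)$ in the denominator is exactly the confidence-width cost paid once per level and geometrically discounted.

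The execution would proceed in the following steps. First I would set up the family of instances: fix the level sets $S_1,\dots,S_P$ (with $S_{j+1}\subseteq S_j$ or disjoint "continuation" sets, chosen to respect the streaming order so that the adversary can force re-reads), fix the gap schedule $\Delta_j$, and define the reward distributions as Bernoullis with the planted means. Second, I would prove the single-level tracking lemma: conditioned on the algorithm's memory state at the start of a pass, either it has already localized the special arm of the current level within its $m$ slots — in which case, by an information-theoretic / KL argument (Pinsker plus a change-of-measure over which arm is special, averaged uniformly over the $\Theta(n-m)$ choices), it must have pulled arms of that level a total of $\Omega((n-m)\log(nP)/\Delta_j^2)$ times, contributing $\Omega(\cdots)$ to the round budget — or it has not, in which case during the next $\Theta(1/\Delta_j^2 \cdot \text{stuff})$ rounds its instantaneous regret against the level-$j$ optimum is $\Omega(\Delta_j)$ per round. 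Third, I would chain these over $j=1,\dots,P$, using the pass structure to argue that the "not yet localized at level $j$" event must hold throughout pass $j$ for a typical algorithm, so the regret contributions add up across levels; the uniform averaging over the hidden identity of the special arms at every level is what makes this a genuine lower bound against all $P$-pass algorithms rather than a specific one. Fourth, I would optimize the free parameters $\Delta_j$ and the number of rounds allotted per level subject to $\sum_j (\text{rounds}_j) = T$ and the pass constraint, and verify that the constraint $P\le \log\log T - \log(14\log 8(n-m))$ is exactly what guarantees the gap schedule stays in the valid regime $\Delta_j\in(0,\frac12)$ and that "$T$ sufficiently large" absorbs lower-order additive terms.

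The main obstacle I anticipate is the second step — making the memory constraint bite in an information-theoretically clean way across a pass boundary. The subtlety is that the algorithm may adaptively split its $m$ slots across levels, partially explore several levels at once, and carry forward a cleverly chosen summary; a naive "it can hold at most $m$ arms" counting argument does not immediately yield the $(n-m)$ factor, because the algorithm need not re-read an arm it has already summarized well enough. The standard fix, which I would adapt from~\cite{AKP22,Wang23}, is to make the gaps at consecutive levels so different ($\Delta_{j+1}\ll\Delta_j$) that any summary accurate enough to be useful at level $j+1$ would have required level-$(j+1)$-quality exploration already at pass $j$, which the round budget of pass $j$ forbids; formalizing this "a pass is too short to pre-pay for the next level" step, with the right uniform prior over special-arm identities and a martingale argument controlling the memory-state-dependent pull counts, is where the real work lies. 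A secondary technical point is handling the $\log(64nP)$ factor tightly rather than losing extra $\log T$ factors — this requires using a sharp (Bretagnolle–Huber-type) lower bound on the error probability of distinguishing $\Theta(n-m)$ alternatives instead of a union-bound-wasteful argument.
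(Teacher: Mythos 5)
Your proposal captures the right high-level strategy --- a recursive family of instances with geometrically separated gaps $\Delta_1\gg\cdots\gg\Delta_P$, a change-of-measure argument over the hidden identity of the special arm, a pass-by-pass budget accounting, and the observation that the pass-count constraint is exactly what keeps the accumulated likelihood distortion from earlier passes under control. You also correctly identify the central technical difficulty (preventing pass $j$ from ``pre-paying'' for level $j+1$), which the paper resolves the same way you suggest, via gap separation formalized in a likelihood-ratio lemma over the full multi-pass history.

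However, there is a concrete gap in your instance construction that breaks the quantitative bookkeeping. You plant the special arm inside a designated set $S_j$ of $\Theta(n-m)$ arms and claim a localization cost of $\Omega((n-m)\log(nP)/\Delta_j^2)$ with failure probability bounded away from zero. This cannot be right in the large-memory regime: take $P=1$ and $m=n-1$, where the target bound is $\Theta((T/n)^{2/3})$ and is matched by the paper's upper bound. If the special arm were effectively hidden among $O(1)$ arms with constant drop probability upon under-sampling, balancing exploration length $L$ against $\Delta\cdot T\approx T/\sqrt{L}$ would ``prove'' an $\Omega(T^{2/3})$ lower bound, contradicting the achievable $(T/n)^{2/3}$; and if instead $S_j$ is unobservable, the localization cost reverts to $\Omega(n/\Delta_j^2)$ and the $(n-m)$ factor you want disappears from that side of the dichotomy. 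The paper's construction resolves this by hiding a \emph{single} biased arm (gap $\epsilon_p$) uniformly among all $n$ arms and extracting the $(n-m)$ dependence from a different mechanism entirely: the algorithm is forced to drop at least $n-m$ arms per pass, and if the exploration length of pass $p$ is at most $\frac{n-m}{2}\cdot r_p$ then by pigeonhole at least half of the first $n-m$ dropped arms are sampled fewer than $r_p$ times; averaging over the $n$ possible identities of the special arm yields a probability of order $\frac{n-m}{n}$ that the special arm is dropped while under-sampled, and a likelihood-ratio transfer from the all-fair-coins instance $H_0$ to $H_j^{(p)}$ then bounds the next pass's exploration by roughly $\frac{rn}{(n-m)\epsilon_p}$. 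The stream ordering (the special arm of $H_j^{(p)}$ arrives \emph{last} in pass $p+1$) is what makes every pull of pass $p+1$ cost $\epsilon_p$ after a wrongful drop --- a point your sketch gestures at but does not pin down. Finally, the paper's endgame is a contradiction on the total time budget ($\sum_p L_p + L_{P+1} < T$ under the assumed regret bound) rather than your additive ``each level contributes equally'' summation; that difference is stylistic, but the $\frac{n-m}{n}$ drop-probability mechanism is essential and is missing from your argument.
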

Before this work, the best lower bound result was $\Omega\tp{4^{-P} T^{\frac{2^P}{2^{P+1}-1}}}$ for $m=o\tp{\frac{n}{P^2}}$ (\cite{AKP22}). When $m=o(n)$, from \Cref{thm:lb-regret}, our lower bound is $\tilde\Omega\tp{n^{1-\frac{2^P}{2^{P+1}-1}}T^{\frac{2^P}{2^{P+1}-1}}}$, which brings a considerable improvement compared to the previous one. Our results also fill the gap in characterizing the behavior of regret when the memory size is larger than $\Omega\tp{\frac{n}{P^2}}$. This provides a more precise understanding of the trade-off between memory size and regret. The results in \Cref{thm:lb-regret} also cover the previous single-pass results in~\cite{Wang23} and~\cite{MPK21}. When $m=\Omega(n)$, we provide an improvement of $\tilde\Theta\tp{(n-m)n^{\frac{5}{3}}}$ compared to the lower bound $\Omega\tp{\frac{n^{\frac{1}{3}}T^{\frac{2}{3}}}{m^\frac{7}{3}}}$ in~\cite{MPK21}.

We provide a comparison between our results and previous best results in \Cref{tab:comp}. Our results are tight up to a logarithmic factor in $n$ and $P$ for both single-pass and multi-pass settings. Furthermore, our results generalize previous bound by incorporating the effect of the memory $m$, which is crucial when $m$ is close $n$.
\begin{table}[htbp]
	\centering
	\caption{A Comparison with Previous Best Results}
	\label{tab:comp}
\begin{tabular}{p{1.5cm}<{\centering}m{2.2cm}<{\centering}m{6.3cm}<{\centering}m{3cm}<{\centering}}
	\toprule
    ~& Setting & Upper and lower Bound &  Memory \\
	\midrule
  \multirow{3}{=}{\cite{MPK21}} & \multirow{3}{*}{single-pass} & \multirow{3}{*}{$\Omega\tp{n^{\frac{1}{3}}T^{\frac{2}{3}}m^{-\frac{7}{3}}}$} & \multirow{3}{*}{$2\leq m< n$} \\
  ~ & ~ & ~ & ~ \\
  ~ & ~ & ~ & ~ \\
    \hline
    \multirow{3}{=}{\cite{Wang23}} & \multirow{3}*{single-pass} & $O\tp{n^{\frac{1}{3}} T^{\frac{2}{3}}}$ & $m=\Theta(\log^* n)$ \\
    \cline{3-4}
    ~ & ~ & $O\tp{n^{\frac{1}{3}} T^{\frac{2}{3}} \log n}$ & $m=\Theta(1)$ \\
    \cline{3-4}
    ~ & ~ & $\Omega\tp{n^{\frac{1}{3}} T^{\frac{2}{3}}}$ & $m\leq \frac{n}{20}$ \\
    \hline
    \multirow{2}{1.5cm}{\cite{AKP22}} & \multirow{2}*{multi-pass} & $ O\tp{ T^{\frac{2^P}{2^{P+1}-1}} \sqrt{nP\log T}}$ & $m=\Theta(1)$ \\
    \cline{3-4}
    ~ & ~ & $\Omega\tp{4^{-P} T^{\frac{2^P}{2^{P+1}-1}}}$ & $m\leq \frac{n}{8P(P+1)\log_2 e}$ \\
    \hline
    \multirow{2}*{this work} & \multirow{2}*{multi-pass} & $ \tilde \Theta\tp{\frac{(n-m)^{1+\frac{2^{P}-2}{2^{P+1}-1}} }{n^{\frac{2^{P+1}-2}{2^{P+1}-1}}}\cdot T^{\frac{2^P}{2^{P+1}-1}}}$ & $2\leq m< n$ \\
    \bottomrule
\end{tabular}
\end{table}

\subsection{Overview of our algorithms and techniques}\label{sec:intro-proof}


Our proofs, both upper bounds and lower bounds, reveal some interesting interplay between the memory and the regret which are not well understood before. 

Our first observation is that an optimal algorithm should behave differently for large $m$ ($m\ge \frac{8n}{9}$, say) and small $m$. To see this, notice that as did in~\cite{Wang23}, when $m$ is small, the low regret algorithm is almost equivalent to applying a \emph{best arm identification} (BAI) algorithm in the stream. The task of BAI, as suggested by~\cite{AW20} and~\cite{JHTX21}, can be done optimally with constant memory. Therefore, increasing the size of memory does not help in reducing its complexity. However, minimizing regret is not equivalent to best arm identification in the sense that identifying the best arm is in general harder than playing with low regret. We show that, in the streaming setting the latter is captured by the complexity of the \emph{best arm retention} (BAR) problem, namely to retain a good arm in memory (without necessarily identifying it) at the end of stream. Our results show that the complexity of the BAR problem exhibits a sharp transition phenomenon: when the memory is relatively small, increasing memory size has almost no effect on reducing the difficulty of BAR (and it is in fact equivalent to BAI); when the memory is close to the number of arms, the difficulty of BAR decreases significantly with the increase of the memory, and thus making the streaming MAB problem easier. This explains the $n-m$ terms in our regret bounds. 

Our algorithm for small $m$ also outperforms previous ones in multi-pass setting~\cite{AKP22}. The algorithm in ~\cite{AKP22} achieves $O\tp{T^{\frac{2^P}{2^{P+1}-1}}\sqrt{nP\log T}}$ regret using $O(1)$ memory via guaranteeing that the best arm is retained in memory with high probability during each pass. Comparing to their algorithm, we employ the explore-then-commit framework\footnote{A strategy in the \emph{explore-then-commit framework} first tries to retain the best arm in the memory during the initial rounds (referred to as the exploration phase) and then play arms in the memory using an optimal MAB algorithm in the remaining rounds (referred to as the exploitation phase).} and introduce a new measure for tracking the algorithm's progress over passes. Specifically, we notice that bounding the \emph{expected mean reward gap between the optimal arm and the best arm retained by the algorithm} in each pass is sufficient to guarantee a low regret in expectation. This saves unnecessary regret costs to attain high probability results, and is the main ingredient for our optimal algorithm, in both small and large memory case. Using this measure, we can shave off the $\log T$ factor in~\cite{AKP22}, and give the correct dependency on $n$, $m$ and $P$.

On the other hand, our proof of the lower bound essentially reveals that any algorithm with optimal regret must solve the BAR task well at each pass, which aligns with our algorithm. To be specific, we establish a lower bound on the number of rounds for exploration at each pass on some hard instances, which matches the behavior of our algorithm. Such a lower bound reflects the eternal exploration-exploitation trade-off in online learning algorithms. Let us sketch the proof idea.
 When there is only one pass, if an algorithm spends too much time exploring each arm, it may incur significant regret if the last arm in the stream happens to be the optimal one. However, if the earlier arms are sampled too few times, via a likelihood argument, we can show that the algorithm may fail the BAR task on some hard instance and may potentially discard the optimal arm among them with significant probability. This results in large regret during the exploitation phase after the stream ends. The advantage of the multi-pass setting appears to avoid this issue. For example, if one more pass is allowed, we can sample fewer times at the initial pass to quickly gain a rough outlook of all the arms. However, we can design a harder instance such that (1) the knowledge obtained from the first pass about the instance is negligible and (2) the second pass may fail the BAR task on this new instance and incur large regret if the number of rounds for explorations at this pass is not large enough. We eventually generalize the above argument for algorithms with arbitrary $P$ passes and establish the desired lower bound. 


\paragraph{Comparison with lower bounds in~\cite{AKP22}}

Our approach significantly differs from, and has advantage over that in~\cite{AKP22}. The work~\cite{AKP22} proved a regret lower bound by designing a distribution over hard instances, which incur large regret in expectation for every \emph{deterministic} algorithm and applying Yao's principle (\cite{Yao77}). In their construction, 
the $n$ arms are divided into $P+1$ disjoint subsets $\set{\+K_j}_{j\in[P+1]}$, with each subset containing a randomly selected arm $\+I_j\in \+K_j$ that has biased mean reward with a certain probability. At the end of the first pass, they considered the event of all these selected arms being discarded with little valid information gained. If this bad event occurs and the selected arm in $\+K_{P+1}$ happens to be unbiased, this problem can then be reduced to a problem with $\frac{P}{P+1}\cdot n$ arms (with those arms in $\+K_p$ being ignored) in the remaining $P-1$ passes. They showed that if the regret is assumed to be small, the probability of these events is at least $\frac{1}{4}$. This ultimately leads to a $\frac{1}{4^P}$ term in their lower bound by using the inductive method. 

Although the lower bound in~\cite{AKP22} is tight in $T$, it is suboptimal in both $P$ and $n$ for different reasons. Our proof remedy both. At a high level, the suboptimality in $P$ mainly arises from the fact the $n$ arms are divided into $P+1$ parts, where only the first $P-p+2$ parts are effectively utilized in the analysis of the $p$-th pass. Instead of relying on Yao's principle, we directly prove lower bounds for randomized algorithms via a likelihood argument by providing hard instances for each algorithm. 
Specifically, we construct a family of hard instances and show that for any \emph{randomized} algorithm, there exists at least one difficult instance within this family. This intuition is further explained in \Cref{sec:lb-informal}. 

The suboptimality in $n$ comes from the fact the analysis in ~\cite{AKP22} did not consider the dependency of $n$ (as well as $m$) in the regret caused by an algorithm. We incorporate this dependency by capturing the hardness of best arm retention problem. A similar idea also appeared in a recent work (\cite{AW24}) under a different setting. Furthermore, our analysis also breaks the limitation of $m=O\tp{\frac{n}{P(P+1)}}$ in \cite{AKP22} and provides a full characterization of the memory-regret trade-off.


\subsection{Related work}\label{sec:related_work}
The MAB problem was first introduced in~\cite{Rob52}. The regret bound for MAB has been proven to be $\Theta(\sqrt{nT})$ in~\cite{AB09} for both stochastic and adversarial cases. Another classic online decision problem is known as learning with expert advice. Different with MAB, in each round, the player can observe the rewards of all arms, rather than just the chosen one. The regret bound for learning with expert advice is $\Theta(\sqrt{T\log n})$ (\cite{FS97}).

The work of~\cite{LSPY18} first considered the MAB problem under streaming model. They derived an instance-sensitive upper bound using $O(\log T)$ passes and $O(1)$ memory. The work of~\cite{CK20} gave a generalized upper bound of $O\tp{\frac{n^{\frac{3}{2}}}{m}\sqrt{T\log \frac{T}{nm}}}$ for arbitrary memory size $2\leq m<n$ in $O(\log T)$ passes. A subsequent work~\cite{Rat21} investigated the case of $O(\log\log T)$ passes. The works of~\cite{MPK21} and~\cite{Wang23} studied the single-pass scenario and~\cite{Wang23} obtained tight regret bounds of $\Theta\tp{n^{\frac{1}{3}}T^{\frac{2}{3}}}$ when $\log^* n\leq m \leq \frac{n}{20}$1`'. \cite{AKP22} provided the first general bound with regard to the pass number and obtained an upper bound of $O\tp{T^{\frac{2^P}{2^{P+1}-1}} \sqrt{nP\log T}}$ using $O(1)$ memory. They also proved a lower bound of $\Omega\tp{4^{-P} T^{\frac{2^P}{2^{P+1}-1}}}$ for $m=o\tp{\frac{n}{P^2}}$.

The pure exploration version of the MAB problem under streaming model has also gained much attention in recent years. Numerous studies have focused on the impact of reducing memory size on the sample complexity of BAI problem (\cite{AW20,FOP20,MPK21,JHTX21,AW22}). This problem was formally introduced in~\cite{AW20}. They proposed an algorithm that, given the gap $\Delta$ between the best and second-best arms, uses optimal $O\tp{\frac{n}{\Delta^2}\log \frac{1}{\delta}}$ samples to find the best arm with probability of at least $1-\delta$ by storing only two arms.  The $(\eps,\delta)$-PAC algorithms with $O(\log^* n)$ arm memory were also proposed in~\cite{AW20} and~\cite{MPK21} achieving optimal sample complexity of $O\tp{\frac{n}{\eps^2}\log \frac{1}{\delta}}$ (the algorithm can output an $\eps$-optimal arm with probability at least $1-\delta$). Later, an $(\eps,\delta)$-PAC algorithm using constant memory was proposed in~\cite{JHTX21}.


In this work, the memory constraint limits the number of arms that can be stored. There are also some studies that do not restrict the number of arms and focus on the real memory required for MAB problem (\cite{XZ21}) or learning with expert advice (\cite{SWXZ22, PZ23,PR23,WZZ23}). These works typically focus on adversarial settings. These models are not directly comparable with the model of this work. Both of them have their own research significance and application value.

\subsection{Organization of the paper}
We formally introduce the multi-armed bandit in streaming model and related preliminaries in \Cref{sec:prelim}. 

Then we first present and analyze a streaming MAB algorithm for the special case $m=n-1$ in \Cref{sec:ub-l-simple} to showcase our key idea in the large memory algorithm. The algorithms and their analyses for general large memory ($m\geq \frac{8n}{9}$) and small memory ($m< \frac{8n}{9}$) cases are given in \Cref{sec:ub-l,sec:ub-s} respectively. 

In the remaining part of this paper, we provide the analysis for the lower bound. Before delving into the full proof, we offer an informal elaboration on the exploration-exploitation trade-off in streaming MAB games using $m=n-1$ as an example in \Cref{sec:lb-informal} which showcases our main idea. After that, a rigorous proof for our lower bound will be provided in \Cref{sec:lb}. 

\section{Preliminaries}\label{sec:prelim}

Let us first fix some notations. Let $\bb N$ be the set of all non-negative integers and $\bb R$ be the set of all real numbers. Additionally, we use $\mathbb{R}_{\geq 0}$ to represent the set of all non-negative real numbers. For any $n\in \bb N$, use $[n]$ to denote the set $\set{1,2,\dots,n}$. In this paper, unless otherwise specified, our logarithms are defined as natural logarithms with base $e$. For any $a\geq 1$, let $\-{ilog}^{(0)}(a)=a$ and $\-{ilog}^{(k)}(a)$ be the \emph{iterated logarithm} of order $k$ for integer $k\ge 1$, that is, $\-{ilog}^{(k)}(a)=\max\set{\log\tp{\-{ilog}^{(k-1)}(a)}, 1}$.

\subsection{Multi-armed bandit in streaming model}\label{sec:model}
\subsubsection{Multi-armed bandit}
In the problem of multi-armed bandit (MAB), the player is given a set of $n$ arms, denoted as $[n]$. A $T$-round decision game will be conducted as follows: in each round $t\in[T]$, 
\begin{itemize}
    \item the player picks an arm $A_t\in[n]$ to pull based on the information observed in so far;
    \item the environment choose a reward vector $r_t\in [0,1]^n$;
    \item the player observes and gains reward of the chosen arm $r_t(A_t)$.
\end{itemize}

The reward can be given \emph{stochastically} or \emph{adversarially}. In the stochastic environment considered in this work, there is a fixed distribution $\+D_i$ of mean $\mu_i$ for each $\a_i$. In each round, $r_t(i)$ is independently  drawn from this distribution. 

The best arm, $\a^*$, refers to the arm with the largest mean reward. The player's objective is to minimize the difference between the cumulative reward of the best arm and the player's own cumulative reward. That is, the player aims to design an algorithm $\+A$ to minimize the expected regret
\[
    R(T,\+A) = \E{\sum_{t=1}^T \mu_{\a^*} - \mu_{A_t}}.
\]
We also write $R(T,\+A)$ as $R(T)$ when $\+A$ is clear from the context.

There have been many algorithms achieving a regret of $O\tp{\sqrt{nT}}$ for the MAB problem, such as \emph{online stochastic mirror descent} (OSMD) and \emph{follow the regularized leader} (FTRL). The following result is given in~\cite{LG21}. 
\begin{proposition}[\cite{LG21}, Theorem 11]\label{prop:ub-l-OSMD}
    The OSMD algorithm with specific parameters on an MAB game satisfies that $R(T)$ is bounded by $\sqrt{2\abs{\+S}T}$ where $\+S$ is the arm set and $T$ is the total rounds. 
\end{proposition}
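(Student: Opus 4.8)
The plan is to instantiate OSMD with the $\tfrac12$-Tsallis entropy regulariser and to run the standard mirror-descent regret analysis with importance-weighted loss estimators, then invoke the sharper bookkeeping of~\cite{LG21} for the exact constant. Write $k\defeq\abs{\+S}$, identify the arms of $\+S$ with $[k]$, and set $\bar\ell(i)\defeq 1-\mu_i$. In round $t$ the algorithm keeps a distribution $p_t\in\Delta_k$, samples $A_t\sim p_t$, builds the conditionally unbiased estimator $\hat\ell_t(i)\defeq\frac{\mathbbm 1[A_t=i]}{p_t(i)}\bigl(1-r_t(i)\bigr)$ of $\bar\ell$, and performs the mirror step for the Legendre potential $F(p)=-2\sum_{i\in[k]}\sqrt{p_i}$ with learning rate $\eta$ (to be chosen at the end). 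Let $\+F_{t-1}$ be the $\sigma$-algebra generated by the first $t-1$ rounds; then $p_t$ is $\+F_{t-1}$-measurable and $\E{\hat\ell_t\mid\+F_{t-1}}=\bar\ell$.

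First I would invoke the one-step drift inequality for FTRL/mirror descent: for the vertex $u\in\Delta_k$ that puts all mass on the best arm $\a^*$,
\[
  \sum_{t=1}^{T}\langle p_t-u,\ \hat\ell_t\rangle\;\le\;\frac{F(u)-\min_{p\in\Delta_k}F(p)}{\eta}\;+\;\sum_{t=1}^{T}\Bigl(\langle p_t-p_{t+1},\ \hat\ell_t\rangle-\tfrac1\eta D_F(p_{t+1},p_t)\Bigr),
\]
with $D_F$ the Bregman divergence of $F$. Taking expectations turns the left-hand side into $R(T)$, since $\E{\hat\ell_t\mid\+F_{t-1}}=\bar\ell$, $p_t$ is $\+F_{t-1}$-measurable, and $\langle u,\bar\ell\rangle=1-\mu_{\a^*}$; so it remains to bound the two terms on the right.

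The penalty term is immediate: $\min_p F(p)=-2\sqrt k$ at the uniform distribution (Cauchy--Schwarz: $\sum_i\sqrt{p_i}\le\sqrt k$) and $F(u)=-2$, so the first term is at most $2\sqrt k/\eta$. For the stability term, a second-order Taylor expansion of $F$ together with the elementary inequality $\langle a,b\rangle-\tfrac1{2\eta}b^{\top}Mb\le\tfrac\eta2 a^{\top}M^{-1}a$ (for positive definite $M$) bounds each summand by $\tfrac\eta2\hat\ell_t^{\top}\bigl(\nabla^2F(\xi_t)\bigr)^{-1}\hat\ell_t$ for some $\xi_t$ on $[p_t,p_{t+1}]$. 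Since $F$ is separable with $\bigl(\nabla^2F(p)\bigr)^{-1}=\operatorname{diag}(2p_i^{3/2})$ and $\hat\ell_t$ is supported on the single coordinate $A_t$, this equals $\eta\,\xi_t(A_t)^{3/2}\hat\ell_t(A_t)^2\le \eta\, p_t(A_t)^{3/2}\hat\ell_t(A_t)^2$, where I used $\xi_t(A_t)\le p_t(A_t)$ (playing an arm and incurring a nonnegative estimated loss only decreases its FTRL weight). Now $\hat\ell_t(A_t)=\bigl(1-r_t(A_t)\bigr)/p_t(A_t)\le 1/p_t(A_t)$, so the per-round stability is at most $\eta/\sqrt{p_t(A_t)}$, and $\E{1/\sqrt{p_t(A_t)}\mid\+F_{t-1}}=\sum_i\sqrt{p_t(i)}\le\sqrt k$. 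Summing over $t$ gives $R(T)\le 2\sqrt k/\eta+\eta\sqrt k\,T$; optimising $\eta$ of order $T^{-1/2}$ yields $R(T)=O(\sqrt{kT})$.

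The obstacle is then cosmetic but real: the crude argument above gives $R(T)=O(\sqrt{kT})$ with an absolute constant (around $\sqrt 8$), whereas the proposition asserts the sharp constant $\sqrt 2$. Closing this gap requires either a potential-specific sharpening of the one-step analysis (lower-bounding $D_F(p_{t+1},p_t)$ more tightly than through the worst-case local norm) or, more slickly, the information-ratio analysis of mirror descent from~\cite{LG21}: there one bounds the information ratio of the OSMD policy by $\abs{\+S}/2$ and combines it with the potential range to land exactly on $\sqrt{2\abs{\+S}T}$. I would present the mirror-descent skeleton above and cite~\cite[Theorem~11]{LG21} for the exact constant. I should also stress the one genuine design choice: the $\tfrac12$-Tsallis potential (rather than the Shannon entropy, which powers Exp3) is what removes the spurious $\log k$ factor -- Exp3 would only give $O(\sqrt{kT\log k})$.
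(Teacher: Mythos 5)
The paper does not actually prove this proposition: it is imported verbatim from \cite{LG21} (Theorem 11), and \Cref{sec:OSMD-detail} only records the algorithm (with its shifted, variance-reduced loss estimator, $\eta=\sqrt{8/L}$ and $F(\*q)=-2\sum_i\sqrt{\*q(i)}$) before deferring entirely to that theorem. Your proposal is consistent with this and goes somewhat further: the self-contained Tsallis-entropy mirror-descent sketch is sound and correctly yields $O\tp{\sqrt{\abs{\+S}T}}$ with constant $\sqrt{8}$, and you rightly flag that the exact constant $\sqrt{2}$ must come from the sharper analysis of \cite{LG21}, which you cite — exactly what the paper does. One caveat worth noting: the plain importance-weighted estimator $\hat\ell_t(i)=\mathbbm 1[A_t=i](1-r_t(i))/p_t(i)$ in your sketch is \emph{not} the estimator used in \Cref{algo:OSMD}; the constant $\sqrt{2}$ in \cite{LG21} is tied to the shifted, reduced-variance estimator appearing there, so closing the constant gap would require changing the estimator as well, not merely tightening the Bregman-divergence bookkeeping. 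Since the paper only ever uses this proposition as a black box, none of this affects the downstream arguments.
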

In this work, the algorithm for MAB is treated as a black box. The details of the algorithm pertaining to \Cref{prop:ub-l-OSMD} is in \Cref{sec:OSMD-detail}.

\subsubsection{Streaming MAB}\label{sec:sMABprelim}
Then we formally define the stochastic MAB problem in the $P$-pass streaming model for any integer $P\ge 1$. 

\paragraph{The mechanism of streaming MAB} In the streaming MAB problem, the $n$ arms arrive in a stream, and available arms are stored in the memory $\+M$ which has a maximum capacity of $m$ arms for some $m\le n$. The algorithm can only pull the arms stored in memory and store the statistics of those arms in memory. For each arm in the memory, one word\footnote{A word is of size $\Theta(\log nT)$ bits} is used to store its statistics such as the identity and \emph{empirical mean} of the arm.

In the round $t\in [T]$, the player selects an arm $A_t$, observes information and then gains the reward of the chosen arm. In each round, the player either choose an arm from memory directly, or read new arms from stream into memory and then choose one. When the memory is full, the player must discard some arms to make space before incorporating new ones. In other words, in each round $t\in [T]$, the algorithm acts in two stages, which handles arm storage (drop arms in memory, incorporating new arms in stream) and arm sampling (pull an arm in memory, observe and gain rewards) respectively. We emphasize that in one round, the player can drop and read any number of arms (including zero), as long as they are available in the stream, while exactly one pull is allowed.

Once an arm is dropped, all its information, including its identity, is forgotten. A discarded arm will not reappear in the single-pass setting. In this work, we consider a general setting with $P$ passes ($P\geq 1$). In each pass, the $n$ arms will pass through the stream one time. Therefore, we may expect a discarded arm to appear again in the next pass. The order of arms in different passes can vary. We say a pass ends once the last arm in the pass is read into the memory. This means that the explorations after this point are treated as samples in the next pass.

\paragraph{Streaming stochastic MAB} Our work only focuses on the stochastic arms. The reward of $\a_i$, $r_t(i)$, is drawn from a hidden distribution $\+D_i$ with mean $\mu_i$ independently in each round. Similar to the MAB problem, let $\a^*$ be the arm with the largest reward mean. The player's objective is to design an algorithm $\+A$ to minimize the regret, which is defined as $R(T,\+A) = \E{\sum_{t=1}^T \mu_{\a^*} - \mu_{A_t}}$. 
An algorithm is allowed to have its own randomness. We use $\Pr[H]{\cdot}$ or $\E[H]{\cdot}$ to denote the probability or expectation when the input instance is $H$, where the randomness comes from both the input instance and the algorithm itself.

\paragraph{Exploration and exploitation phase} We introduce terminologies to describe the behavior of an algorithm for streaming stochastic MAB on a specific instance. We divide the $T$ rounds of the game into two phases, the \emph{exploration phase} and \emph{exploitation phase}. The rounds \emph{before} the arrival of the last arm in the last pass constitute the phase of exploration, and the remaining rounds form the phase of exploitation.

We further classify the rounds in the exploration phase into $P$ groups based on which pass they belong to. The rounds of explorations in pass $p$ begins from the arrival of the last arm in pass $p-1$ (or from the start of the game if $p=1$) and ends once the last arm of pass $p$ is read into memory. 
We will use a random variable $L_p$ to denote the number of rounds in pass $p$ for any $p\in [P]$. As a result, without lose of generality, any algorithm for streaming MAB on an instance behaves in the following way:
\begin{itemize}
    \item For each $p=1,2,\dots,P$, explore $L_p$ rounds.
    \item For the remaining $T-\sum_{p\in [P]} L_p$ rounds, play arms in the memory (without interacting with the stream since there is no arm left).
\end{itemize}
The process is illustrated in Figure~\ref{fig:algorithm} in which each interval represents a round.
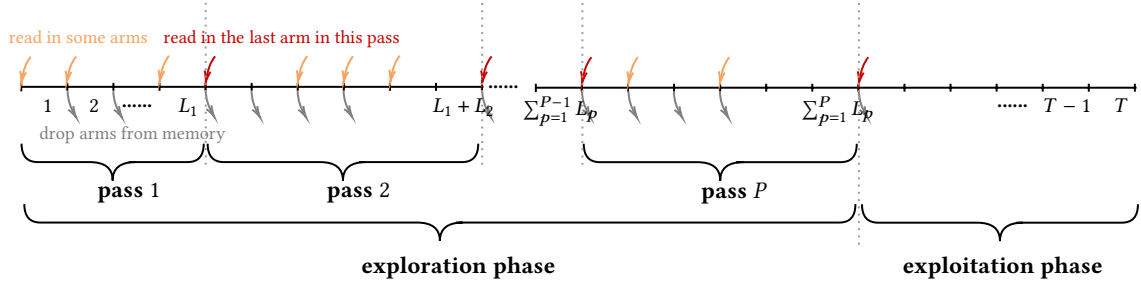
\begin{figure}\label{fig:algorithm}
    \centering
\tikzset{every picture/.style={line width=0.75pt}} 

\begin{tikzpicture}[x=0.83pt,y=0.83pt,yscale=-1.2,xscale=1.05]

\draw  (24.67,139.67) -- (225.25,139.67) (44.67,138.17) -- (44.67,141.17)(64.67,138.17) -- (64.67,141.17)(84.67,138.17) -- (84.67,141.17)(104.67,138.17) -- (104.67,141.17)(124.67,138.17) -- (124.67,141.17)(144.67,138.17) -- (144.67,141.17)(164.67,138.17) -- (164.67,141.17)(184.67,138.17) -- (184.67,141.17)(204.67,138.17) -- (204.67,141.17)(224.67,138.17) -- (224.67,141.17) ;
\draw    (248,140) -- (508.74,140) (268,138.5) -- (268,141.5)(288,138.5) -- (288,141.5)(308,138.5) -- (308,141.5)(328,138.5) -- (328,141.5)(348,138.5) -- (348,141.5)(368,138.5) -- (368,141.5)(388,138.5) -- (388,141.5)(408,138.5) -- (408,141.5)(428,138.5) -- (428,141.5)(448,138.5) -- (448,141.5)(468,138.5) -- (468,141.5)(488,138.5) -- (488,141.5)(508,138.5) -- (508,141.5) ;
\draw    (248,138.5) -- (248,141.67) ;
\draw    (24.67,138.2) -- (24.67,141.37) ;
\draw [color={rgb, 255:red, 244; green, 164; blue, 96 }  ,draw opacity=1 ]   (28.75,128.27) .. controls (26.52,130.8) and (25.69,133.28) .. (25.06,136.25) ;
\draw [shift={(24.67,138.2)}, rotate = 280.8] [color={rgb, 255:red, 244; green, 164; blue, 96 }  ,draw opacity=1 ][line width=0.75]    (4.37,-1.32) .. controls (2.78,-0.56) and (1.32,-0.12) .. (0,0) .. controls (1.32,0.12) and (2.78,0.56) .. (4.37,1.32)   ;
\draw [color={rgb, 255:red, 201; green, 0; blue, 0}  ,draw opacity=1 ]   (108.8,128.07) .. controls (106.57,130.59) and (105.75,133.08) .. (105.11,136.04) ;
\draw [shift={(104.72,137.99)}, rotate = 280.8] [color={rgb, 255:red, 201; green, 0; blue, 0}  ,draw opacity=1 ][line width=0.75]    (4.37,-1.32) .. controls (2.78,-0.56) and (1.32,-0.12) .. (0,0) .. controls (1.32,0.12) and (2.78,0.56) .. (4.37,1.32)   ;
\draw   (24.8,161.27) .. controls (24.8,165.94) and (27.13,168.27) .. (31.8,168.27) -- (54.33,168.27) .. controls (61,168.27) and (64.33,170.6) .. (64.33,175.27) .. controls (64.33,170.6) and (67.66,168.27) .. (74.33,168.27)(71.33,168.27) -- (96.86,168.27) .. controls (101.53,168.27) and (103.86,165.94) .. (103.86,161.27) ;
\draw [color={rgb, 255:red, 244; green, 164; blue, 96 }  ,draw opacity=1 ]   (48.45,128.17) .. controls (46.22,130.69) and (45.4,133.18) .. (44.76,136.14) ;
\draw [shift={(44.37,138.09)}, rotate = 280.8] [color={rgb, 255:red, 244; green, 164; blue, 96 }  ,draw opacity=1 ][line width=0.75]    (4.37,-1.32) .. controls (2.78,-0.56) and (1.32,-0.12) .. (0,0) .. controls (1.32,0.12) and (2.78,0.56) .. (4.37,1.32)   ;
\draw [color={rgb, 255:red, 128; green, 128; blue, 128 }  ,draw opacity=1 ]   (44.79,141.34) .. controls (44.91,144.82) and (45.97,147.16) .. (47.48,149.66) ;
\draw [shift={(48.53,151.35)}, rotate = 237.5] [color={rgb, 255:red, 128; green, 128; blue, 128 }  ,draw opacity=1 ][line width=0.75]    (4.37,-1.32) .. controls (2.78,-0.56) and (1.32,-0.12) .. (0,0) .. controls (1.32,0.12) and (2.78,0.56) .. (4.37,1.32)   ;
\draw [color={rgb, 255:red, 128; green, 128; blue, 128 }  ,draw opacity=1 ]   (104.73,141.11) .. controls (104.85,144.59) and (105.91,146.93) .. (107.42,149.43) ;
\draw [shift={(108.47,151.12)}, rotate = 237.5] [color={rgb, 255:red, 128; green, 128; blue, 128 }  ,draw opacity=1 ][line width=0.75]    (4.37,-1.32) .. controls (2.78,-0.56) and (1.32,-0.12) .. (0,0) .. controls (1.32,0.12) and (2.78,0.56) .. (4.37,1.32)   ;
\draw [color={rgb, 255:red, 128; green, 128; blue, 128 }  ,draw opacity=1 ]   (64.62,141.34) .. controls (64.75,144.82) and (65.81,147.16) .. (67.31,149.66) ;
\draw [shift={(68.36,151.35)}, rotate = 237.5] [color={rgb, 255:red, 128; green, 128; blue, 128 }  ,draw opacity=1 ][line width=0.75]    (4.37,-1.32) .. controls (2.78,-0.56) and (1.32,-0.12) .. (0,0) .. controls (1.32,0.12) and (2.78,0.56) .. (4.37,1.32)   ;
\draw [color={rgb, 255:red, 244; green, 164; blue, 96 }  ,draw opacity=1 ]   (88.79,127.83) .. controls (86.56,130.36) and (85.73,132.84) .. (85.1,135.8) ;
\draw [shift={(84.71,137.76)}, rotate = 280.8] [color={rgb, 255:red, 244; green, 164; blue, 96 }  ,draw opacity=1 ][line width=0.75]    (4.37,-1.32) .. controls (2.78,-0.56) and (1.32,-0.12) .. (0,0) .. controls (1.32,0.12) and (2.78,0.56) .. (4.37,1.32)   ;
\draw   (105.57,161.27) .. controls (105.57,165.94) and (107.9,168.27) .. (112.57,168.27) -- (154.29,168.27) .. controls (160.96,168.27) and (164.29,170.6) .. (164.29,175.27) .. controls (164.29,170.6) and (167.62,168.27) .. (174.29,168.27)(171.29,168.27) -- (216,168.27) .. controls (220.67,168.27) and (223,165.94) .. (223,161.27) ;
\draw [color={rgb, 255:red, 201; green, 0; blue, 0}  ,draw opacity=1 ]   (228.8,128.07) .. controls (226.57,130.59) and (225.75,133.08) .. (225.11,136.04) ;
\draw [shift={(224.72,137.99)}, rotate = 280.8] [color={rgb, 255:red, 201; green, 0; blue, 0}  ,draw opacity=1 ][line width=0.75]    (4.37,-1.32) .. controls (2.78,-0.56) and (1.32,-0.12) .. (0,0) .. controls (1.32,0.12) and (2.78,0.56) .. (4.37,1.32)   ;
\draw [color={rgb, 255:red, 128; green, 128; blue, 128 }  ,draw opacity=1 ]   (224.73,141.11) .. controls (224.85,144.59) and (225.91,146.93) .. (227.42,149.43) ;
\draw [shift={(228.47,151.12)}, rotate = 237.5] [color={rgb, 255:red, 128; green, 128; blue, 128 }  ,draw opacity=1 ][line width=0.75]    (4.37,-1.32) .. controls (2.78,-0.56) and (1.32,-0.12) .. (0,0) .. controls (1.32,0.12) and (2.78,0.56) .. (4.37,1.32)   ;
\draw [color={rgb, 255:red, 244; green, 164; blue, 96 }  ,draw opacity=1 ]   (168.64,128.07) .. controls (166.41,130.59) and (165.58,133.08) .. (164.95,136.04) ;
\draw [shift={(164.56,137.99)}, rotate = 280.8] [color={rgb, 255:red, 244; green, 164; blue, 96 }  ,draw opacity=1 ][line width=0.75]    (4.37,-1.32) .. controls (2.78,-0.56) and (1.32,-0.12) .. (0,0) .. controls (1.32,0.12) and (2.78,0.56) .. (4.37,1.32)   ;
\draw [color={rgb, 255:red, 128; green, 128; blue, 128 }  ,draw opacity=1 ]   (164.56,141.11) .. controls (164.68,144.59) and (165.74,146.93) .. (167.25,149.43) ;
\draw [shift={(168.3,151.12)}, rotate = 237.5] [color={rgb, 255:red, 128; green, 128; blue, 128 }  ,draw opacity=1 ][line width=0.75]    (4.37,-1.32) .. controls (2.78,-0.56) and (1.32,-0.12) .. (0,0) .. controls (1.32,0.12) and (2.78,0.56) .. (4.37,1.32)   ;
\draw [color={rgb, 255:red, 128; green, 128; blue, 128 }  ,draw opacity=1 ]   (124.62,141.34) .. controls (124.75,144.82) and (125.81,147.16) .. (127.31,149.66) ;
\draw [shift={(128.36,151.35)}, rotate = 237.5] [color={rgb, 255:red, 128; green, 128; blue, 128 }  ,draw opacity=1 ][line width=0.75]    (4.37,-1.32) .. controls (2.78,-0.56) and (1.32,-0.12) .. (0,0) .. controls (1.32,0.12) and (2.78,0.56) .. (4.37,1.32)   ;
\draw [color={rgb, 255:red, 244; green, 164; blue, 96 }  ,draw opacity=1 ]   (188.95,127.83) .. controls (186.72,130.36) and (185.9,132.84) .. (185.26,135.8) ;
\draw [shift={(184.87,137.76)}, rotate = 280.8] [color={rgb, 255:red, 244; green, 164; blue, 96 }  ,draw opacity=1 ][line width=0.75]    (4.37,-1.32) .. controls (2.78,-0.56) and (1.32,-0.12) .. (0,0) .. controls (1.32,0.12) and (2.78,0.56) .. (4.37,1.32)   ;
\draw [color={rgb, 255:red, 128; green, 128; blue, 128 }  ,draw opacity=1 ]   (144.62,141.34) .. controls (144.75,144.82) and (145.81,147.16) .. (147.31,149.66) ;
\draw [shift={(148.36,151.35)}, rotate = 237.5] [color={rgb, 255:red, 128; green, 128; blue, 128 }  ,draw opacity=1 ][line width=0.75]    (4.37,-1.32) .. controls (2.78,-0.56) and (1.32,-0.12) .. (0,0) .. controls (1.32,0.12) and (2.78,0.56) .. (4.37,1.32)   ;
\draw [color={rgb, 255:red, 244; green, 164; blue, 96 }  ,draw opacity=1 ]   (148.79,128) .. controls (146.56,130.52) and (145.73,133.01) .. (145.1,135.97) ;
\draw [shift={(144.71,137.92)}, rotate = 280.8] [color={rgb, 255:red, 244; green, 164; blue, 96 }  ,draw opacity=1 ][line width=0.75]    (4.37,-1.32) .. controls (2.78,-0.56) and (1.32,-0.12) .. (0,0) .. controls (1.32,0.12) and (2.78,0.56) .. (4.37,1.32)   ;
\draw   (268.8,161.3) .. controls (268.8,165.97) and (271.13,168.3) .. (275.8,168.3) -- (318.04,168.3) .. controls (324.71,168.3) and (328.04,170.63) .. (328.04,175.3) .. controls (328.04,170.63) and (331.37,168.3) .. (338.04,168.3)(335.04,168.3) -- (380.29,168.3) .. controls (384.96,168.3) and (387.29,165.97) .. (387.29,161.3) ;
\draw [color={rgb, 255:red, 201; green, 0; blue, 0 }  ,draw opacity=1 ]   (272.14,128.4) .. controls (269.91,130.92) and (269.08,133.41) .. (268.45,136.37) ;
\draw [shift={(268.06,138.32)}, rotate = 280.8] [color={rgb, 255:red, 201; green, 0; blue, 0}  ,draw opacity=1 ][line width=0.75]    (4.37,-1.32) .. controls (2.78,-0.56) and (1.32,-0.12) .. (0,0) .. controls (1.32,0.12) and (2.78,0.56) .. (4.37,1.32)   ;
\draw [color={rgb, 255:red, 128; green, 128; blue, 128 }  ,draw opacity=1 ]   (268.06,141.44) .. controls (268.18,144.92) and (269.24,147.26) .. (270.75,149.77) ;
\draw [shift={(271.8,151.45)}, rotate = 237.5] [color={rgb, 255:red, 128; green, 128; blue, 128 }  ,draw opacity=1 ][line width=0.75]    (4.37,-1.32) .. controls (2.78,-0.56) and (1.32,-0.12) .. (0,0) .. controls (1.32,0.12) and (2.78,0.56) .. (4.37,1.32)   ;
\draw [color={rgb, 255:red, 201; green, 0; blue, 0}  ,draw opacity=1 ]   (392.3,128.4) .. controls (390.07,130.92) and (389.25,133.41) .. (388.61,136.37) ;
\draw [shift={(388.22,138.32)}, rotate = 280.8] [color={rgb, 255:red, 201; green, 0; blue, 0}  ,draw opacity=1 ][line width=0.75]    (4.37,-1.32) .. controls (2.78,-0.56) and (1.32,-0.12) .. (0,0) .. controls (1.32,0.12) and (2.78,0.56) .. (4.37,1.32)   ;
\draw [color={rgb, 255:red, 128; green, 128; blue, 128 }  ,draw opacity=1 ]   (388.23,141.44) .. controls (388.35,144.92) and (389.41,147.26) .. (390.92,149.77) ;
\draw [shift={(391.97,151.45)}, rotate = 237.5] [color={rgb, 255:red, 128; green, 128; blue, 128 }  ,draw opacity=1 ][line width=0.75]    (4.37,-1.32) .. controls (2.78,-0.56) and (1.32,-0.12) .. (0,0) .. controls (1.32,0.12) and (2.78,0.56) .. (4.37,1.32)   ;
\draw [color={rgb, 255:red, 244; green, 164; blue, 96 }  ,draw opacity=1 ]   (292.3,128.4) .. controls (290.07,130.92) and (289.25,133.41) .. (288.61,136.37) ;
\draw [shift={(288.22,138.32)}, rotate = 280.8] [color={rgb, 255:red, 244; green, 164; blue, 96 }  ,draw opacity=1 ][line width=0.75]    (4.37,-1.32) .. controls (2.78,-0.56) and (1.32,-0.12) .. (0,0) .. controls (1.32,0.12) and (2.78,0.56) .. (4.37,1.32)   ;
\draw [color={rgb, 255:red, 128; green, 128; blue, 128 }  ,draw opacity=1 ]   (288.23,141.44) .. controls (288.35,144.92) and (289.41,147.26) .. (290.92,149.77) ;
\draw [shift={(291.97,151.45)}, rotate = 237.5] [color={rgb, 255:red, 128; green, 128; blue, 128 }  ,draw opacity=1 ][line width=0.75]    (4.37,-1.32) .. controls (2.78,-0.56) and (1.32,-0.12) .. (0,0) .. controls (1.32,0.12) and (2.78,0.56) .. (4.37,1.32)   ;
\draw [color={rgb, 255:red, 128; green, 128; blue, 128 }  ,draw opacity=1 ]   (307.89,141.44) .. controls (308.02,144.92) and (309.08,147.26) .. (310.58,149.77) ;
\draw [shift={(311.63,151.45)}, rotate = 237.5] [color={rgb, 255:red, 128; green, 128; blue, 128 }  ,draw opacity=1 ][line width=0.75]    (4.37,-1.32) .. controls (2.78,-0.56) and (1.32,-0.12) .. (0,0) .. controls (1.32,0.12) and (2.78,0.56) .. (4.37,1.32)   ;
\draw [color={rgb, 255:red, 244; green, 164; blue, 96 }  ,draw opacity=1 ]   (331.97,128.4) .. controls (329.74,130.92) and (328.91,133.41) .. (328.28,136.37) ;
\draw [shift={(327.89,138.32)}, rotate = 280.8] [color={rgb, 255:red, 244; green, 164; blue, 96 }  ,draw opacity=1 ][line width=0.75]    (4.37,-1.32) .. controls (2.78,-0.56) and (1.32,-0.12) .. (0,0) .. controls (1.32,0.12) and (2.78,0.56) .. (4.37,1.32)   ;
\draw [color={rgb, 255:red, 128; green, 128; blue, 128 }  ,draw opacity=1 ]   (327.89,141.44) .. controls (328.02,144.92) and (329.08,147.26) .. (330.58,149.77) ;
\draw [shift={(331.63,151.45)}, rotate = 237.5] [color={rgb, 255:red, 128; green, 128; blue, 128 }  ,draw opacity=1 ][line width=0.75]    (4.37,-1.32) .. controls (2.78,-0.56) and (1.32,-0.12) .. (0,0) .. controls (1.32,0.12) and (2.78,0.56) .. (4.37,1.32)   ;
\draw   (26,184.25) .. controls (26,188.92) and (28.33,191.25) .. (33,191.25) -- (196.36,191.25) .. controls (203.03,191.25) and (206.36,193.58) .. (206.36,198.25) .. controls (206.36,193.58) and (209.69,191.25) .. (216.36,191.25)(213.36,191.25) -- (379.71,191.25) .. controls (384.38,191.25) and (386.71,188.92) .. (386.71,184.25) ;
\draw   (389.57,184.3) .. controls (389.57,188.97) and (391.9,191.3) .. (396.57,191.3) -- (439.45,191.3) .. controls (446.12,191.3) and (449.45,193.63) .. (449.45,198.3) .. controls (449.45,193.63) and (452.78,191.3) .. (459.45,191.3)(456.45,191.3) -- (502.33,191.3) .. controls (507,191.3) and (509.33,188.97) .. (509.33,184.3) ;
\draw [color={rgb, 255:red, 155; green, 155; blue, 155 }  ,draw opacity=1,dotted ]   (104.72,107.75) -- (104.72,134.44) -- (104.72,148.44) -- (104.72,170.75) ;
\draw [color={rgb, 255:red, 155; green, 155; blue, 155 }  ,draw opacity=1,dotted ]   (224.72,107.8) -- (224.72,170.8) ;
\draw [color={rgb, 255:red, 155; green, 155; blue, 155 }  ,draw opacity=1,dotted ]   (268.22,107.8) -- (268.22,170.8) ;
\draw [color={rgb, 255:red, 155; green, 155; blue, 155 }  ,draw opacity=1,dotted ]   (388.22,107.8) -- (388.22,200.75) ;

\draw (32.7,143) [font=\bfseries] node [anchor=north west][inner sep=0.75pt]   [align=left] {{\footnotesize $1$}};
\draw (52.8,143) [font=\bfseries] node [anchor=north west][inner sep=0.75pt]   [align=left] {{\footnotesize$ 2$}};
\draw (67,145) [font=\bfseries] node [anchor=north west][inner sep=0.75pt]   [align=left] {{\footnotesize ......}};
\draw (497.9,143) [font=\bfseries] node [anchor=north west][inner sep=0.75pt]   [align=left] {{\footnotesize $ T$}};
\draw (467,143) [font=\bfseries] node [anchor=north west][inner sep=0.75pt]   [align=left] {{\footnotesize ${T-1}$}};
\draw (447,145) [font=\bfseries] node [anchor=north west][inner sep=0.75pt]   [align=left] {{\footnotesize ......}};
\draw (18,117) node [anchor=north west][inner sep=0.75pt]   [align=left] {{\scriptsize \textcolor[rgb]{0.96,0.64,0.38}{read in some arms}}};
\draw (226,138) [font=\bfseries] node [anchor=north west][inner sep=0.75pt]   [align=left] {{\footnotesize ......}};
\draw (56.4,175) [font=\bfseries] node [anchor=north west][inner sep=0.75pt]   [align=left] {{\small pass $1$}};
\draw (91.6,143) [font=\bfseries] node [anchor=north west][inner sep=0.75pt]   [align=left] {{\footnotesize ${L_1}$}};
\draw (85.06,117) node [anchor=north west][inner sep=0.75pt]   [align=left] {{\scriptsize \textcolor[RGB]{201,0,0}{read in the last arm in this pass }}};
\draw (156.07,175) [font=\bfseries] node [anchor=north west][inner sep=0.75pt]   [align=left] {{\small pass $2$}};
\draw (202,143) [font=\bfseries] node [anchor=north west][inner sep=0.75pt]   [align=left] {{\footnotesize ${L_1+L_2}$}};
\draw (318.54,175) [font=\bfseries] node [anchor=north west][inner sep=0.75pt]   [align=left] {{\small pass $P$}};
\draw (171,203) [font=\bfseries] node [anchor=north west][inner sep=0.75pt]   [align=left] {{\small exploration phase}};
\draw (241,142) [font=\bfseries] node [anchor=north west][inner sep=0.75pt]   [align=left] {{\footnotesize ${\sum_{p=1}^{P-1} L_p}$}};
\draw (362,142) [font=\bfseries] node [anchor=north west][inner sep=0.75pt]   [align=left] {{\footnotesize ${\sum_{p=1}^P L_p}$}};
\draw (406,203) [font=\bfseries] node [anchor=north west][inner sep=0.75pt]   [align=left] {{\small exploitation phase}};
\draw (31.31,153.5) node [anchor=north west][inner sep=0.75pt]   [align=left] {{\scriptsize \textcolor[rgb]{0.5,0.5,0.5}{drop arms from memory}}};
\end{tikzpicture}

\caption{Each cell in the diagram represents one round, with inward arrows $\orangearrow$, $\redarrow$ and outward arrows $\grayarrow$ denoting reading some arms from the stream and dropping some from the memory respectively. The symbol $\redarrow$ indicates the algorithm reading in the last arm of this pass, signifying the end of this pass. Any multi-pass streaming algorithm can be formalized as this figure described: it decomposes into an exploration phase and an exploitation phase, with each pass in the exploration phase consuming $L_p$ rounds for some (possibly random) $L_p$.}
\end{figure}

\paragraph{More remarks on the memory model} We consider the general memory size $2\leq m<n$. In a less restrictive memory model, where we are not limited to storing the information related to only the arms in memory, it is possible to achieve sublinear regret through explore-then-commit strategies with multiple passes using a single memory slot. However, in the standard setting where oblivion is compulsory, one memory slot must be reserved to pull the arriving arm and another memory slot must be used to store a good arm encountered so far. Otherwise, it is easy to see a regret lower bound linear in $T$ for both multi-pass and single-pass case when $m=1$. Hence, we only focus on the situations $m\geq 2$. 

It is worth noting that our \emph{lower bounds} actually apply to that less restrictive memory model. In other words, our lower bound holds even if the algorithm is allowed to store any other information, as long as it satisfies the requirement of storing at most $m$ arms. 
This indicates that, when $2\leq m<n$, the main challenge posed by memory constraints is the limitation on the number of arms that can be stored and can be pulled by the algorithm at each round, rather than the restriction on storing additional statistics.

A recent line of work studies the memory-regret trade-off for the learning with expert advice problem (\cite{SWXZ22,PZ23,PR23}). Their model differs from ours in three aspects: (1) unlike the stochastic reward in our model, their feedback is adversarial in nature; (2) the entire reward vector is revealed to the player in each round; (3) the player has the freedom to  pick any arm in each round, which corresponds to the case that $P=T$ in our model. An interesting problem is to combine these two models, considering the adversarial bandit feedback under the streaming model with limited number of passes. We leave this as an open problem for further investigation.

\subsubsection{Best arm identification and best arm retention} 
\paragraph{Best arm identification} There is a \emph{pure exploration} version of MAB, called \emph{best arm identification} (BAI). In the BAI problem, our objective is to find the arm with the maximum reward with high probability. In each round, the player can either choose an arm to observe its reward, or terminate the game and output an arm index.
For $\eps\in(0,1)$, an arm is $\eps$-optimal or $\eps$-best if its mean reward $\mu> \mu_{\a^*} - \eps$. For any fixed $\eps,\delta\in(0,1)$, we say an algorithm is $(\eps,\delta)$-probably approximately correct, or $(\eps,\delta)$-PAC for short, if it can identify an $\eps$-optimal arm with probability at least $1-\delta$. The sample complexity of the algorithm is defined as the number of rounds before its termination.

\paragraph{Best arm retention} We also define another pure exploration version of streaming MAB, named as \emph{best arm retention} (BAR). In BAR, the algorithm needs to output a set of arms of size $m<n$. Our objective is to retain the best arm in the output set with high probability. Unlike BAI, the key point of this problem is not to identify the best arm but only to ensure that the arms discarded are all suboptimal. In this work, we mainly care about the expected mean reward gap between the optimal arm in the output set and $\a^*$.

This problem was also mentioned in~\cite{AW22} as the \emph{trapping the best arm}. We will demonstrate the close relationship between BAR and streaming MAB, as well as algorithms for BAR. One can verify that the algorithm in \Cref{sec:ub-l-warmup} is actually a PAC algorithm for BAR and the sample complexity is tight (up to logarithm factors) with certain parameters.

\paragraph{Streaming BAI and streaming BAR} The BAI and BAR problem can also be considered in the streaming setting as well. Streaming BAI and BAR are pure exploration version of streaming MAB. The $n$ arms come in a stream and the algorithm can only store $m<n$ arms. The memory model of streaming BAI and BAR is the same with that described in \Cref{sec:sMABprelim}.



\subsection{Concentration inequalities}
We will use the following generalization of Hoeffding's inequality. The proof of this lemma is provided in \Cref{subsec: pf-lb-concentration}.
\begin{lemma}\label{lem:lb-concentration}
    Let $X_1,\dots,X_N$ be $N$ independent random variables defined on a common probability space and taking values in $[a,b]$. Assume $\E{X_t}=0$ for any $t\in[N]$. Then for any $s>0$,
    \[
        \Pr{\max_{1\leq t\leq N} \abs{\sum_{j=1}^t X_j} \geq s}\leq 2\exp{-\frac{2s^2}{N(b-a)^2}}.
    \]
\end{lemma}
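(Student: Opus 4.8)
The plan is to establish the maximal inequality in Lemma~\ref{lem:lb-concentration} by combining the classical Hoeffding bound for a \emph{fixed} partial sum with a maximal-inequality device that upgrades the tail bound for $\sum_{j=1}^N X_j$ to a tail bound for $\max_{1\le t\le N}\abs{\sum_{j=1}^t X_j}$. Concretely, write $S_t=\sum_{j=1}^t X_j$, so that $(S_t)_{t=0}^N$ is a martingale with $S_0=0$ with respect to the natural filtration $\+F_t=\sigma(X_1,\dots,X_t)$, since each $X_j$ has mean zero and the $X_j$ are independent. The strategy is to apply the reflection/optional-stopping argument that underlies Kolmogorov-type maximal inequalities, but at the level of the exponential moment generating function rather than the second moment, because the $[a,b]$ boundedness gives subgaussian increments via Hoeffding's lemma.

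First I would recall Hoeffding's lemma: for a mean-zero random variable $Y\in[a,b]$ and any $\lambda\in\bb R$, $\E{e^{\lambda Y}}\le e^{\lambda^2(b-a)^2/8}$. Hence for independent mean-zero $X_1,\dots,X_t$ taking values in $[a,b]$, $\E{e^{\lambda S_t}}\le e^{\lambda^2 t(b-a)^2/8}\le e^{\lambda^2 N(b-a)^2/8}$. Next, for fixed $\lambda>0$, the process $M_t^{(\lambda)}\defeq e^{\lambda S_t}$ is a nonnegative submartingale (it is the convex function $x\mapsto e^{\lambda x}$ applied to a martingale). By Doob's maximal inequality for submartingales, for any $u>0$,
\[
    \Pr{\max_{0\le t\le N} M_t^{(\lambda)}\ge u}\le \frac{\E{M_N^{(\lambda)}}}{u}\le \frac{e^{\lambda^2 N(b-a)^2/8}}{u}.
\]
Taking $u=e^{\lambda s}$ this gives $\Pr{\max_{0\le t\le N} S_t\ge s}\le e^{\lambda^2 N(b-a)^2/8-\lambda s}$, and optimizing over $\lambda>0$ (choosing $\lambda=4s/(N(b-a)^2)$) yields $\Pr{\max_{0\le t\le N} S_t\ge s}\le \exp\!\tp{-\frac{2s^2}{N(b-a)^2}}$. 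Applying the identical argument to $-X_1,\dots,-X_N$ (which also are mean-zero and take values in an interval of the same length $b-a$) controls $\max_t(-S_t)\ge s$ with the same bound, and a union bound over the two events gives
\[
    \Pr{\max_{1\le t\le N}\abs{S_t}\ge s}\le 2\exp\!\tp{-\frac{2s^2}{N(b-a)^2}},
\]
as claimed.

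I do not expect a serious obstacle here; the only points requiring a little care are (i) citing Doob's submartingale maximal inequality correctly (the version $\Pr{\max_{t\le N}Z_t\ge u}\le \E{Z_N}/u$ for a nonnegative submartingale $(Z_t)$), and (ii) making sure the optimization over $\lambda$ is the unconstrained quadratic minimization so that the constant $2$ in the exponent (rather than a weaker constant) comes out — this is why using the $(b-a)^2/8$ form of Hoeffding's lemma rather than a cruder bound is important. An alternative, if one wants to avoid invoking Doob's inequality as a black box, is to prove the maximal bound directly by a first-passage/stopping-time decomposition: let $\tau$ be the first time $S_t\ge s$, partition $\set{\max_t S_t\ge s}$ according to the value of $\tau$, and on each piece use that $e^{\lambda S_\tau}\ge e^{\lambda s}$ together with $\E{e^{\lambda S_N}\mid \+F_\tau}\ge e^{\lambda S_\tau}$ (submartingale property) to sum up; this reproduces the same bound and is essentially the proof of Doob's inequality specialized to this setting. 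Either route is routine, so the whole proof is short.
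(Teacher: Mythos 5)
Your proposal is correct and follows essentially the same route as the paper's proof: exponentiate the partial sums, observe that $e^{\lambda S_t}$ is a nonnegative submartingale, apply Doob's maximal inequality together with Hoeffding's lemma for the moment generating function, optimize the free parameter at $\lambda=4s/(N(b-a)^2)$, and conclude with a union bound over the two one-sided events. No gaps.
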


\section{Upper Bound Analysis for the $m=n-1$ Case}\label{sec:ub-l-simple}

In this section, we present our algorithm and its analysis when $m=n-1$. This is a special case of our more general algorithm in \Cref{sec:ub-l}. However, this case already showcases our main idea, while both the algorithm and its analysis are much simpler than in the general case.

As mentioned in \Cref{sec:intro-proof}, in each pass, our algorithm is facing a BAR problem, namely to pick an arm to drop in this pass. Our key observation is that the following quantity turns out to be a correct measure for the quality of the BAR solution in our setting: \emph{the expected mean reward of the optimal arm in the memory at the end of each pass}.

Hence, we claim that a good algorithm should guarantee the following two properties in each pass $p\in [P]$:
\begin{itemize}
    \item For $p>1$, the regret caused by exploration in pass $p$ should be small compared to $\k_{p-1}$, where $\k_{p-1}$ is the best arm in memory at the end of the previous pass;
    \item $\E{\mu_{\a^*} - \mu_{\k_p}}$ is decreasing in $p$, where $\a^*$ is the optimal arm (among all $n$ arms) and $\k_p$ is the best arm in memory at the end of pass $p$.
\end{itemize}
In fact, the quantity $\E{\mu_{\a^*} - \mu_{\k_p}}$ implicitly serve as a potential function to measure the progress of our algorithm in the analysis. 

To fulfill the first requirement, we only need to identify $\k_{p-1}$ (or an arm comparable to $\k_{p-1}$) and discard those arms that perform significantly worse than $\k_{p-1}$ as early as possible during subsequent sampling. For the second requirement, we can reduce the problem to a version of the BAR problem for which we design an optimal algorithm.



To highlight our main idea, we further simplify the streaming model and strengthen the player by allowing the algorithm to remember the identity of the dropped arms. This capability is not essential, as we will circumvent it using a trick in the general setting. We also assume that $n=o(\sqrt{T})$ and $n, T$ are sufficiently large to simplify calculations and avoid the tedium of border cases. 

\subsection{The algorithm}\label{subsec:ub-l-simple-algo}
Let $\a^*$ denote the optimal arm (the one with the largest empirical mean) among all $n$ arms. For any arm with name $\!a$, let $\mu_{\!a}$ be its mean reward. Let $\+M_p$ be the set of arms in memory at the beginning of pass $p$ for any $p\in [P]$. Let $\+M_{P+1}$ denote the arms in memory in the exploitation phase. 

\paragraph{Overview of the algorithm} Our main task in each pass is to solve a BAR problem while reducing regret. At the beginning of the first pass, we first read in $n-1$ arms. Choose $2$ among them arbitrarily and run a $\textsc{FindBest}$ subroutine on the two arms to identify the better one, denoted as $\a'_1$. Then we drop the other arm to incorporate the last upcoming arm in stream.

For subsequent passes $p$, the memory is full at the beginning of the pass. We first perform a shorter $\textsc{FindBest}$ subroutine on all the arms in memory to find a good arm $\a'_{p,1}$. Then we choose two other arms together with $\a'_{p,1}$ to perform a longer $\textsc{FindBest}$ subroutine. Note that there is only one unvisited arm in stream in this pass. We then drop the worse arm in the second $\textsc{FindBest}$ subroutine and read in the remaining arm.

\paragraph{The description of the algorithm} Let $\lambda_p = \frac{2^{P-p+1}-1}{2^{P+1}-1}$ for any $p\in [P]$. Set the following parameters: 
\begin{itemize}
    \item the sample times in the first pass $L_1 = \ceil{2^{-2P}\cdot n^{-2\lambda_1}T^{1-\lambda_1}}$;
    \item for any $p\in[P]\setminus\set{1}$, 
          \begin{itemize}
            \item[$\circ$] the sample times of the second $\textsc{FindBest}$ in the $p$-th pass $L_{p,2}= \ceil{2^{-2P+2p-2}\cdot n^{-2\lambda_p}T^{1-\lambda_p}}$;
            \item[$\circ$] the sample times of the first $\textsc{FindBest}$ in the $p$-th pass $L_{p,1}= n^3\cdot L_{p-1,2}$ for $p>2$ and $L_{p,1}=n^3\cdot L_{1}$ for $p=2$.
          \end{itemize}  
\end{itemize}
The choice of $L_{p,1}$ is to ensure that, the first $\textsc{FindBest}$ is sufficient to figure out the $\k_{p-1}$ left from the previous pass. The second $\textsc{FindBest}$, the sample times of which constitutes the main portion of pass $p$, determines how well the BAR problem is solved. The larger $L_{p,2}$ is, the smaller mean reward gap between $\k_p$ and $\a^*$ we can get. In our choice, as $p$ increases, $L_{p,2}$ becomes larger as well. 
This indicates the strategy of our algorithm: explore prudently when we have little information and intensify the exploration as information accumulates (or equivalently, pass number increases).
For example, when $P=3$, we assign $L_{1}=\Theta\tp{n^{-\frac{14}{15}} T^{\frac{8}{15}}}$, $L_{2,2}=\Theta\tp{n^{-\frac{6}{15}}T^{\frac{12}{15}}}$ and $L_{3,2}=\Theta\tp{n^{-\frac{2}{15}} T^{\frac{14}{15}}}$, and expect the exploration in pass $3$ to retain better arms compared to pass $2$ and exploration in pass $2$ to retain better arms than pass $1$.


The algorithm is given in \Cref{algo:large-m-simple}.  In \Cref{algo:large-m-simple}, the $\textsc{FindBest}$ subroutine allows us to identify the optimal arm with small regret meanwhile. This property of $\textsc{FindBest}$ subroutine will be frequently used in upper bound analysis for both large and small memory cases. The $\textsc{MirrorDescent}$ subroutine used here is treated as a black box. A detailed description for it is provided in \Cref{sec:OSMD-detail}. In fact, $\textsc{MirrorDescent}$ can be substituted with any other $n$-arm and $T$-round bandit algorithms achieving a regret of $O\tp{\sqrt{nT}}$. The power of the subroutine is proved in Lemma~\ref{lem:ub-bandit-subroutine-simple}.

\begin{algorithm}[h]
    \caption{A subroutine using online stochastic mirror descent to find a good arm}
    \label{algo:bandit-subroutine}
    \Input{a set of arms $\+S$ and the number of rounds $L$}\\
    \Output{an arm in $\+S$}
\begin{algorithmic}[1]
    \Procedure{\textsc{FindBest}}{$\+S,L$}
    \State Run \Call{MirrorDescent}{$\+S,L$} 
    \ForAll{$\a_i\in\+S$}
        \State $T_i\gets$ the number of times $\a_i$ has been sampled in the \Call{MirrorDescent}{} routine
    \EndFor
    \State Sample an arm $\a'\in \+S$ with law $\Pr{\a'=\mbox{the }i\mbox{-th arm}}=\frac{T_i}{L}$
    \State \Return $\a'$
    \EndProcedure
\end{algorithmic}
\end{algorithm}

\begin{algorithm}[ht]
    \caption{Multi-pass algorithm for MAB with large memory ($m=n-1$)}
    \label{algo:large-m-simple}
    \Input{number of passes $P$, time horizon $T$, memory size $m=n-1$}
\begin{algorithmic}[1]
    \For{$p=1,2,\dots, P$}
        \If{$p=1$}
            \State Read in the first $n-1$ arms\;
            \State Choose $2$ arms from $\+M_1$ uniformly at random and let $\+S_1$ be the set of the two arms\; \label{line:ub-l-simple-sample1}
            \State $\a'_{1}$ = \Call{FindBest}{$\+S_1,L_{1}$}\;
            \State Drop $\+S_1\setminus \set{\a'_{1}}$ and read in the remaining one arm in stream\;\label{line:ub-l-simple-read1}
        \Else
            \State $\a'_{p,1}$ = \Call{FindBest}{$\+M_p,L_{p,1}$}\;
            \State Choose two arms from $\+M_p\setminus \set{\a'_{p,1}}$ uniformly at random and let $\+S_p$ be the set of these two arms plus $\a'_{p,1}$\; \label{line:ub-l-simple-sample}
            \State $\a'_{p,2}$ = \Call{FindBest}{$\+S_p,L_{p,2}$}\;
            \State Choose one arm from $\+S_p\setminus\set{\a'_{p,1},\a'_{p,2}}$ to drop\;\label{line:ub-l-simple-drop}
            \State Read in the remaining one arm in stream\;\label{line:ub-l-simple-read2}
        \EndIf
    \EndFor
    \State Run $\textsc{MirrorDescent}$ on the remaining $n-1$ arms until the game ends\;
\end{algorithmic}
\end{algorithm}

\begin{lemma}\label{lem:ub-bandit-subroutine-simple}
    Let $\a^*_{\+S}$ be the optimal arm in $\+S$. What ever the input arms are, \Cref{algo:bandit-subroutine} guarantees that $\E{\mu_{\a^*_{\+S}} - \mu_{\a'}}\leq \sqrt{\frac{2\abs{\+S}}{L}}$.
\end{lemma}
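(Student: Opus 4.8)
The plan is to observe that the randomized output $\a'$ of \textsc{FindBest} has, in expectation, mean reward equal to the \emph{time-average} of the arms played by \textsc{MirrorDescent}, so that the quantity $\E{\mu_{\a^*_{\+S}} - \mu_{\a'}}$ is exactly $\frac1L$ times the regret of \textsc{MirrorDescent} over $L$ rounds on the arm set $\+S$, which is controlled by \Cref{prop:ub-l-OSMD}.

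Concretely, I would first condition on the entire trajectory of the \textsc{MirrorDescent} run (equivalently, on the counts $(T_i)_{\a_i \in \+S}$, which satisfy $\sum_i T_i = L$). Given this trajectory, the final sampling step draws $\a'$ with $\Pr{\a' = \a_i} = T_i/L$, hence
\[
  \E{\mu_{\a'} \;\middle|\; (T_i)_i} = \sum_{\a_i \in \+S} \frac{T_i}{L}\,\mu_i = \frac1L \sum_{t=1}^{L} \mu_{A_t},
\]
where $A_t$ denotes the arm pulled by \textsc{MirrorDescent} in its $t$-th round. Taking expectation over the randomness of \textsc{MirrorDescent} (and of the reward draws) gives $\E{\mu_{\a'}} = \frac1L \E{\sum_{t=1}^L \mu_{A_t}}$. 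Subtracting from $\mu_{\a^*_{\+S}}$ and using linearity,
\[
  \E{\mu_{\a^*_{\+S}} - \mu_{\a'}} = \frac1L\,\E{\sum_{t=1}^{L}\bigl(\mu_{\a^*_{\+S}} - \mu_{A_t}\bigr)} = \frac1L\, R\bigl(L, \textsc{MirrorDescent}\bigr),
\]
since $\a^*_{\+S}$ is by definition the best arm within $\+S$, which is precisely the benchmark arm in the regret of a bandit algorithm run on arm set $\+S$.

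Finally, I would invoke \Cref{prop:ub-l-OSMD} with arm set $\+S$ and horizon $L$ to get $R(L, \textsc{MirrorDescent}) \le \sqrt{2\abs{\+S} L}$, whence $\E{\mu_{\a^*_{\+S}} - \mu_{\a'}} \le \frac{1}{L}\sqrt{2\abs{\+S} L} = \sqrt{\tfrac{2\abs{\+S}}{L}}$. Note this bound is independent of the reward gaps, which is why it holds ``whatever the input arms are''.

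There is no real obstacle here; the only point requiring a little care is the bookkeeping in the first display — that $\a'$ is drawn \emph{conditionally} on the realized counts, so one must take the inner expectation over the output-sampling randomness first and only then average over the algorithm's trajectory — and the observation that the benchmark $\mu_{\a^*_{\+S}}$ matches the regret benchmark of \Cref{prop:ub-l-OSMD} exactly (rather than $\mu_{\a^*}$ over all $n$ arms, which would be a weaker statement).
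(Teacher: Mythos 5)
Your proposal is correct and follows essentially the same route as the paper: identify $\E{\mu_{\a^*_{\+S}}-\mu_{\a'}}$ with $\frac{1}{L}$ times the regret of \textsc{MirrorDescent} on $\+S$ via the counts $T_i$, then apply \Cref{prop:ub-l-OSMD}. Your version just makes the conditioning on the trajectory explicit, which the paper leaves implicit.
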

\begin{proof}
    We have
    \begin{align*}
        \E{\mu_{\a^*_{\+S}} - \mu_{\a'}} &= \E{\sum_{i\in \+S} \Pr{\a' = \a\ i}\cdot (\mu_{\a^*_{\+S}} - \mu_{i})} \\
        &\leq \frac{\E{\sum_{i\in \+S} T_i\cdot (\mu_{\a^*_{\+S}} - \mu_{i})}}{L}\leq \frac{\sqrt{2\abs{\+S}L}}{L}
    \end{align*}
    where the last inequality follows from Proposition~\ref{prop:ub-l-OSMD}.
\end{proof}

A point we should emphasize is that our algorithm executes the $\textsc{FindBest}$ subroutine twice in each pass (except for the first pass). The purpose of the first one is to identify the good arm left from the previous pass, $\a'_{p,1}$. By including $\a'_{p,1}$ in $S_p$, we can ensure that the cumulative regret in the second $\textsc{FindBest}$ is not too large. The objective of the second $\textsc{FindBest}$ is to discard some less promising arms to make room for new arrivals. 

In the first pass or the second $\textsc{FindBest}$ subroutine of the $p$-th pass, we only operate on $O(1)$ arms rather than all the arms in memory. This reduces the size of the $\textsc{FindBest}$ subroutine from $O(n)$ arms to $O(1)$ arms and does not affect accuracy. The reduction in scale allows for a reduction in regret compared to directly performing the subroutine on all arms in memory to preserve a good $\a'_{p,2}$. The benefit of doing so can be seen in the proof of Lemma~\ref{lem:ub-l-simple-kingb}.

Another point we need to mention is the difference between the first pass and the subsequent passes. Note that the first pass only includes one iteration of $\textsc{FindBest}$ subroutine. This is because before the start of the first pass, we do not have any prior knowledge or information. In contrast, before the start of the $p$-th pass (where $p>1$), we may already have a good arm stored in memory. Therefore, we need the first $\textsc{FindBest}$ to identify this arm. But this step does not provide any benefit for the first pass.

\subsection{The analysis}
Let $\eps_1=\sqrt{\frac{1}{L_1}}$ and let $\eps_p=\sqrt{\frac{1}{L_{p,2}}}$ for $p>1$. Let $\-{king}_p$ be the best arm in memory when pass $p$ ends, i.e., after Line~\ref{line:ub-l-simple-read1} or Line~\ref{line:ub-l-simple-read2} is executed. For an arm with name $\!a$, we let $\Delta_{\!a} = \mu_{\a^*} - \mu_{\!a}$. Then we have the following lemma, which guarantees that our algorithm solves the BAR problem well in each pass.

\begin{lemma}\label{lem:ub-l-simple-kingb}
$\E{\Delta_{\k_p}}\leq O\tp{\frac{\eps_p}{n}}$. 
\end{lemma}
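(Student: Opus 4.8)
The plan is to induct on the pass number $p$, using $\E{\Delta_{\k_p}}$ as a potential function, and to prove the stronger statement $\E{\Delta_{\k_p}}\leq c\cdot \eps_p/n$ for a suitable constant $c$. The base case $p=1$ should follow almost directly from Lemma~\ref{lem:ub-bandit-subroutine-simple}: in the first pass we pick two arms from $\+M_1$ uniformly at random and run $\textsc{FindBest}$ on them with $L_1$ rounds. Conditioned on the pair $\+S_1$, the lemma gives $\E{\mu_{\a^*_{\+S_1}}-\mu_{\a'_1}}\leq\sqrt{2\abs{\+S_1}/L_1}=\sqrt{4/L_1}=2\eps_1$. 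Then I would average over the random choice of the pair together with the last arm read in: the key point is that $\k_1$ is the better of $\a'_1$ and the last arm, and the two randomly chosen arms among the first $n-1$ are a uniformly random pair, so with the last arm included we are effectively comparing $\a^*$ against a near-uniform random arm refined by $\textsc{FindBest}$. A direct computation of $\E_{\+S_1}{\mu_{\a^*}-\mu_{\a^*_{\+S_1}}}$ — the loss from restricting to a random pair rather than all $n$ arms — is where the $1/n$ saving comes from: averaging $\Delta$ over random pairs (or rather, the probability that a fixed near-optimal arm is among the two chosen is $\Theta(1/n)$) contributes an $O(1/n)$ factor, while the rest of the gap is controlled because $\a^*$ (or a comparably good arm) is read in as the last arm with probability $\Theta(1/n)$ as well. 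Assembling these gives $\E{\Delta_{\k_1}}\leq O(\eps_1/n)$.

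For the inductive step $p>1$, I would condition on the event from the previous pass and split $\Delta_{\k_p}$ according to the composition of $\+S_p$. By the induction hypothesis $\E{\Delta_{\k_{p-1}}}\leq O(\eps_{p-1}/n)$, and since $L_{p,1}=n^3 L_{p-1,2}$, the first $\textsc{FindBest}$ run on all of $\+M_p$ with $L_{p,1}$ rounds identifies an arm $\a'_{p,1}$ with $\E{\mu_{\k_{p-1}}-\mu_{\a'_{p,1}}}\leq\sqrt{2n/L_{p,1}}=\sqrt{2/(n^2 L_{p-1,2})}=\sqrt{2}\,\eps_{p-1}/n$, i.e. $\a'_{p,1}$ is as good as $\k_{p-1}$ up to an $O(\eps_{p-1}/n)$ term; in particular $\E{\Delta_{\a'_{p,1}}}\leq O(\eps_{p-1}/n)$. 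Now $\+S_p$ consists of $\a'_{p,1}$ plus two arms chosen uniformly at random from $\+M_p\setminus\{\a'_{p,1}\}$, and $\k_p$ is the best arm among $\+S_p$ survivors together with the newly read arm, so $\mu_{\k_p}\geq\mu_{\a'_{p,2}}$ and also $\mu_{\k_p}\geq\mu_{\a'_{p,1}}$ since $\a'_{p,1}$ is never dropped. I would bound $\E{\Delta_{\k_p}}$ by splitting on whether $\a^*$ (or some arm within $O(\eps_p)$ of it) lies in $\+S_p$: if it does, Lemma~\ref{lem:ub-bandit-subroutine-simple} with $\abs{\+S_p}=3$ gives $\E{\mu_{\a^*_{\+S_p}}-\mu_{\a'_{p,2}}}\leq\sqrt{6/L_{p,2}}=O(\eps_p)$, and this happens with probability $O(1/n)$ over the random choice; if it does not, we fall back on $\mu_{\k_p}\geq\mu_{\a'_{p,1}}$ and the induction hypothesis contributes $O(\eps_{p-1}/n)$, and since $\eps_{p-1}\leq\eps_p$ (as $L_{p,2}\geq L_{p-1,2}$, which holds by the explicit formulas because $\lambda_p$ is decreasing in $p$) this is $O(\eps_p/n)$. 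Combining the two cases yields $\E{\Delta_{\k_p}}\leq O(\eps_p/n)$.

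The main obstacle I anticipate is making the "$\a^*$ is in $\+S_p$ with probability $\Theta(1/n)$, and otherwise we lose nothing beyond what the induction already gives" argument fully rigorous, because $\+S_p$ is chosen adaptively (it depends on $\a'_{p,1}$, which depends on all the sampling so far) and the arms in memory have themselves been filtered by earlier passes, so the random pair is not independent of the arms' means. The clean way around this is to not track $\a^*$ specifically but instead to directly bound $\E_{\+S_p}{\mu_{\a^*}-\mu_{\a^*_{\+S_p}}}$: one should argue that the best arm currently in memory, call it $\k_{p-1}$, is always in $\+M_p$, hence $\mu_{\a^*_{\+S_p}}\geq\mu_{\a'_{p,1}}$ deterministically once $\a'_{p,1}$ is in $\+S_p$, reducing everything to (i) the loss $\E{\mu_{\k_{p-1}}-\mu_{\a'_{p,1}}}$ controlled by the long first $\textsc{FindBest}$, (ii) the induction hypothesis on $\E{\Delta_{\k_{p-1}}}$, and (iii) the improvement from the second $\textsc{FindBest}$, whose $O(1/n)$ gain must come from the random-pair averaging — here one needs that, conditioned on everything, at least one of the two randomly chosen arms equals the true in-memory optimum with probability $\Omega(1/n)$, and on that event $\textsc{FindBest}$ on $3$ arms delivers an $O(\eps_p)$-gap arm. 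I would also need to double-check the border-case conventions ($n=o(\sqrt T)$, $n,T$ large) are enough to absorb the ceilings in $L_1, L_{p,1}, L_{p,2}$ into the $O(\cdot)$.
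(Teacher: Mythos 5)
There is a genuine gap in your inductive step, and it traces to a reversed inequality. You assert $\eps_{p-1}\leq\eps_p$ "as $L_{p,2}\geq L_{p-1,2}$"; but $\eps_p=\sqrt{1/L_{p,2}}$, so $L_{p,2}\geq L_{p-1,2}$ gives $\eps_p\leq\eps_{p-1}$: the $\eps_p$ \emph{decrease} with $p$ (that is the whole point — later passes retain better arms). Consequently your fallback case, where $\a^*\notin\+S_p$ and you bound $\mu_{\k_p}\geq\mu_{\a'_{p,1}}$ and invoke the induction hypothesis, contributes $O\tp{\eps_{p-1}/n}$ to $\E{\Delta_{\k_p}}$ with probability close to $1$, which exceeds the claimed $O\tp{\eps_p/n}$ by the factor $\eps_{p-1}/\eps_p=\sqrt{L_{p,2}/L_{p-1,2}}$ — polynomially large in $T$. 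The induction as set up does not close, and the weaker bound it yields would propagate into the exploitation phase as $\E{\Delta_{\k_P}}\cdot T=\Omega\tp{\eps_{P-1}T/n}$, ruining the final regret bound.

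The missing observation — which is essentially the entire content of the paper's (non-inductive) proof — is that $\k_p$ is the best arm \emph{in memory}, not an arm the algorithm has identified, so in the fallback case you do not need $\a'_{p,1}$ at all: $\Delta_{\k_p}=0$ exactly. Concretely, since $m=n-1$, if $\a^*\notin\+M_p$ then $\a^*$ is the unique arm still in the stream and is read in at the end of pass $p$, so $\Delta_{\k_p}=0$; and if $\a^*\in\+M_p$, the only arm dropped in pass $p$ lies in $\+S_p\setminus\set{\a'_{p,1},\a'_{p,2}}$, so $\Delta_{\k_p}>0$ forces $\a^*\in\+S_p\setminus\set{\a'_{p,1}}$, an event of probability at most $\frac{2}{n-2}$ over the fresh uniform choice of the two extra arms (this also disposes of your adaptivity worry), on which $\a^*$ is the best arm of $\+S_p$ and Lemma~\ref{lem:ub-bandit-subroutine-simple} gives $\E{\mu_{\a^*}-\mu_{\a'_{p,2}}}\leq\sqrt{6/L_{p,2}}=O(\eps_p)$. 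Multiplying the two gives $O\tp{\eps_p/n}$ in a single pass, with no induction, no potential function, and no $\eps_{p-1}$ term. The same remark repairs your base case, where the event "$\a^*$ arrives last" is determined by the fixed instance rather than being a $\Theta(1/n)$-probability event: either $\a^*$ is the last arm (then $\Delta_{\k_1}=0$) or $\Delta_{\k_1}>0$ requires $\a^*$ to land in the uniformly random pair $\+S_1$, probability $\frac{2}{n-1}$.
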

\begin{proof}
    Recall that $\+M_p$ is the set of arms in memory at the beginning of pass $p$. If the optimal arm $\a^*$ is not in $\+M_p$, according to \Cref{algo:large-m-simple}, it must be in $\+M_{p+1}$ and therefore $\Delta_{\-{king}_p}=0$. Hence, we only need to consider the case when $\a^*$ is in memory when we drop arms from $S_p$.

    When $p=1$, we have
    \begin{align*}
        \E{\Delta_{\-{king}_1}}&= \Pr{\a^*\in \+S_1}\cdot \E{\mu_{\a^*}-\mu_{\a'_1} \mid \a^*\in \+S_1}\\
        &\leq \frac{2}{n-1}\cdot \E{\mu_{\a^*}-\mu_{\a'_1} \mid \a^*\in \+S_1} =O\tp{\frac{\eps_1}{n}},
    \end{align*}
    where the third inequality holds due to Lemma~\ref{lem:ub-bandit-subroutine-simple}.

    For $p>1$, $\a^*$ is dropped only if $\a^*\in \+S_p\setminus\set{\a'_{p,2},\a'_{p,1}}$. Therefore, 
    \begin{align}
        \E{\Delta_{\-{king}_p}}&\leq \Pr{\a^*\in \+S_p\setminus\set{\a'_{p,1}}}\cdot \E{\mu_{\a^*} - \mu_{\a'_{p,2}}\mid \a^*\in \+S_p\setminus\set{\a'_{p,1}}} \notag \\ 
        &\overset{(\spadesuit)}{\leq} \Pr{\a^*\in \+S_p\mid \a^*\neq \a'_{p,1}}\Pr{\a^*\neq\a'_{p,1}}  \cdot \sqrt{\frac{6}{L_{p,2}}}  \notag \\
        & \leq \frac{2}{n-2} \cdot\sqrt{\frac{6}{L_{p,2}}} = O\tp{\frac{\eps_p}{n}}, \notag
    \end{align}
    where $(\spadesuit)$ follows from Lemma~\ref{lem:ub-bandit-subroutine-simple}.
\end{proof}

We remark that it is crucial to include additional $2$ arms in the Line~\ref{line:ub-l-simple-sample} of \Cref{algo:large-m-simple} instead of only $1$ arm. This allows us to keep $\a'_{p,1}$ in the memory in Line~\ref{line:ub-l-simple-drop}. By doing so, the premise condition for $\a^*$ being discarded is that $\a^*\in  \+S_p\setminus\set{\a'_{p,1}}$. The probability of this event can be bounded by $O\tp{\frac{1}{n}}$, and the saving here is also key to our tight regret bound in terms of $n$.

With the guarantee in \Cref{lem:ub-l-simple-kingb}, we can then bound the expected regret in each pass and deduce a total regret bound of \Cref{algo:large-m-simple} in the following theorem.
\begin{theorem}[regret bound for \Cref{algo:large-m-simple}]
    For any input instance, the expected regret of \Cref{algo:large-m-simple} in pass $p$ is $O\tp{2^{-P+p}n^{\frac{2-2^{P+1}}{2^{P+1}-1}}T^{\frac{2^P}{2^{P+1}-1}}}$.

    Furthermore, the total regret of \Cref{algo:large-m-simple} satisfies 
    \[
        R(T)\leq O\tp{n^{\frac{2-2^{P+1}}{2^{P+1}-1}}T^{\frac{2^P}{2^{P+1}-1}}}.
    \]
\end{theorem}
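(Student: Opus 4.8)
The plan is to decompose the total regret according to the phase structure described in Figure~\ref{fig:algorithm}: the regret accumulated in each of the $P$ exploration passes, plus the regret in the exploitation phase. For each pass $p$, the crucial input is \Cref{lem:ub-l-simple-kingb}, which tells us that $\E{\Delta_{\k_{p-1}}} = O(\eps_{p-1}/n)$, so that the ``baseline'' arm we carry into pass $p$ is already quite good in expectation. The regret incurred during pass $p$ then splits into two parts: (i) the regret relative to $\k_{p-1}$ caused by the two \textsc{FindBest} calls in pass $p$, and (ii) the regret $\E{\Delta_{\k_{p-1}}}$ accrued on every one of the $L_p$ rounds of that pass simply because $\k_{p-1}$ itself is suboptimal. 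For the first pass there is no baseline, so (i) is just the regret of \textsc{FindBest}$(\+S_1, L_1)$ on two arms, which by \Cref{prop:ub-l-OSMD} is $O(\sqrt{L_1})$; likewise the first \textsc{FindBest}$(\+M_p, L_{p,1})$ on $n$ arms costs $O(\sqrt{n L_{p,1}})$ and the second \textsc{FindBest}$(\+S_p, L_{p,2})$ on three arms costs $O(\sqrt{L_{p,2}})$.

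Next I would substitute the chosen parameters. With $\lambda_p = \frac{2^{P-p+1}-1}{2^{P+1}-1}$ one has $L_1 = \Theta(2^{-2P} n^{-2\lambda_1} T^{1-\lambda_1})$ and $L_{p,2} = \Theta(2^{-2P+2p-2} n^{-2\lambda_p} T^{1-\lambda_p})$, and the key algebraic identities to verify are that $\sqrt{L_{p,2}} = \Theta(2^{-P+p-1} n^{-\lambda_p} T^{(1-\lambda_p)/2})$ and, using $1-\lambda_p = \frac{2^{P+1}-2^{P-p+1}}{2^{P+1}-1}$, that this equals $\Theta(2^{-P+p} n^{\frac{2-2^{P+1}}{2^{P+1}-1}} T^{\frac{2^P}{2^{P+1}-1}})$ up to the factor $2^{-P+p}$ — this is exactly the claimed per-pass bound. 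The term (ii), namely $L_p \cdot \E{\Delta_{\k_{p-1}}}$, should be checked to be of the same or smaller order: $L_p \le L_{p,1} + L_{p,2}$, and $L_{p,1} = n^3 L_{p-1,2}$ while $\eps_{p-1} = 1/\sqrt{L_{p-1,2}}$, so $L_{p,1}\cdot \eps_{p-1}/n = n^2 \sqrt{L_{p-1,2}}$, which by the recursion on the $\lambda$'s should again collapse to $O(2^{-P+p} n^{\frac{2-2^{P+1}}{2^{P+1}-1}} T^{\frac{2^P}{2^{P+1}-1}})$ given the assumption $n = o(\sqrt T)$; similarly $L_{p,2}\cdot \eps_{p-1}/n$ is dominated. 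Also the $O(\sqrt{n L_{p,1}})$ term: $\sqrt{n \cdot n^3 L_{p-1,2}} = n^2 \sqrt{L_{p-1,2}}$, the same quantity, again $o$ of the pass bound under $n=o(\sqrt T)$. So each pass contributes $O(2^{-P+p} n^{\frac{2-2^{P+1}}{2^{P+1}-1}} T^{\frac{2^P}{2^{P+1}-1}})$, proving the first assertion.

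For the total regret, I would sum over $p \in [P]$: $\sum_{p=1}^P 2^{-P+p} = 2^{-P}(2^{P+1}-2) < 2$, a geometric series, so the exploration regret is $O(n^{\frac{2-2^{P+1}}{2^{P+1}-1}} T^{\frac{2^P}{2^{P+1}-1}})$. It remains to handle the exploitation phase: during the $T - \sum_p L_p$ final rounds we run \textsc{MirrorDescent} on $\+M_{P+1}$, which by \Cref{prop:ub-l-OSMD} has regret at most $\sqrt{2(n-1)T}$ \emph{relative to the best arm in $\+M_{P+1}$}, so the exploitation regret is at most $\E{\Delta_{\k_P}}\cdot T + \sqrt{2nT} = O(\eps_P T / n) + O(\sqrt{nT})$. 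One checks $\eps_P T/n = T/(n\sqrt{L_{P,2}})$, and since $\lambda_P = \frac{1}{2^{P+1}-1}$ gives $L_{P,2} = \Theta(n^{-2\lambda_P} T^{1-\lambda_P})$, this is $\Theta(n^{-1+\lambda_P} T^{(1+\lambda_P)/2}) = \Theta(n^{\frac{2-2^{P+1}}{2^{P+1}-1}} T^{\frac{2^P}{2^{P+1}-1}})$ — matching the target — while $\sqrt{nT}$ is smaller because $\frac{2^P}{2^{P+1}-1} > \frac12$. Combining all contributions yields $R(T) \le O(n^{\frac{2-2^{P+1}}{2^{P+1}-1}} T^{\frac{2^P}{2^{P+1}-1}})$.

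The main obstacle I anticipate is bookkeeping the exponent arithmetic cleanly: one must repeatedly use the identity $1 - \lambda_p = 2\lambda_p - \lambda_{p-1}$ (equivalently $\lambda_{p-1} = 2\lambda_p - (1-\lambda_p)$, which follows from $\lambda_{p-1} = 2\lambda_p + \frac{1}{2^{P+1}-1}$ and $1-\lambda_p$'s closed form) to see that $n^2\sqrt{L_{p-1,2}}$ and $\sqrt{L_{p,2}}$ land on the same power of $T$ and $n$, and to confirm that the $n=o(\sqrt T)$ hypothesis exactly absorbs the extra $n^{2}$ factor (and the $\ceil{\cdot}$ rounding and the $+1$'s) without changing the order. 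Verifying that term (ii) never dominates term (i) — i.e. that carrying a slightly-suboptimal king for $L_p$ rounds is cheaper than the exploration that produced the previous king — is the conceptual heart of why the parameter schedule $L_{p,2}$ increasing in $p$ is the right one, and is where I would spend the most care.
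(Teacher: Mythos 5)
Your proposal follows the paper's proof essentially step for step: the same per-pass plus exploitation decomposition, the same reliance on \Cref{lem:ub-l-simple-kingb} and \Cref{prop:ub-l-OSMD}, and the same parameter arithmetic, so the argument goes through. One small correction: for $p=1$ the pass-one regret is \emph{not} $O(\sqrt{L_1})$ --- that is only the regret relative to the better of the two randomly chosen arms in $\+S_1$, which may itself be $\Omega(1)$-suboptimal, so the pass-one regret can be $\Theta(L_1)$; the paper simply uses the trivial bound $R_1\le L_1$, which is still of the claimed order since $2^{-2P}n^{-2\lambda_1}T^{1-\lambda_1}=2^{-2P}\,n^{\frac{2-2^{P+1}}{2^{P+1}-1}}T^{\frac{2^P}{2^{P+1}-1}}$. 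Similarly, for $p>1$ your item (i) should also carry the drift term $L_{p,2}\cdot\sqrt{2n/L_{p,1}}$ (the gap between $\k_{p-1}$ and the best arm of $\+S_p$, paid on all $L_{p,2}$ rounds of the second $\textsc{FindBest}$), but since this equals $\sqrt{2}\,L_{p,2}\,\eps_{p-1}/n$ it is of exactly the order you already verify, so the conclusion is unaffected.
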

\begin{proof}
    Let $R_p$ denote the expected regret generated in the $p$-th pass. When $p=1$, we have $R_1\leq L_1$. When $p>1$, we compute the regret of the two $\textsc{FindBest}$ subroutines separately. For the first one, the total regret can be decomposed into two parts: the regret of the OSMD process with respect to $\k_{p-1}$, which can be bounded by $\sqrt{2nL_{p,1}}$ according to Proposition~\ref{prop:ub-l-OSMD}; and the regret generated due to the gap between $\k_{p-1}$ and $\a^*$, which can be bounded by $L_{p,1}\cdot \E{\Delta_{\-{king}_{p-1}}}$.
    
    For the second subroutine, we can do a similar decomposition. The expected regret of this process is no larger than $\sqrt{6L_{p,2}} + L_{p,2}\cdot \tp{\sqrt{\frac{2n}{L_{p,1}}} + \E{\Delta_{\-{king}_{p-1}}}}$, where $\sqrt{\frac{2n}{L_{p,1}}}$ is the bound for the gap between the mean of optimal arm in $\+S_p$ and $\k_{p-1}$. This bound is guaranteed by Lemma~\ref{lem:ub-bandit-subroutine-simple}. 

    Recall that $L_1=O\tp{2^{-2P}\cdot n^{-2\lambda_1}T^{1-\lambda_1}}$, $L_{p,1} =\begin{cases} n^3\cdot L_{p-1,2}, &p>2 \\ n^3\cdot L_1, & p=2\end{cases}$ and $L_{p,2}=O\tp{2^{-2P+2p}\cdot n^{-2\lambda_p}T^{1-\lambda_p}}$ for $p>1$ where $\lambda_p=\frac{2^{P-p+1}-1}{2^{P+1}-1}$. Combining the two subroutines together,
    \begin{align*}
        R_p & \leq L_{p,1}\cdot \E{\Delta_{\-{king}_{p-1}}} + \sqrt{2nL_{p,1}} + L_{p,2}\cdot \tp{\sqrt{\frac{2n}{L_{p,1}}} +  \E{\Delta_{\-{king}_{p-1}}}}+ \sqrt{6L_{p,2}}\leq O\tp{2^{-P+p}\cdot n^{\frac{2-2^{P+1}}{2^{P+1}-1}}T^{\frac{2^P}{2^{P+1}-1}}}.
    \end{align*}

    For the exploitation phase, the regret can also be decomposed into the part of OSMD and the regret generated by $\Delta_{\k_P}$. Similarly, these terms can be bounded by $\E{\Delta_{\-{king}_{P}}}\cdot T + \sqrt{2n T}$.
    
    In total, we have
    \begin{align*}
        R(T)&\leq  L_1 + \sum_{p=2}^P O\tp{2^{-P+p}n^{\frac{2-2^{P+1}}{2^{P+1}-1}}T^{\frac{2^P}{2^{P+1}-1}}} + \E{\Delta_{\-{king}_{P}}}\cdot T + \sqrt{2n T} =O\tp{n^{\frac{2-2^{P+1}}{2^{P+1}-1}}T^{\frac{2^P}{2^{P+1}-1}}}.
    \end{align*}
\end{proof}

\section{Upper Bound Analysis for the Large Memory Case}\label{sec:ub-l}

In this section we prove \Cref{thm:ub-general-m} when $m\ge \frac{8n}{9}$. The algorithm and its analysis are a natural generalization of the $m=n-1$ case studied in \Cref{sec:ub-l-simple}. Remember that in order to simplify the presentation, in \Cref{algo:large-m-simple} the algorithm is allowed to remember the identity of the dropped arm in each pass. So here we must address the following two issues arising in the generalization:
\begin{itemize}
    \item How to choose more than one inferior arms to drop in each pass?
    \item How to modify the algorithm so that we do not need to remember the identity of the dropped armed?
\end{itemize}


\subsection{The algorithm}\label{subsec:ub-l-algo}
Consider the game with \(n\) arms, $P$ ($1\leq P\leq \log\log T - \log\tp{12\log\frac{n}{n-m}}$) passes and memory size of $m$ ($m\geq \frac{8n}{9}$). Assume $T\geq (n+1)^2$. We adopt the notations defined in \Cref{sec:ub-l-simple}.
Let $\a^*$ denote the optimal arm (with the largest empirical mean). For any arm with name $\!a$ let $\mu_{\!a}$ be its mean reward. Let $\+M_p$ be the set of arms in memory at the beginning of pass $p$ for any $p\in [P]$. Let $\+M_{P+1}$ denote the arms in memory in the exploitation phase. 

\paragraph{Overview of the algorithm} 
Similar to \Cref{algo:large-m-simple}, we basically implement a BAR routine to select $n-m$ arms to drop at each pass. The idea for BAR is simple, and the intuition is explained in detail with an offline BAR algorithm in \Cref{sec:ub-l-warmup}. However, since in our model then algorithm cannot remember the identity of the dropped arms, we have to implement a streaming-friendly BAR algorithm, which complicates things a bit. 

At the beginning of the first pass, the algorithm reads in $m$ arms. It randomly selects an arm set $\+S_1$ of size $n-m+1$ from the memory to perform a $\textsc{FindBest}$ subroutine to identify a good arm, denoted as $\a'_{1}$. It then discards the $n-m$ arms from the set $\+S_1\setminus\set{\a'_1}$ to make space for the remaining $n-m$ arms in the stream.

For the subsequent pass $p$ ($p>1$), the memory is full at the beginning of the pass. We first execute a shorter $\textsc{FindBest}$ on all the arms in memory to find a good arm, $\a'_{p,1}$. Then, we randomly discard $\frac{m}{2}$ arms from the remaining $m-1$ arms. For each arm that arrives next, we put it into memory as long as memory is not full. Note that if we find an arriving arm is already in memory, the newly arriving arm and the one already in memory are essentially the same arm, so they only occupy one position. After including new $\frac{m}{2}$ arms in the stream, the memory becomes full. We randomly select $n-m+1$ arms from the newly included $\frac{m}{2}$ arms. Let $\+S_p$ denote  the set of these $n-m+1$ arms together with $\a'_{p,1}$. Then we execute a longer $\textsc{FindBest}$ on $\+S_p$ in order to select a good arm $\a'_{p,2}$ to keep. Then we discard $n-m$ arms from $\+S_p\setminus\set{\a'_{p,1},\a'_{p,2}}$. At this point, we know that in the remaining pass, there are only $n-m$ arms that are not in current memory. We then sequentially read in these $n-m$ arms and end this pass.

\paragraph{The description of the algorithm} Set $\lambda_p = \frac{2^{P-p+1}-1}{2^{P+1}-1}$ for any $p\in [P]$. Set the following parameters:
\begin{itemize}
    \item the sample times in the first pass $L_1 = \ceil{2^{-2P}\cdot (n-m)^{3\lambda_1}n^{-2\lambda_1}T^{1-\lambda_1}}$;
    \item for any $p\in[P]\setminus\set{1}$, 
          \begin{itemize}
            \item[$\circ$] the sample times of the second $\textsc{FindBest}$ in the $p$-th pass $L_{p,2}= \ceil{2^{-2P+2p-2}\cdot (n-m)^{3\lambda_p}n^{-2\lambda_p}T^{1-\lambda_p}}$;
            \item[$\circ$] the sample times of the first $\textsc{FindBest}$ in the $p$-th pass $L_{p,1}=\ceil{\frac{m^3 L_{p-1,2}}{(n-m)^3 }}$ for $p>2$ and $L_{p,1}=\ceil{\frac{m^3 L_{1}}{(n-m)^3 }}$ for $p=2$.
          \end{itemize}  
\end{itemize}
To achieve tight dependency on $n$ and $m$, the parameters for large memory cases inevitably entail some intricacy compared to the simple $m=n-1$ case. But in general, the values of these parameters are set for the same purpose as explained in \Cref{subsec:ub-l-simple-algo}. The full algorithm is represented in \Cref{algo:large-m}. 


\begin{algorithm}[ht]
    \caption{Multi-pass algorithm for MAB with large memory ($m\ge \frac{8n}{9}$)}
    \label{algo:large-m}
    \Input{number of passes $P$, time horizon $T$, memory size $m$}
\begin{algorithmic}[1]
    \For{$p=1,2,\dots, P$}
        \If{$p=1$}
            \State Read in the first $m$ arms\;
            \State Sample $n-m+1$ arms from $\+M_1$ uniformly at random and let $\+S_1$ be the set of these arms\; \label{line:ub-l-sample1}
            \State $\a'_{1}\gets$ \Call{FindBest}{$\+S_1,L_{1}$}\;
            \State Drop the arms in $\+S_1\setminus\set{\a'_{1}}$\;
            \State Read in the remaining $n-m$ arms\;\label{line:ub-l-read1}
        \Else
            \State $\a'_{p,1}$ = \Call{FindBest}{$\+M_p,L_{p,1}$}\;
            \State Choose $\frac{m}{2}$ arms from $\+M_p\setminus \set{\a'_{p,1}}$ uniformly at random to drop\;\label{line:ub-l-drop0}
            \State Read in $\frac{m}{2}$ arms that are not currently in memory\; \label{line:ub-l-read}
            \State Choose $n-m+1$ arms from the new $\frac{m}{2}$ arms and let $\+S_p$ be the set of these arms plus  $\a'_{p,1}$\;\label{line:ub-l-sample}
            \State $\a'_{p,2}\gets$ \Call{FindBest}{$\+S_p,L_{p,2}$}\;
            \State Choose $n-m$ arms in $\+S_p\setminus\set{\a'_{p,2},\a'_{p,1}}$ uniformly at random to drop\;\label{line:ub-l-drop}
            \State Read in the remaining $n-m$ arms that are not currently in memory\;\label{line:ub-l-read2}
        \EndIf
    \EndFor
    \State Run $\textsc{MirrorDescent}$ on the remaining $m$ arms until the game ends\;
\end{algorithmic}
\end{algorithm}


Recall the two issues mentioned at the beginning of this section which stand as the main differences between the $m=n-1$ case algorithm in \Cref{sec:ub-l-simple} and the algorithm we are going to design here: 
\begin{itemize}
    \item How to choose more than one inferior arms to drop?
    \item How to modify the algorithm so that we do not need to remember the identity of the dropped armed?
\end{itemize}

Let us explain how the two issues are addressed in our new algorithm. We resolve the first one by incorporating $n-m+1$ arms together with $\a'_{p,1}$ to perform the second $\textsc{FindBest}$ in pass $p$ (p>1). This allows us to drop $n-m$ arms while keeping $\a'_{p,1}$ and $\a'_{p,2}$ in memory (see Line~\ref{line:ub-l-sample} to Line~\ref{line:ub-l-drop}). The effect of this operation will be proved in \Cref{lem:ub-l-kingb}.

For the second problem, note that there is a critical distinction compared to \Cref{algo:large-m-simple}: one arm can be dropped twice in a single pass. To be specific, it is possible that one arm is in memory at the beginning of pass $p$ ($p>1$) and is chosen to be dropped in Line~\ref{line:ub-l-drop0} of \Cref{algo:large-m}. Subsequently, it is read into memory in the process of Line~\ref{line:ub-l-read} but unfortunately, is dropped again in Line~\ref{line:ub-l-drop}. The fundamental reason for such a phenomenon is the requirement that the identity of dropped arms should be forgotten. By discarding $\frac{m}{2}$ arms after the first $\textsc{FindBest}$, this condition can be ensured to be satisfied without incurring additional regret cost.

\subsection{The analysis}\label{sec:ub-l-analysis}

Let $\eps_1=\sqrt{\frac{n-m}{L_1}}$ and let $\eps_p=\sqrt{\frac{n-m}{L_{p,2}}}$ for $p>1$. Let $\-{king}_p$ be the best arm in memory when pass $p$ ends, i.e., after Line~\ref{line:ub-l-read1} or Line~\ref{line:ub-l-read2} is executed. For an arm with name $\!a$, we let $\Delta_{\!a} = \mu_{\a^*} - \mu_{\!a}$. The following lemma proves the effect of our algorithm on solving the BAR problem, which is a generalized version of \Cref{lem:ub-l-simple-kingb}.

\begin{lemma}\label{lem:ub-l-kingb}
$\E{\Delta_{\k_p}}\leq \frac{10(n-m)\eps_p}{m}$. 
\end{lemma}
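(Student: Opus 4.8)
\textbf{Proof plan for \Cref{lem:ub-l-kingb}.}

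The plan is to mimic the proof of \Cref{lem:ub-l-simple-kingb}, tracking where the optimal arm $\a^*$ can possibly be dropped and bounding the probability of each such event together with the conditional expected gap. First I would dispose of the trivial cases. If $\a^*\notin\+M_p$, then $\a^*$ has not yet arrived in pass $p$ at the point where we drop arms, so it will be read into memory afterward and survive to the end of the pass, giving $\Delta_{\k_p}=0$. Also, if $\a^*\in\+M_p$ but in pass $p>1$ it either equals $\a'_{p,1}$, or is among the arms \emph{not} dropped in Line~\ref{line:ub-l-drop0}, or is among the arms \emph{not} selected into $\+S_p$ in Line~\ref{line:ub-l-sample}, or equals $\a'_{p,2}$, then $\a^*$ stays in memory and again $\Delta_{\k_p}=0$. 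So the only way $\a^*$ is discarded (for $p>1$) is: $\a^*\in\+M_p\setminus\set{\a'_{p,1}}$, \emph{and} $\a^*$ is one of the $\frac{m}{2}$ arms dropped in Line~\ref{line:ub-l-drop0}, \emph{and} (after being re-read in Line~\ref{line:ub-l-read}) $\a^*$ is selected into $\+S_p$ in Line~\ref{line:ub-l-sample}, \emph{and} $\a^*\in\+S_p\setminus\set{\a'_{p,1},\a'_{p,2}}$ so that it is dropped in Line~\ref{line:ub-l-drop}. (There is also the subtlety flagged in the text that $\a^*$ could be dropped in Line~\ref{line:ub-l-drop0} and then simply not re-read because it never re-appears before the memory fills — but in that case it re-appears later in the pass and survives, or it is re-read; either way this only helps. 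I would phrase the dominant bad event cleanly as "$\a^*$ is in $\+S_p\setminus\set{\a'_{p,1}}$ at the time of Line~\ref{line:ub-l-drop}".)

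The next step is to bound the probability of this bad event and multiply by the conditional gap. For $p=1$: $\a^*$ is dropped only if $\a^*\in\+S_1\setminus\set{\a'_1}$, and since $\+S_1$ is a uniformly random $(n-m+1)$-subset of the $m$ arms in $\+M_1$, we have $\Pr{\a^*\in\+S_1}=\frac{n-m+1}{m}$; conditioned on $\a^*\in\+S_1$, \Cref{lem:ub-bandit-subroutine-simple} gives $\E{\mu_{\a^*}-\mu_{\a'_1}\mid\a^*\in\+S_1}\le\sqrt{\frac{2\abs{\+S_1}}{L_1}}=\sqrt{\frac{2(n-m+1)}{L_1}}$. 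Multiplying, $\E{\Delta_{\k_1}}\le\frac{n-m+1}{m}\sqrt{\frac{2(n-m+1)}{L_1}}=O\!\tp{\frac{(n-m)^{3/2}}{m\sqrt{L_1}}}=O\!\tp{\frac{(n-m)\eps_1}{m}}$ using $\eps_1=\sqrt{\frac{n-m}{L_1}}$ and $n-m+1\le 2(n-m)$. For $p>1$: I would condition on $\a^*\ne\a'_{p,1}$ (else $\Delta_{\k_p}=0$), and bound the two independent random subset-selection steps. The chance $\a^*$ survives Line~\ref{line:ub-l-drop0}'s drop-of-$\frac{m}{2}$ is handled by noting that if $\a^*$ is \emph{not} dropped there it stays, so we want the probability it \emph{is} dropped: at most $\frac{m/2}{m-1}\le 1$ — this factor I will keep as a constant. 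Then, conditioned on $\a^*$ having been re-read among the new $\frac{m}{2}$ arms, the probability it lands in the random $(n-m+1)$-subset chosen in Line~\ref{line:ub-l-sample} is $\frac{n-m+1}{m/2}=\frac{2(n-m+1)}{m}$. Finally, conditioned on $\a^*\in\+S_p\setminus\set{\a'_{p,1}}$, \Cref{lem:ub-bandit-subroutine-simple} applied to the second $\textsc{FindBest}$ gives a conditional gap $\le\sqrt{\frac{2\abs{\+S_p}}{L_{p,2}}}=\sqrt{\frac{2(n-m+2)}{L_{p,2}}}$, and we must also account for the event $\a^*\ne\a'_{p,2}$ implicit in being dropped, which only scales things by a probability $\le1$. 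Chaining: $\E{\Delta_{\k_p}}\le 1\cdot\frac{2(n-m+1)}{m}\cdot\sqrt{\frac{2(n-m+2)}{L_{p,2}}}=O\!\tp{\frac{(n-m)^{3/2}}{m\sqrt{L_{p,2}}}}=O\!\tp{\frac{(n-m)\eps_p}{m}}$, and one then checks the absolute constant is at most $10$ by being slightly careful with the $n-m+1$ versus $n-m$ and the $\sqrt2$ factors (using $m\ge\frac{8n}{9}$, hence $m\ge8(n-m)$, to absorb lower-order terms).

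The main obstacle — and the place I would spend the most care — is the bookkeeping around the "an arm can be dropped twice in one pass" phenomenon, i.e. making sure the event decomposition is genuinely exhaustive and that the selection steps I treat as independent really are (the randomness of Line~\ref{line:ub-l-drop0} vs. Line~\ref{line:ub-l-sample} vs. the internal randomness of $\textsc{FindBest}$). Concretely, I need to argue that conditioning on $\a^*$ reaching $\+S_p$ does not bias the reward estimates used inside the second $\textsc{FindBest}$ beyond what \Cref{lem:ub-bandit-subroutine-simple} already tolerates — but that lemma is stated to hold for \emph{whatever} the input arms are, so conditioning on which arms form $\+S_p$ is harmless, and the $\textsc{FindBest}$ guarantee applies verbatim to the conditional law. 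The other minor point is verifying $n-m+1\le 2(n-m)$ and similar, which needs $n-m\ge1$, true since $m<n$. Everything else is arithmetic bounding of constants to land at $\frac{10(n-m)\eps_p}{m}$.
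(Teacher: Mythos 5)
Your proposal is correct and follows essentially the same route as the paper's proof: reduce to the event that $\a^*$ lies in $\+S_p\setminus\set{\a'_{p,1}}$ at the time of the final drop, bound that probability by $\frac{2(n-m+1)}{m}$ (after discounting the drop/re-read steps by factors at most $1$), and multiply by the conditional gap $\sqrt{\tfrac{2\abs{\+S_p}}{L_{p,2}}}$ from \Cref{lem:ub-bandit-subroutine-simple}. Your treatment is, if anything, slightly more explicit than the paper's about the drop-then-re-read bookkeeping and about why conditioning on the composition of $\+S_p$ does not invalidate the \textsc{FindBest} guarantee.
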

\begin{proof}
    Recall that $\+M_p$ is the set of arms in memory at the beginning of pass $p$. If the optimal arm $\a^*$ is not in $\+M_1$ for pass $1$, or $\a^*$ is not in memory at Line~\ref{line:ub-l-sample} for pass $p>1$, according to \Cref{algo:large-m}, it must be in $\+M_{p+1}$ and therefore $\Delta_{\-{king}_p}=0$. Hence, we only need to consider the case when $\a^*$ is in memory when we drop arms from $S_p$.

    When $p=1$, we have
    \begin{align*}
        \E{\Delta_{\-{king}_1}}&= \Pr{\a^*\in \+S_1}\cdot \E{\mu_{\a^*}-\mu_{\a'_1} \mid \a^*\in \+S_1}\\
        &\leq \frac{n-m+1}{m-1}\cdot \E{\mu_{\a^*}-\mu_{\a'_1} \mid \a^*\in \+S_1}\\
        &\leq \frac{n-m+1}{m-1}\cdot \sqrt{\frac{2(n-m+1)}{L_1}} \leq \frac{10(n-m)\eps_1}{m},
    \end{align*}
    where the third inequality holds due to Lemma~\ref{lem:ub-bandit-subroutine-simple}.
    
    For $p>1$, if $\a^*$ is in memory when we drop arms from $S_p$, it is dropped only if $\a^*\in \+S_p\setminus\set{\a'_{p,2},\a'_{p,1}}$. Then we have 
    \begin{align}
        \E{\Delta_{\-{king}_p}}&\leq \Pr{\a^*\in \+S_p\setminus\set{\a'_{p,1}}}\cdot \E{\mu_{\a^*} - \max\set{ \mu_{\a'_{p,1}},\mu_{\a'_{p,2}}}\mid \a^*\in \+S_p\setminus\set{\a'_{p,1}}} \notag \\ 
        &= \Pr{\a^*\in \+S_p\mid \a^*\neq \a'_{p,1}}\Pr{\a^*\neq\a'_{p,1}}  \notag \\
        &\quad \cdot \E{\mu_{\a^*} - \max\set{ \mu_{\a'_{p,1}},\mu_{\a'_{p,2}}}\mid \a^*\in \+S_p\setminus\set{\a'_{p,1}}} \notag \\
        & = \frac{2(n-m+1)}{m} \cdot\Pr{\a^*\neq\a'_{p,1}} \E{\mu_{\a^*} - \max\set{ \mu_{\a'_{p,1}},\mu_{\a'_{p,2}}}\mid \a^*\in \+S_p\setminus\set{\a'_{p,1}}}\notag\\
        & \leq \frac{2(n-m+1)}{m} \cdot \E{\mu_{\a^*} - \mu_{\a'_{p,2}}\mid \a^*\in \+S_p\setminus\set{\a'_{p,1}}} \notag \\
        &\overset{(\spadesuit)}{\leq} \frac{2(n-m+1)}{m}\sqrt{\frac{2(n-m+2)}{L_{p,2}}} \leq \frac{10(n-m)}{m}\sqrt{\frac{n-m}{L_{p,2}}} \leq \frac{10(n-m)\eps_p}{m}, \notag
    \end{align}
    where $(\spadesuit)$ follows from Lemma~\ref{lem:ub-bandit-subroutine-simple} since Lemma~\ref{lem:ub-bandit-subroutine-simple} holds no matter whether $\a^*=\a'_{p,1}$ or not. In total, we have $\E{\Delta_{\-{king}_p}} \leq \frac{10(n-m)\eps_p}{m}$ for any $p\in[P]$.
\end{proof}

Similar to \Cref{algo:large-m-simple}, it is crucial to include an additional $n-m+1$ arms in the Line~\ref{line:ub-l-sample} of \Cref{algo:large-m} instead of $n-m$ arms. This allows us to keep $\a'_{p,1}$ in the memory in Line~\ref{line:ub-l-drop}. By doing so, the premise condition for $\a^*$ being discarded is that $\a^*\in  \+S_p\setminus\set{\a'_{p,1}}$. The probability of this event can be bounded by $\frac{n-m+1}{n-1}$, and the saving here is also key to our tight regret bound in terms of $n$ and $m$.

Now we are ready to prove \Cref{thm:ub-general-m} when $m\geq \frac{8n}{9}$.
\begin{theorem}[regret bound for \Cref{algo:large-m}]
    For any input instance, the expected regret of \Cref{algo:large-m-simple} in pass $p$ is 
    \[
        O\tp{2^{-P+p} \cdot(n-m)^{1+\frac{2^{P}-2}{2^{P+1}-1}}n^{\frac{2-2^{P+1}}{2^{P+1}-1}} T^{\frac{2^P}{2^{P+1}-1}}}. 
    \]

    Furthermore, the total regret of \Cref{algo:large-m} satisfies 
    \[
        R(T)\leq O\tp{(n-m)^{1+\frac{2^{P}-2}{2^{P+1}-1}}n^{\frac{2-2^{P+1}}{2^{P+1}-1}} T^{\frac{2^P}{2^{P+1}-1}}},
    \]
    when $1\leq P\leq \log\log T - \log\tp{12\log\frac{n}{n-m}}$ and $T\geq (n+1)^2$.
\end{theorem}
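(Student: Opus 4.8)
The plan is to mimic the analysis of \Cref{algo:large-m-simple} but to carry the extra $(n-m)$ factors through carefully, using \Cref{lem:ub-l-kingb} in place of \Cref{lem:ub-l-simple-kingb}. I would let $R_p$ denote the expected regret generated during pass $p$, bound each $R_p$ and the exploitation regret separately, and then sum. For $p=1$ the bound is simply $R_1\le L_1$, and one checks that $L_1 = O\tp{2^{-2P}(n-m)^{3\lambda_1}n^{-2\lambda_1}T^{1-\lambda_1}}$ is already $O\tp{2^{-P+1}(n-m)^{1+\frac{2^P-2}{2^{P+1}-1}}n^{\frac{2-2^{P+1}}{2^{P+1}-1}}T^{\frac{2^P}{2^{P+1}-1}}}$ by plugging in $\lambda_1 = \frac{2^P-1}{2^{P+1}-1}$ and simplifying exponents. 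For $p>1$, I would decompose $R_p$ into the regret of the two \textsc{FindBest} subroutines exactly as in the $m=n-1$ case: the first contributes at most $\sqrt{2mL_{p,1}} + L_{p,1}\cdot\E{\Delta_{\k_{p-1}}}$ (OSMD regret against $\k_{p-1}$ plus the gap between $\k_{p-1}$ and $\a^*$), and the second contributes at most $\sqrt{2(n-m+2)L_{p,2}} + L_{p,2}\cdot\tp{\sqrt{\tfrac{2m}{L_{p,1}}} + \E{\Delta_{\k_{p-1}}}}$, where the $\sqrt{2m/L_{p,1}}$ term bounds the gap between $\k_{p-1}$ and the best arm of $\+S_p$ via \Cref{lem:ub-bandit-subroutine-simple}. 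Here one must also account for the regret of the $\Theta(m)$ rounds spent reading in arms between subroutines (Lines~\ref{line:ub-l-drop0}–\ref{line:ub-l-read} and the final reads), but those are $O(m)$ per pass, hence $O(mP) = \tilde O(nP)$ total, which is dominated.

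The core computation is to substitute \Cref{lem:ub-l-kingb}, namely $\E{\Delta_{\k_{p-1}}}\le \tfrac{10(n-m)\eps_{p-1}}{m}$ with $\eps_{p-1}=\sqrt{(n-m)/L_{p-1,2}}$ (and $\eps_0$ interpreted via $L_1$ when $p=2$), into the expression for $R_p$, then plug in the definitions $L_{p,1}=\ceil{m^3 L_{p-1,2}/(n-m)^3}$ and $L_{p,2}=\ceil{2^{-2P+2p-2}(n-m)^{3\lambda_p}n^{-2\lambda_p}T^{1-\lambda_p}}$, and verify that every one of the five terms in the bound for $R_p$ collapses to $O\tp{2^{-P+p}(n-m)^{1+\frac{2^P-2}{2^{P+1}-1}}n^{\frac{2-2^{P+1}}{2^{P+1}-1}}T^{\frac{2^P}{2^{P+1}-1}}}$. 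The choice $L_{p,1}=m^3L_{p-1,2}/(n-m)^3$ is exactly engineered so that $L_{p,1}\cdot\E{\Delta_{\k_{p-1}}}$ and $L_{p,2}\cdot\sqrt{2m/L_{p,1}}$ balance against $\sqrt{mL_{p,1}}$; the recursion $\lambda_{p-1} = 2\lambda_p - \frac{1}{2^{P+1}-1}$ (equivalently $1-\lambda_{p-1} = 2(1-\lambda_p) - \frac{2^{P+1}-2}{2^{P+1}-1}+\dots$, to be checked) is what makes the $T$-exponent come out to $\frac{2^P}{2^{P+1}-1}$ uniformly. For the exploitation phase, the $T-\sum_p L_p$ remaining rounds run \textsc{MirrorDescent} on $m$ arms, contributing $\sqrt{2mT} + T\cdot\E{\Delta_{\k_P}}$; since $\eps_P = \sqrt{(n-m)/L_{P,2}}$ with $\lambda_P = \frac{1}{2^{P+1}-1}$, the term $T\cdot\tfrac{10(n-m)\eps_P}{m}$ again reduces to the target order, and $\sqrt{2mT} = \tilde O(\sqrt{nT})$ is dominated because $n-m=\Omega(n)$ forces the target bound to be $\Omega(\sqrt{nT})$ in this large-memory regime (one should double-check this comparison using $T\ge(n+1)^2$).

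Summing, $R(T)\le L_1 + \sum_{p=2}^P R_p + \sqrt{2mT} + T\E{\Delta_{\k_P}} = \sum_{p=1}^P O\tp{2^{-P+p}(n-m)^{1+\frac{2^P-2}{2^{P+1}-1}}n^{\frac{2-2^{P+1}}{2^{P+1}-1}}T^{\frac{2^P}{2^{P+1}-1}}}$, and the geometric sum $\sum_{p=1}^P 2^{-P+p} = \sum_{j=0}^{P-1}2^{-j} < 2$ kills the $2^{-P+p}$ factors, giving the claimed total. \textbf{The main obstacle} I anticipate is purely bookkeeping: verifying that all five terms in $R_p$ (and the two exploitation terms) really do equal the same power of $n$, $n-m$, and $T$ after substituting the ceilings and the recursive definition of $L_{p,1}$ — in particular, handling the ceiling functions (which only cost lower-order additive $1$'s, absorbed since $T\ge(n+1)^2$ makes all the $L$'s large) and confirming the exponent identities for $\lambda_p$. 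A secondary subtlety is checking that the constraint $P\le\log\log T - \log(12\log\frac{n}{n-m})$ guarantees $L_{P,2}\ge 1$ and more generally that the parameters stay in the valid range (e.g. $L_{p,1}\le$ the number of available rounds), so that the algorithm is well-defined; this is where the somewhat mysterious upper bound on $P$ gets used.
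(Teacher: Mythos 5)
Your proposal follows essentially the same route as the paper's proof: the identical per-pass decomposition into the two \textsc{FindBest} contributions (OSMD regret plus the $\E{\Delta_{\k_{p-1}}}$ gap terms), the substitution of \Cref{lem:ub-l-kingb}, the same exploitation-phase bound $\sqrt{2mT}+T\cdot\E{\Delta_{\k_P}}$, and the same geometric summation over $p$. The only addition — worrying about rounds spent reading arms — is unnecessary in this model (any number of arms can be read within a single round alongside the mandatory pull), but it is harmless.
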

\begin{proof}
    Let $R_p$ denote the expected regret generated in the $p$-th pass. When $p=1$, we have $R_1\leq L_1$. When $p>1$, we compute the regret of the two $\textsc{FindBest}$ subroutines separately. For the first one, the total regret can be decomposed into two parts: the regret of the OSMD process with respect to $\k_{p-1}$, which can be bounded by $\sqrt{2mL_{p,1}}$ according to Proposition~\ref{prop:ub-l-OSMD}; and the regret generated due to the gap between $\k_{p-1}$ and $\a^*$, which can be bounded by $L_{p,1}\cdot \E{\Delta_{\-{king}_{p-1}}}$.
    
    For the second subroutine, we can do a similar decomposition. The expected regret of this process is no larger than 
    \[
        \sqrt{2(n-m+2)L_{p,2}} + L_{p,2}\cdot \tp{\sqrt{\frac{2m}{L_{p,1}}} + \E{\Delta_{\-{king}_{p-1}}}},
    \] 
    where $\sqrt{\frac{2m}{L_{p,1}}}$ is the bound for the gap between the mean of optimal arm in $\+S_p$ and $\k_{p-1}$. This bound is guaranteed by Lemma~\ref{lem:ub-bandit-subroutine-simple}. 

    Recall that $\eps_1=\sqrt{\frac{n-m}{L_1}}$, $\eps_p=\sqrt{\frac{n-m}{L_{p,2}}}$ and $L_{p,1} = \ceil{\frac{m^3}{(n-m)^2 \eps_{p-1}^2}}$ for $p>1$. We set $L_1=\ceil{2^{-2P}\cdot (n-m)^{3\lambda_1}n^{-2\lambda_1}T^{1-\lambda_1}}$ and $L_{p,2}=\ceil{2^{-2P+2p-2}\cdot (n-m)^{3\lambda_p}n^{-2\lambda_p}T^{1-\lambda_p}}$ for $p>1$ where $\lambda_p=\frac{2^{P-p+1}-1}{2^{P+1}-1}$.
    From Lemma~\ref{lem:ub-l-kingb}, we know that $\E{\Delta_{\-{king}_{p}}}\leq \frac{10(n-m)\eps_p}{m}$. Combining the two subroutines together,
    \begin{align*}
        R_p & \leq L_{p,1}\cdot \E{\Delta_{\-{king}_{p-1}}} + \sqrt{2mL_{p,1}} + L_{p,2}\cdot \tp{\sqrt{\frac{2m}{L_{p,1}}} +  \E{\Delta_{\-{king}_{p-1}}}}+ \sqrt{2(n-m+2)L_{p,2}}\\
        &\leq \tp{L_{p,1} + 2L_{p,2}}\cdot \frac{10 (n-m)\eps_{p-1}}{m} + \sqrt{2mL_{p,1}} + \sqrt{2(n-m+2)L_{p,2}}\\
        &= O\tp{2^{-P+p} (n-m)^{1+\frac{2^{P}-2}{2^{P+1}-1}}n^{\frac{2-2^{P+1}}{2^{P+1}-1}} T^{\frac{2^P}{2^{P+1}-1}}}.
    \end{align*}

    For the exploitation phase, the regret can also be decomposed into the part of OSMD and the regret generated by $\Delta_{\k_P}$. Similarly, these terms can be bounded by $\E{\Delta_{\-{king}_{P}}}\cdot T + \sqrt{2m T}$. In total, we have
    \begin{align*}
        R(T)&\leq  L_1 + \sum_{p=2}^P R_p + \frac{10 (n-m)\eps_P}{m}\cdot T + \sqrt{2m T}\\
        &\leq  L_1 + \sum_{p=2}^P O\tp{2^{-P+p} (n-m)^{1+\frac{2^{P}-2}{2^{P+1}-1}}n^{\frac{2-2^{P+1}}{2^{P+1}-1}} T^{\frac{2^P}{2^{P+1}-1}}} + \frac{20 (n-m)\eps_P}{n}\cdot T + \sqrt{2m T}\\
        &\leq  \sum_{p=1}^P O\tp{2^{-P+p} (n-m)^{1+\frac{2^{P}-2}{2^{P+1}-1}}n^{\frac{2-2^{P+1}}{2^{P+1}-1}} T^{\frac{2^P}{2^{P+1}-1}}} + O\tp{(n-m)^{1+\frac{2^{P}-2}{2^{P+1}-1}}n^{\frac{2-2^{P+1}}{2^{P+1}-1}} T^{\frac{2^P}{2^{P+1}-1}}} + \sqrt{2m T}\\
        &=O\tp{(n-m)^{1+\frac{2^{P}-2}{2^{P+1}-1}}n^{\frac{2-2^{P+1}}{2^{P+1}-1}} T^{\frac{2^P}{2^{P+1}-1}}}.
    \end{align*}
\end{proof}

\section{Upper Bound Analysis for the Small Memory Case}\label{sec:ub-s}
In this section we present our algorithm and its analysis for the case when the memory $m<\frac{8n}{9}$. As previously mentioned, in the regime where $n-m=\Omega(n)$, increasing the memory contributes minimally to solving the BAR problem, whose sample complexity is of the same order as that of the BAI problem. Therefore, when $m$ is small, we can directly leverage existing BAI algorithms to construct our algorithm.

Our algorithm still follows the explore-then-commit framework. We aim to identify an $\eps_p$ optimal arm, $\k_p$, after the $p$-th pass, where $\eps_p$ decreases as $p$ increases. During the exploitation phase, we exploit $\k_P$. To this end, we first implement a streaming BAI algorithm in each pass. However, we cannot trivially use the BAI algorithm by feeding in every arriving arm since we have to account for the regret caused by the BAI algorithm itself. As a result, 
we incorporate an admission mechanism for each incoming arm. Only when the arm is not significantly inferior compared to the current best arm, do we allow it to participate in the competition of BAI. This ensures that the BAI process does not result in excessive regret. 

\subsection{A black box streaming BAI algorithm}

Our algorithm for streaming MAB  will use a streaming BAI algorithm as a black box and leave its construction and analysis in the appendix. In this section, we specify the properties needed for the streaming BAI algorithm. 

The streaming BAI problem we consider only contains a single pass. Let $\-{BAI}(\eps,\delta)$ be an $(\eps, \delta)$-PAC streaming algorithm. Let $\+S\subseteq[n]$ be the set of arms that are input into $\-{BAI}(\eps,\delta)$. This means that the arms in stream faced by $\-{BAI}(\eps,\delta)$ are only those arms in $\+S$. We allow $\+S$ to be a random set, and therefore, the size of $\+S$ is not known beforehand. Let $\a^*_{\+S}$ be the optimal arm in $\+S$.

\begin{proposition}\label{prop:BAIblackbox}
    There exists a $(\eps,\delta)$-PAC streaming algorithm $\-{BAI}(\eps,\delta)$ with input arm set $\+S$ satisfying:
    \begin{enumerate}
        \item \textbf{Correctness}: $\-{BAI}(\eps,\delta)$ returns an arm $\k\in \+S$ such that $\E{\mu_{\a^*_{\+S}} - \mu_{\k}}\leq \eps$.
        \item \textbf{Storage}: $\-{BAI}(\eps,\delta)$ uses a memory size of at most $m$. Furthermore, there is one position among $m$ memory slots specifically allocated for storing the newly arrived arm. 
        \item \textbf{Benchmark arm}: $\-{BAI}(\eps,\delta)$ maintains a good arm $\a^*_{\!{max}}$ in memory during its process and $\a^*_{\!{max}}$ always satisfies $\E{\mu_{k_1}-\mu_{\a^*_{\!{max}}}}\leq \eps$, where $k_i$ is the $i$-th arm fed into $\-{BAI}(\eps,\delta)$.
        \item \textbf{Regret guarantee}: When $\+S$ is a random set and for every $i$, $\mu-\E{\mu_{k_i}}\leq \eps'$ for some fixed numbers $\mu\in(0,1),\eps'\in(\eps,1)$, 
        the expected regret generated by the $\-{BAI}(\eps,\delta)$ process with regard to an arm with mean $\mu$ is bounded by $O\tp{\frac{n\eps'}{\eps^2}\tp{\log\tp{\frac{1}{\delta}} + \-{ilog}^{(m-1)}(n)}}$. \footnote{Here the arm with mean $\mu$ does not have to be in $\+S$. Let $\tau$ be the sample complexity of the BAI algorithm, $A_t$ be the index of arm played in round $t$ for $t\in[\tau]$. This regret is defined as $\E{\sum_{t=1}^{\tau} \mu-\mu_{A_t}}$. This expectation includes the randomness of the instance, the algorithm and random set $\+S$.}
    \end{enumerate}
\end{proposition}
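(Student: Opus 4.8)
The plan is to realize $\-{BAI}(\eps,\delta)$ as a recursive block-tournament streaming algorithm in the spirit of~\cite{AW20,MPK21,JHTX21}, augmented with two features that those pure-exploration routines do not provide out of the box: an explicitly maintained in-memory benchmark arm $\a^*_{\!{max}}$ whose guarantee holds at \emph{every} point of the stream, and a bound on the \emph{regret} (not merely the sample complexity) of the exploration measured against an arbitrary external mean $\mu$. The sample-complexity part is essentially known; the additions are what make the four items of the proposition.

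\textbf{Construction.} I would process the incoming arms in epochs of geometrically increasing length. Within an epoch the arms of that epoch are resolved by a recursive call to the $(m-1)$-slot version of the algorithm with a refined accuracy parameter, producing an epoch champion; the recursion bottoms out once at most $\-{ilog}^{(m-1)}(n)$ arms remain in an epoch, at which point they are all held in memory simultaneously and resolved by plain successive elimination. A single top-level slot holds the global champion $\a^*_{\!{max}}$, which after each epoch is updated by comparing it against the new epoch champion using a fixed budget of fresh pulls of each, and one further slot is permanently reserved for the arm currently being read from the stream. The failure probabilities and accuracy parameters of the epoch-level comparisons are chosen to decay summably across epochs, so that a union bound over all champion updates costs only $O(\delta)$ in failure probability while the incurred loss in mean reward telescopes to $O(\eps)$. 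The recursion depth is $m-1$, which is exactly what produces the $\-{ilog}^{(m-1)}(n)$ term, and the reserved reading slot gives Property 2.

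\textbf{Properties 1 and 3.} Conditioning on the $(1-O(\delta))$-probability event that no epoch-level comparison errs, the telescoping error bound shows that at every time the global champion $\a^*_{\!{max}}$ is within $\eps$ of the best arm seen so far; since $k_1$ is always among the arms seen, $\E{\mu_{k_1}-\mu_{\a^*_{\!{max}}}}\le\eps$ after converting the high-probability statement to an expectation using $\mu_i\in[0,1]$ (choose the internal confidence slightly smaller than $\delta$). The same bound applied at termination, where the stream is exhausted and $\k=\a^*_{\!{max}}$, gives $\E{\mu_{\a^*_{\+S}}-\mu_{\k}}\le\eps$, which is Correctness.

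\textbf{Property 4 (the main obstacle).} The crucial observation is that every arm the algorithm ever pulls is one of the fed-in arms $k_i$, and by hypothesis $\mu-\E{\mu_{k_i}}\le\eps'$. Writing the exploration regret as $\sum_i(\mu-\mu_{k_i})N_i$, where $N_i$ is the number of times $k_i$ is pulled, I would bound $\E{(\mu-\mu_{k_i})N_i}\le\eps'\,\E{N_i}$: the coarse way is to replace $N_i$ by the deterministic per-arm cap $\bar N_i$ coming from the sampling schedule, which early elimination only decreases and which is independent of the rewards of $k_i$; the sharp way is a coupling argument showing that $N_i$ is stochastically non-increasing in the gap $\mu-\mu_{k_i}$ (a worse arm is eliminated no later), so that gap and count are negatively associated. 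Summing over $i$ and using $\sum_i\E{N_i}=\E{\tau}=O\tp{\frac{n}{\eps^2}\tp{\log\frac1\delta+\-{ilog}^{(m-1)}(n)}}$, the worst-case sample-complexity bound of the construction above, yields the claimed $O\tp{\frac{n\eps'}{\eps^2}\tp{\log\frac1\delta+\-{ilog}^{(m-1)}(n)}}$. The delicate point, and the step I expect to require the most care, is carrying out this correlation/coupling argument while the input set $\+S$, hence the sequence $(k_i)$ and the realized sample counts, is itself random; everything else is a bookkeeping adaptation of the known streaming-BAI machinery.
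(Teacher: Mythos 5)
Your construction and the four-way division of the argument track the paper's proof (which instantiates $\-{BAI}(\eps,\delta)$ as the multi-level challenger algorithm of \cite{AW20} and \cite{MPK21}), but there is a genuine gap in how you establish Properties 1 and 3. You prove that $\a^*_{\!{max}}$ is within $\eps$ of the best arm seen so far on a good event of probability $1-O(\delta)$, and then ``convert the high-probability statement to an expectation using $\mu_i\in[0,1]$.'' That conversion only yields $\E{\mu_{k_1}-\mu_{\a^*_{\!{max}}}}\leq \eps + O(\delta)$, not $\leq\eps$. In the application (\Cref{sec:ub-s-MAB}) the subroutine is run with $\delta=\frac{1}{4}$ while $\eps=\eps_p$ is polynomially small in $T$, so the additive $O(\delta)$ term is fatal; and repairing it by driving the internal confidence down to $O(\eps)$ would replace $\log\frac{1}{\delta}$ by $\log\frac{1}{\eps}$ in the sample complexity, which then breaks the regret bound claimed in Property 4. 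The paper avoids this dichotomy entirely via a \emph{smooth failure probability} bound (Lemma~\ref{lem:prob-armL}): the probability that a retained champion is $\Delta$-suboptimal decays like $c_L\cdot\exp{-s_L\Delta^2/2}$ for \emph{every} $\Delta>0$ simultaneously, and integrating this tail (Lemma~\ref{lem:prob-to-exp}) gives an expected gap of at most $\eps\cdot 2^{-L}$ unconditionally, with no dependence on $\delta$. This tail-integration step is the missing idea; a single good/bad event split cannot produce an expectation bound of $\eps$ when $\delta\gg\eps$.

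For Property 4 your route also differs from the paper's and is more delicate than necessary. The paper needs no negative-association or coupling argument between the gap $\mu-\mu_{k_i}$ and the sample count $N_i$: it observes that every arm sampled at level $\ell\geq 2$ was a level-$(\ell-1)$ champion, hence $2\eps'$-optimal in expectation by the Property-1 machinery, and that the number of arms reaching level $\ell$ is at most the deterministic quantity $\big\lfloor n/\prod_{j<\ell}c_j \big\rfloor$; padding each level with virtual arms of mean $\mu$ makes the per-level populations deterministic, so linearity of expectation suffices even when $\+S$ is random. Your ``coarse way'' (a deterministic per-arm cap) is close in spirit, but as stated it does not confront the real issue: the arms that are pulled many times are precisely those that survive to high levels, and the paper's point is that survival itself certifies near-optimality in expectation, which is what is actually needed rather than stochastic monotonicity of $N_i$ in the gap.
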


The first property tells how well the BAI problem is solved by $\-{BAI}(\eps,\delta)$, which is the key to keep low regret in our multi-pass streaming MAB algorithm.

The second property is about the memory usage of $\-{BAI}(\eps,\delta)$. The reason for emphasizing that particular memory slot for the arriving arm is because this memory will be reused in the streaming MAB algorithm. This will be further elaborated on in the algorithm description of \Cref{sec:ub-s-MAB}.

The third property is needed here because we will actually use $\a^*_{\!{max}}$ as a benchmark to design the admission mechanism. This is further explained in the algorithm overview in \Cref{sec:ub-s-MAB}.

The last property describes the regret of the BAI algorithm. This property states that if all input arms are good compared to the arm with mean reward $\mu$, then the total regret with regard to this arm generated by that BAI process can be bounded. Together with the admission mechanism, we can guarantee the arms fed into $\-{BAI}(\eps,\delta)$ are comparable with the optimal arm and thus property $4$ indicates a regret upper bound for the BAI subroutines in our streaming MAB algorithm.

We will give a proof of \Cref{prop:BAIblackbox} in \Cref{sec:ub-s-BAI} by showing that a multi-level $(\eps,\delta)$-PAC algorithm designed in \cite{AW20} and \cite{MPK21} satisfies all these properties.

\subsection{The algorithm for streaming MAB}\label{sec:ub-s-MAB}
Now we describe the $P$-pass algorithm for streaming MAB with $n$ arms. The memory size is $m$ where $2\leq m< \frac{8n}{9}$. Let $r=\min\set{\lfloor\log^*(n+1)\rfloor, m-1}$. Define $\lambda_p=\frac{2^{P-p+1}-1}{2^{P+1}-1}$ for $p\in[P]$ and choose $\delta=\frac{1}{4}$. Let $\a^*$ be the optimal arm in stream. Let $\set{\eps_p}_{p\in[P]\cup \set{0}}$ be some parameters to be set later. 

\paragraph{Overview of the algorithm} Basically, our algorithm for streaming MAB uses $\-{BAI}(\eps,\delta)$ as a subroutine by feeding arms into $\-{BAI}(\eps_p,\delta)$ in the pass $p$ and exploit the $\eps_p$-best arm returned by the algorithm. However, as mentioned at the beginning of this section, we have to control the regret contributed by $\-{BAI}(\eps_p,\delta)$ itself, so we have to ensure those arms fed into $\-{BAI}(\eps_p,\delta)$ are not too bad. To this end, we first compare it with the best arm we found in the pass $p-1$ through a $\textsc{FindBest}$ subroutine described in \Cref{algo:bandit-subroutine}. Let $\k_{p-1}$ denote the arm returned from pass $p-1$. We want to use ``mean reward gap to $\k_{p-1}$'' as a metric to evaluate the quality of arms in the next pass, but we do not want to allocate additional memory specifically for storing $\k_{p-1}$. Therefore, we feed $\k_{p-1}$ as the first arm in pass $p$ to BAI($\eps_p,\delta$). 
The third property of $\-{BAI}(\eps_p,\delta)$ in \Cref{prop:BAIblackbox} ensures that $\a^*_{\!{max}}$ has comparable performance to $\k_{p-1}$. We actually use $\a^*_{\!{max}}$ as the filter to determine whether the newly arrived arm can enter the $\-{BAI}(\eps_p,\delta)$ process.

\paragraph{The description of the algorithm} 
Recall that $\-{ilog}^{(k)}(a)=\max\set{\log\tp{\-{ilog}^{(k-1)}(a)}, 1}$ for any $a\geq 1$, $k\in \bb N^+$. 
Set the following parameters:
\begin{itemize}
    \item $\eps_0=1$ and $\eps_p={2^{P-p+1}\cdot \tp{\frac{(n+1)\cdot \-{ilog}^{(m-1)}(n+1)}{T}}^{\frac{1-\lambda_p}{2}}}$ for each $p\in[P]$;
    \item  for each $p\in[P]$, set $s^{(p)} = \ceil{\frac{2^{5}}{\eps_p^2}\tp{\log \frac{2^{3}\cdot\-{ilog}^{(r-1)}(n+1)}{\delta}}}$, which is the sample times of the $\textsc{FindBest}$ subroutine in pass $p$.
\end{itemize}

The algorithm is described in \Cref{algo:small-m}. One detail to note is that the $\-{BAI}(\eps_p,\delta)$ process may have to deal with a stream with $n+1$ arms in the worst case, since $\k_{p-1}$ is actually the first arm the $\-{BAI}(\eps_p,\delta)$ process faces. Therefore, when applying the properties of $\-{BAI}(\eps_p,\delta)$ in subsequent analysis, we should regard its input arm set $S$ as a subset of $[n+1]$ rather than $[n]$. We also need to clarify that although \Cref{algo:small-m} seems to require an additional memory slot due to Line~\ref{line:readin}, the total memory size can actually be the same with the BAI algorithm. This is because we have designated one specific memory slot for storing the arriving arm in stream for $\-{BAI}(\eps,\delta)$ (property $2$ in \Cref{prop:BAIblackbox}). This position can be utilized to store the arm in Line~\ref{line:readin}.

Recall that we assume $P\leq \log\log T - \log\tp{12\log\frac{n}{n-m}}$ and $T\geq (n+1)^2$. For the sake of simplicity, our algorithm and analysis only consider the case where the game will not end after pass $P$. It is easy to verify that if the $T$-round game has already been completed before entering exploitation phase, our conclusions and analysis still hold.


\begin{algorithm}[ht]
    \caption{Multi-pass algorithm for MAB with small memory ($m\le \frac{8n}{9}$)}
    \label{algo:small-m}
    \Input{number of passes $P$, time horizon $T$}
\begin{algorithmic}[1]
    \State let $\k_0$ be the first arm in pass $1$
    \For{$p=1,2,\dots,P$}
        \State initialize a \Call{BAI}{$\eps_p,\delta$} instance and feed $\k_{p-1}$ into \Call{BAI}{$\eps_p,\delta$}
        \For{ each arriving arm $\a_i$}\label{line: new arm}
        \State read $\a_i$ into memory \label{line:readin}
        \State $a_i^{(p)}\gets$\Call{FindBest}{$\set{\a^*_{\!{max}}, \a_i}$,$s^{(p)}$}\label{line:callfindbest}
        \If{$a_i^{(p)}\neq \a_i$}
            \State go to Line~\ref{line: new arm}
        \Else
            \State feed $\a_i$ into \Call{BAI}{$\eps_p,\delta$}
            \State update $\a^*_{\!{max}}$ \Comment{$\a^*_{\!{max}}$ is the good arm maintained by \Call{BAI}{$\eps_p,\delta$}}
        \EndIf
        \EndFor
    \State $\k_p\gets$\Call{BAI}{$\eps_p,\delta$} 
    \EndFor
    \State pull $\k_P$ until the game ends
\end{algorithmic}
\end{algorithm}

The $\textsc{FindBest}$ subroutine is also used in designing the algorithm for the large memory case. We reiterate its effect below.
\begin{lemma}[Lemma~\ref{lem:ub-bandit-subroutine-simple} restated]\label{lem:findbest-perform}
Let $\a^*_{\+S}$ be the optimal arm in $\+S$. Then the $\textsc{FindBest}$ subroutine (\Cref{algo:bandit-subroutine}) guarantees that $\E{\mu_{\a^*_{\+S}} - \mu_{\a'}}\leq \sqrt{\frac{2\abs{\+S}}{L}}$, where $\a'$ is the arm returned by \Cref{algo:bandit-subroutine}.
\end{lemma}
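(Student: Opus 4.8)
The plan is to observe that \Cref{lem:findbest-perform} is a verbatim restatement of \Cref{lem:ub-bandit-subroutine-simple}, so the argument already given applies; for completeness I would re-derive it in two short steps, using the law of total expectation together with the OSMD regret bound of \Cref{prop:ub-l-OSMD}.

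First I would condition on the entire trajectory of the \textsc{MirrorDescent}$(\+S,L)$ call invoked inside \textsc{FindBest}. This trajectory determines the (random) pull counts $T_i$ for $i\in\+S$, which satisfy $\sum_{i\in\+S}T_i=L$. Conditioned on the trajectory, the returned arm $\a'$ is sampled with $\Pr{\a'=i}=T_i/L$, so the conditional expected gap equals $\frac1L\sum_{i\in\+S}T_i\,(\mu_{\a^*_{\+S}}-\mu_i)$, which is exactly $\frac1L$ times the pseudo-regret of the \textsc{MirrorDescent} run measured against the best arm $\a^*_{\+S}\in\+S$.

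Second I would take the expectation over the trajectory (and over $\+S$, if $\+S$ is a random set) and invoke \Cref{prop:ub-l-OSMD}: the expected pseudo-regret of OSMD with the prescribed parameters on any fixed arm set $\+S$ over $L$ rounds is at most $\sqrt{2\abs{\+S}L}$, and averaging this worst-case bound over a random $\+S$ preserves it. Dividing by $L$ yields $\E{\mu_{\a^*_{\+S}}-\mu_{\a'}}\le \sqrt{2\abs{\+S}/L}$, as claimed.

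There is essentially no obstacle here; the only point requiring care is the bookkeeping with the tower rule — the sampling weights $T_i/L$ are themselves random, so one must first fix the run before identifying $\sum_{i\in\+S}T_i(\mu_{\a^*_{\+S}}-\mu_i)$ with the bandit subroutine's regret, and only then apply \Cref{prop:ub-l-OSMD} unconditionally. No assumption is needed on how $\+S$ is generated, precisely because \Cref{prop:ub-l-OSMD} is a per-instance (worst-case) guarantee.
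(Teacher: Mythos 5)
Your argument is correct and is essentially the paper's own proof: the paper likewise writes $\E{\mu_{\a^*_{\+S}}-\mu_{\a'}}$ as $\frac{1}{L}\E{\sum_{i\in\+S}T_i(\mu_{\a^*_{\+S}}-\mu_i)}$ (your tower-rule step made implicit) and then applies \Cref{prop:ub-l-OSMD} to bound the expected pseudo-regret by $\sqrt{2\abs{\+S}L}$. Your explicit conditioning on the trajectory is just a cleaner bookkeeping of the same computation.
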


Recall that $\k_p$ is the output arm of the BAI($\eps_p,\delta$) procedure at the end of pass $p$ ($p\geq 1$). Lemma~\ref{lem: B-pass eps_b opt} demonstrates that with a filtering process to all arms in stream, we can still guarantee that the arm returned by BAI($\eps_p,\delta$) is, in expectation, a $2\eps_p$-optimal arm with regard to $\a^*$.

\begin{lemma}\label{lem: B-pass eps_b opt}
    At the end of pass $p$, we have $\E{\mu_{\a^*}- \mu_{\k_p}}\leq 2\eps_p$.
\end{lemma}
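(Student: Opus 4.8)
\textbf{Proof proposal for Lemma~\ref{lem: B-pass eps_b opt}.}

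The plan is to induct on $p$ and track the ``best arm found so far'' through the filtering mechanism. The key quantity is $\E{\mu_{\a^*} - \mu_{\k_p}}$, and the induction hypothesis will be $\E{\mu_{\a^*} - \mu_{\k_{p-1}}} \le 2\eps_{p-1}$, with the base case $p=1$ handled by the convention that $\k_0$ is simply the first arm in pass $1$ (so $\mu_{\a^*} - \mu_{\k_0} \le 1 = \eps_0$, which is all we need to feed the argument). First I would fix a pass $p$ and condition on the arm $\k_{p-1}$ returned from the previous pass. Inside pass $p$, the algorithm feeds $\k_{p-1}$ as the first arm $k_1$ into $\textsc{BAI}(\eps_p,\delta)$, and then each subsequent arm $\a_i$ in the stream is admitted only if the $\textsc{FindBest}$ subroutine on $\set{\a^*_{\!{max}}, \a_i}$ returns $\a_i$. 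I want to argue two things: (a) the optimal arm $\a^*$, if it arrives, is admitted into the $\textsc{BAI}$ instance with the ``right'' probability or, more precisely, the expected gap of the best admitted arm to $\a^*$ is small; and (b) the output $\k_p$ of $\textsc{BAI}(\eps_p,\delta)$ is within $\eps_p$ in expectation of the best arm that was actually fed in (this is property~1 of \Cref{prop:BAIblackbox}).

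The crux is handling the filtering. Let $\+S$ denote the (random) set of arms actually fed into $\textsc{BAI}(\eps_p,\delta)$ in pass $p$, and let $\a^*_{\+S}$ be the best arm in $\+S$. By property~1 of \Cref{prop:BAIblackbox}, $\E{\mu_{\a^*_{\+S}} - \mu_{\k_p}} \le \eps_p$. So it suffices to show $\E{\mu_{\a^*} - \mu_{\a^*_{\+S}}} \le \eps_p$. Here I split into cases according to whether $\a^*$ is admitted. If $\a^*$ is admitted, then $\mu_{\a^*_{\+S}} = \mu_{\a^*}$ and the gap is $0$. If $\a^*$ is not admitted, then when $\a^*$ arrived the $\textsc{FindBest}$ subroutine on $\set{\a^*_{\!{max}}, \a^*}$ returned $\a^*_{\!{max}}$ rather than $\a^*$; I would use \Cref{lem:findbest-perform} (with $\abs{\+S}=2$ and $L = s^{(p)}$) to bound the expected gap $\E{\mu_{\a^*} - \mu_{a_i^{(p)}}} \le \sqrt{2/s^{(p)}}$ where $a_i^{(p)}$ is the arm returned, and since $a_i^{(p)} = \a^*_{\!{max}} \in \+S$ in this event, this controls $\mu_{\a^*} - \mu_{\a^*_{\+S}}$ on this event. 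The choice $s^{(p)} = \ceil{\frac{2^5}{\eps_p^2}\log(\cdots)}$ makes $\sqrt{2/s^{(p)}} \le \eps_p/4$ or so, which gives the bound. I also need that at least one of $\k_{p-1}$ (always fed in) has bounded gap, which is where the induction hypothesis $\E{\mu_{\a^*}-\mu_{\k_{p-1}}} \le 2\eps_{p-1}$ enters — but note $2\eps_{p-1} = 4\eps_p > \eps_p$, so $\k_{p-1}$ alone is not good enough; the filtering + $\textsc{FindBest}$ bound is genuinely doing the work of improving the guarantee from $\eps_{p-1}$-scale to $\eps_p$-scale. Actually, I should be careful: the right statement is that $\a^*_{\+S}$ is good because \emph{either} $\a^*$ got in, \emph{or} it was rejected in favor of something comparable to it (via $\textsc{FindBest}$), and in the latter case that comparable thing is in $\+S$.

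I expect the main obstacle to be the bookkeeping around $\a^*_{\!{max}}$ and the timing of updates: $\a^*_{\!{max}}$ changes over the course of the pass, so ``the $\textsc{FindBest}$ call when $\a^*$ arrived compared $\a^*$ against the then-current $\a^*_{\!{max}}$'' requires knowing that $\a^*_{\!{max}}$ at that moment was itself already fed into $\textsc{BAI}$ (hence in $\+S$), which follows from property~2 and~3 of \Cref{prop:BAIblackbox} plus the fact that $\a^*_{\!{max}}$ is only updated to arms that were admitted. A second subtlety is that $\textsc{FindBest}$'s guarantee is only in expectation and only about the gap to the \emph{best of the two input arms}, so I must be careful that when I write $\E{\mu_{\a^*} - \mu_{a_i^{(p)}}\mid \a^* \text{ rejected}}$ I am entitled to apply \Cref{lem:findbest-perform} — this works because \Cref{lem:findbest-perform} holds for arbitrary (even adversarially correlated) input arms, so conditioning on the event that $\a^*$ is the arriving arm is fine. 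I would assemble these pieces: condition on the configuration at the time $\a^*$ arrives (if it does), apply \Cref{lem:findbest-perform} on that event, combine with the trivial bound $0$ on the complementary event, add the $\eps_p$ from property~1 of \Cref{prop:BAIblackbox}, and verify the arithmetic $\sqrt{2/s^{(p)}} + (\text{something}) \le \eps_p$ closes with the stated choice of $s^{(p)}$ and $\eps_p$. The induction on $p$ is then immediate since each pass's guarantee $\E{\mu_{\a^*}-\mu_{\k_p}} \le 2\eps_p$ feeds the next pass only through the (weak) requirement that $\k_{p-1}$ is some arm in the stream, which is automatic.
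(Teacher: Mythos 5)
Your proposal is correct and takes essentially the same route as the paper: decompose the gap at the moment $\a^*$ arrives in pass $p$, bound $\E{\mu_{\a^*}-\mu_{a_i^{(p)}}}$ by \Cref{lem:findbest-perform} (the paper gets $\sqrt{4/s^{(p)}}\le\eps_p$; your $\sqrt{2/s^{(p)}}$ drops a factor of $\abs{\+S}=2$ inside the root, which is immaterial), and then use that $a_i^{(p)}$ is always fed into $\textsc{BAI}(\eps_p,\delta)$ so property 1 of \Cref{prop:BAIblackbox} supplies the second $\eps_p$. The inductive scaffolding you set up is unnecessary — as you yourself observe, the per-pass argument never actually uses the hypothesis on $\k_{p-1}$, and the paper dispenses with it.
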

\begin{proof}

Let us consider the point when $\a^*$ arrives ($\a_i=\a^*$). Then
\[
    \E{\mu_{\a^*}-\mu_{\k_p}} = \E{\mu_{\a^*} - \mu_{a_i^{(p)}}} + \E{\mu_{a_i^{(p)}} - \mu_{\k_p}}.
\]
It follows from Lemma~\ref{lem:findbest-perform} that $\E{\mu_{\a^*} - \mu_{a_i^{(p)}}}\le \sqrt{\frac{4}{s^{(p)}}} \le \eps_p$.

To bound $\E{\mu_{a_i^{(p)}} - \mu_{\k_p}}$, we note that in either cases ($a_i^{(p)}=\a_{\!{max}}^*$ or $a_i^{(p)}=\a^*$ after Line~\ref{line:callfindbest}), $a_i^{(p)}$ has been, or will be fed into $\textsc{BAI}(\eps_p,\delta)$. Therefore, by property $1$ in \Cref{prop:BAIblackbox} of the BAI($\eps_p,\delta$) process, we have $\E{\mu_{a_i^{(p)}} - \mu_{\k_p}}\le \eps_p$.

\end{proof} 

The following lemma shows the role of the $\textsc{FindBest}$ subroutine in \Cref{algo:small-m}. With this operation, it is ensured that every arm fed into BAI($\eps_p,\delta$) is a $4\eps_{p-1}$-optimal arm in expectation, and then we can utilize its fourth property to bound the total regret.

\begin{lemma}\label{lem:Delta_aip}
    For any $p\in[P]$ and $i\in[n]$, assuming $T\geq (n+1)^2$, we have $\E{\mu_{\a^*} - \mu_{a_i^{(p)}}}\leq 4\eps_{p-1}$.
\end{lemma}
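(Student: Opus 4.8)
The goal is to show that every arm $a_i^{(p)}$ that "wins" the $\textsc{FindBest}$ contest against the benchmark $\a^*_{\!{max}}$ in pass $p$ is, in expectation, a $4\eps_{p-1}$-optimal arm. The plan is to chain three comparisons: from $\a^*$ down to $\k_{p-1}$ (the arm returned by the previous pass), from $\k_{p-1}$ down to $\a^*_{\!{max}}$ (the benchmark arm that $\textsc{BAI}(\eps_p,\delta)$ maintains), and from $\a^*_{\!{max}}$ down to $a_i^{(p)}$ (the output of the $\textsc{FindBest}$ call on Line~\ref{line:callfindbest}). Concretely I would write
\[
    \E{\mu_{\a^*} - \mu_{a_i^{(p)}}}
    = \E{\mu_{\a^*} - \mu_{\k_{p-1}}}
    + \E{\mu_{\k_{p-1}} - \mu_{\a^*_{\!{max}}}}
    + \E{\mu_{\a^*_{\!{max}}} - \mu_{a_i^{(p)}}}
\]
and bound each term separately. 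A small bookkeeping point: for $p=1$ the "previous pass" arm $\k_0$ is just the first arm in the stream, and $\eps_0 = 1$, so the bound $4\eps_0 = 4$ is trivially true since all rewards lie in $[0,1]$; thus I would dispatch $p=1$ first and then assume $p\ge 2$.

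For the first term, $\E{\mu_{\a^*} - \mu_{\k_{p-1}}} \le 2\eps_{p-1}$ is exactly Lemma~\ref{lem: B-pass eps_b opt} applied to pass $p-1$. For the second term, $\a^*_{\!{max}}$ is the benchmark arm of $\textsc{BAI}(\eps_p,\delta)$ and $\k_{p-1}$ is fed in as its \emph{first} arm $k_1$; property~3 of \Cref{prop:BAIblackbox} then gives $\E{\mu_{k_1} - \mu_{\a^*_{\!{max}}}} = \E{\mu_{\k_{p-1}} - \mu_{\a^*_{\!{max}}}} \le \eps_p$. For the third term, $a_i^{(p)}$ is the output of $\textsc{FindBest}(\set{\a^*_{\!{max}},\a_i},s^{(p)})$, so Lemma~\ref{lem:findbest-perform} with $|\+S|=2$ and $L=s^{(p)}$ yields $\E{\mu_{\a^*_{\!{max}}}-\mu_{a_i^{(p)}}} \le \E{\mu_{\a^*_{\set{\a^*_{\!{max}},\a_i}}}-\mu_{a_i^{(p)}}} \le \sqrt{4/s^{(p)}}$, and by the choice $s^{(p)} = \ceil{\frac{2^5}{\eps_p^2}\log\frac{2^3\cdot\-{ilog}^{(r-1)}(n+1)}{\delta}}$ this is at most $\eps_p$ (in fact much smaller). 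Summing, $\E{\mu_{\a^*}-\mu_{a_i^{(p)}}} \le 2\eps_{p-1} + 2\eps_p$, and since $\eps_p = 2^{P-p+1}(\cdots)^{(1-\lambda_p)/2}$ is decreasing in $p$ (using $\eps_{p}/\eps_{p-1}\le 1$, which needs $T\ge (n+1)^2$ so that the base $(n+1)\-{ilog}^{(m-1)}(n+1)/T\le 1$ and the exponent comparison works out), we get $\eps_p \le \eps_{p-1}$, hence $\E{\mu_{\a^*}-\mu_{a_i^{(p)}}}\le 4\eps_{p-1}$.

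The only subtlety — and the step I expect to require the most care — is the monotonicity claim $\eps_p \le \eps_{p-1}$, i.e. that $2^{P-p+1}x^{(1-\lambda_p)/2}$ is non-increasing in $p$ where $x = (n+1)\-{ilog}^{(m-1)}(n+1)/T \le 1$. Since $x\le 1$ and $\lambda_p$ is decreasing in $p$, the exponent $(1-\lambda_p)/2$ is increasing, so $x^{(1-\lambda_p)/2}$ is decreasing; combined with the factor $2^{P-p+1}$ halving at each step, the product decreases. The hypothesis $T\ge(n+1)^2$ (together with $\-{ilog}^{(m-1)}(n+1)$ being small relative to $n+1$ under the pass bound $P\le \log\log T - \log(12\log\frac{n}{n-m})$) is what guarantees $x\le 1$. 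I would state this monotonicity as a one-line observation; everything else is a direct substitution of the three cited lemmas/properties. Note the statement also implicitly needs the conditioning to be handled correctly — the expectations above are taken at the moment $\a_i$ is the arriving arm, and all three inequalities hold unconditionally (Lemma~\ref{lem:findbest-perform} and property~3 hold "whatever the input arms are"), so no conditioning argument is needed beyond what is already in Lemma~\ref{lem: B-pass eps_b opt}.
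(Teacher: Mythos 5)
Your proof is correct and follows essentially the same route as the paper: the identical three-term telescoping decomposition through $\k_{p-1}$ and $\a^*_{\!{max}}$, bounded respectively by Lemma~\ref{lem: B-pass eps_b opt}, property~3 of \Cref{prop:BAIblackbox}, and Lemma~\ref{lem:findbest-perform}, giving $2\eps_{p-1}+2\eps_p\le 4\eps_{p-1}$. Your explicit treatment of the $p=1$ base case and the monotonicity $\eps_p\le\eps_{p-1}$ only spells out details the paper leaves implicit.
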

\begin{proof}
    At the time that $\a_i$ arrives in pass $p$, we have 
    \[
        \E{\mu_{\a^*} - \mu_{a_i^{(p)}}} = \E{\mu_{\a^*} - \mu_{\k_{p-1}}} + \E{\mu_{\k_{p-1}} - \mu_{\a^*_{\!{max}}}} + \E{\mu_{\a^*_{\!{max}}} - \mu_{a_i^{(p)}}}.
    \]
    From Lemma~\ref{lem: B-pass eps_b opt}, we have $\E{\mu_{\a^*} - \mu_{\k_{p-1}}}\leq 2\eps_{p-1}$. Since $\k_{p-1}$ is the first arm fed into BAI($\eps_p,\delta$), applying the third property of \Cref{prop:BAIblackbox}, we have $\E{\mu_{\k_{p-1}} - \mu_{\a^*_{\!{max}}}}\leq \eps_p$. From Lemma~\ref{lem:findbest-perform}, $\E{\mu_{\a^*_{\!{max}}} - \mu_{a_i^{(p)}}}\leq \sqrt{\frac{4}{s^{(p)}}} \le \eps_p$. Combining all above, $\E{\mu_{\a^*} - \mu_{a_i^{(p)}}}\leq 2\eps_{p-1} + 2\eps_p\leq 4\eps_{p-1}$.
\end{proof}

Then we are ready to bound the regret of \Cref{algo:small-m}.
\begin{theorem}
Assume $T\geq (n+1)^2$. For any input instance, the expected regret of \Cref{algo:small-m} in pass p is 
$$
    O\tp{2^{-P+p}\cdot n^{\frac{2^P-1}{2^{P+1}-1}}T^{\frac{2^P}{2^{P+1}-1}} \cdot \tp{\-{ilog}^{(m-1)}(n)}^{\frac{2^P-1}{2^{P+1}-1}}}
$$

Furthermore, the total regret of \Cref{algo:small-m} satisfies
$$
    R(T)\leq O\tp{n^{\frac{2^P-1}{2^{P+1}-1}}T^{\frac{2^P}{2^{P+1}-1}} \cdot \tp{\-{ilog}^{(m-1)}(n)}^{\frac{2^P-1}{2^{P+1}-1}}}.
$$
\end{theorem}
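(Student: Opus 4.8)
The plan is to decompose the total regret as
\[
    R(T) = \sum_{p=1}^{P} R_p + R_{\mathrm{expl}},
\]
where $R_p$ is the expected regret incurred during pass $p$ of the exploration phase and $R_{\mathrm{expl}}$ is the regret of the exploitation phase, and then to show that every piece is dominated by the $\-{BAI}$ contribution of the last pass (the hypotheses $T\ge(n+1)^2$ and $P\le\log\log T-\log(12\log\frac{n}{n-m})$ guarantee each $\eps_p\in(0,1)$ and that the exploration phase fits within $T$ rounds). The exploitation phase is easy: there we pull $\k_P$ for at most $T$ rounds, so $R_{\mathrm{expl}}\le T\cdot\E{\mu_{\a^*}-\mu_{\k_P}}\le 2\eps_P T$ by \Cref{lem: B-pass eps_b opt}. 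Substituting the value of $\eps_P$, with $\lambda_P=\frac{1}{2^{P+1}-1}$ so that $\frac{1-\lambda_P}{2}=\frac{2^P-1}{2^{P+1}-1}$ and $1-\frac{2^P-1}{2^{P+1}-1}=\frac{2^P}{2^{P+1}-1}$, gives $R_{\mathrm{expl}}=O\tp{n^{\frac{2^P-1}{2^{P+1}-1}}T^{\frac{2^P}{2^{P+1}-1}}\tp{\-{ilog}^{(m-1)}(n)}^{\frac{2^P-1}{2^{P+1}-1}}}$, which already matches the target order.

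For the regret of pass $p$ I would split $R_p$ into (a) the regret generated by the $\textsc{FindBest}$ filters run on the arriving arms, and (b) the regret generated by the single $\-{BAI}(\eps_p,\delta)$ instance of that pass. For (a): at most $n$ arms arrive, each triggering one call $\textsc{FindBest}\tp{\set{\a^*_{\!{max}},\a_i},s^{(p)}}$ of length $s^{(p)}$, and by \Cref{prop:ub-l-OSMD} the regret of such a call relative to $\max\set{\mu_{\a^*_{\!{max}}},\mu_{\a_i}}$ is $O\tp{\sqrt{s^{(p)}}}$; combining \Cref{lem: B-pass eps_b opt} with property~3 of \Cref{prop:BAIblackbox} (which holds at every point of the pass) gives $\E{\mu_{\a^*}-\mu_{\a^*_{\!{max}}}}\le 2\eps_{p-1}+\eps_p=O(\eps_{p-1})$ (here one uses $T\ge(n+1)^2$ to note $Q\defeq(n+1)\-{ilog}^{(m-1)}(n+1)/T\le 1$, hence $\eps_p\le\eps_{p-1}$), so (a) is $O\tp{n\sqrt{s^{(p)}}+n\eps_{p-1}s^{(p)}}$. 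For (b): \Cref{lem:Delta_aip} shows that every arm fed into $\-{BAI}(\eps_p,\delta)$ is $4\eps_{p-1}$-optimal in expectation with respect to $\a^*$, so property~4 of \Cref{prop:BAIblackbox} applies with $\mu=\mu_{\a^*}$ and $\eps'=4\eps_{p-1}$, yielding (b)$=O\tp{\frac{n\eps_{p-1}}{\eps_p^2}\tp{\log\frac{1}{\delta}+\-{ilog}^{(m-1)}(n)}}$.

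It then remains to substitute the concrete parameters. Using $s^{(p)}=\Theta\tp{\eps_p^{-2}\log\tp{\-{ilog}^{(r-1)}(n+1)/\delta}}$ with the constant $\delta=\frac14$, the identity $2\lambda_p-\lambda_{p-1}=-\frac{1}{2^{P+1}-1}$ (from which $\eps_{p-1}/\eps_p^2=2^{p-P}\cdot Q^{-2^P/(2^{P+1}-1)}$), and $Q^{-2^P/(2^{P+1}-1)}=\Theta\tp{\tp{n\cdot\-{ilog}^{(m-1)}(n)}^{-2^P/(2^{P+1}-1)}T^{2^P/(2^{P+1}-1)}}$, term (b) collapses to $O\tp{2^{-P+p}n^{\frac{2^P-1}{2^{P+1}-1}}T^{\frac{2^P}{2^{P+1}-1}}\tp{\-{ilog}^{(m-1)}(n)}^{\frac{2^P-1}{2^{P+1}-1}}}$, which is exactly the claimed per-pass bound. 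One then checks that (a) is no larger by trading the extra factor $T^{(1-\lambda_p)/2}$ hidden in $n\sqrt{s^{(p)}}$ against a power of $n$ using $T\ge(n+1)^2$ (and the boundary case $p=1$, where $\eps_0=1$, is handled the same way). Summing over $p\in[P]$, the $2^{-P+p}$ prefactors form a geometric series of total $O(1)$ times the $p=P$ term, and adding $R_{\mathrm{expl}}$ of the same order gives the stated total regret.

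The step I expect to be the main obstacle is the bookkeeping of the iterated-logarithm factors: one must reconcile $\-{ilog}^{(r-1)}(n+1)$ (which enters through $s^{(p)}$, where $r=\min\set{\lfloor\log^*(n+1)\rfloor,m-1}$) with the $\-{ilog}^{(m-1)}(n)$ appearing both in property~4 of \Cref{prop:BAIblackbox} and in the final bound, absorbing the $+1$ shifts and the $\log(1/\delta)$ into absolute constants, before one can cleanly certify that contributions (a), the exploitation term, and the $p=1$ term are all dominated by the last-pass $\-{BAI}$ term.
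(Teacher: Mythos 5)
Your proposal is correct and follows essentially the same route as the paper's proof: the same decomposition of each pass into the $\textsc{FindBest}$-filter regret (bounded via Proposition~\ref{prop:ub-l-OSMD} and the $O(\eps_{p-1})$ bound on $\E{\mu_{\a^*}-\mu_{\a^*_{\!{max}}}}$) plus the $\-{BAI}(\eps_p,\delta)$ regret (bounded via Lemma~\ref{lem:Delta_aip} and property~4 of Proposition~\ref{prop:BAIblackbox}), the same $2\eps_P T$ bound for exploitation, and the same parameter substitution and geometric summation over $p$. The extra detail you supply (the identity $2\lambda_p-\lambda_{p-1}=-\frac{1}{2^{P+1}-1}$ and the check that the filter term is dominated) is exactly the "direct calculation" the paper leaves implicit.
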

\begin{proof}
    Property $3$ in \Cref{prop:BAIblackbox} indicates $\E{\mu_{\k_{p-1}}-\mu_{\a^*_{\!{max}}}}\leq \eps_{p}$. Together with Lemma~\ref{lem: B-pass eps_b opt}, for each $\a^*_{\!{max}}$, we have
    \[
        \E{\mu_{\a^*}-\mu_{\a^*_{\!{max}}}}\leq \E{\mu_{\a^*}-\mu_{\k_{p-1}}} + \E{\mu_{\k_{p-1}}-\mu_{\a^*_{\!{max}}}} \leq 2\eps_{p-1} + \eps_p< 3\eps_{p-1}.
    \]
    
    Denote the expected regret of the $n$ $\textsc{FindBest}$ processes in pass $p$ as $R^{(p)}_{\textsc{FindBest}}$. Then it can be decomposed into two parts: the regret of each $\textsc{MirrorDescent}$ subroutine and the regret generated from the gap $\E{\mu_{\a^*}-\mu_{\a^*_{\!{max}}}}$. Therefore, $R^{(p)}_{\textsc{FindBest}}$ satisfies
\[
    R^{(p)}_{\textsc{FindBest}}\leq n\sqrt{4s^{(p)}} + ns^{(p)}\cdot 3\eps_{p-1}.
\]

    Then we consider the regret of BAI($\eps_p,\delta$), denoted as $R^{(p)}_{\textsc{BAI}}$. With \Cref{lem:Delta_aip}, we can utilize property $4$ in \Cref{prop:BAIblackbox} to deduce that
    \begin{align*}
        R^{(p)}_{\textsc{BAI}}&\leq O\tp{\frac{n\eps_{p-1}}{\eps_p^2}\tp{\log \tp{\frac{1}{\delta}} +\-{ilog}^{(m-1)}(n)}}\\
        &\leq O\tp{\frac{n\eps_{p-1}}{\eps_p^2}{\-{ilog}^{(m-1)}(n)}}.
    \end{align*}
    Let $R_p$ be the expected regret of pass $p$. Since $\lambda_p=\frac{2^{P-p+1}-1}{2^{P+1}-1}$ and $\eps_p={2^{P-p+1}\cdot \tp{\frac{(n+1)\cdot \-{ilog}^{(m-1)}(n+1)}{T}}^{\frac{1-\lambda_p}{2}}}$, by direct calculation,
    \[
        R_p=R^{(p)}_{\textsc{FindBest}} + R^{(p)}_{\textsc{BAI}} \leq O\tp{2^{-P+p}\cdot n^{\frac{2^P-1}{2^{P+1}-1}}T^{\frac{2^P}{2^{P+1}-1}} \cdot \tp{\-{ilog}^{(m-1)}(n)}^{\frac{2^P-1}{2^{P+1}-1}}}.
    \]

For the exploitation phase, the expected regret $R_{\!{expt}}$ can be bounded by $T\cdot \E{\mu_{\a^*}-\mu_{\k_P}}\leq 2\eps_P T$. Therefore, the total regret satisfies
\begin{align*}
    R(T)&= \sum_{p=1}^P R_p + R_{\!{expt}} \\
    &\leq \sum_{p=1}^P O\tp{2^{-P+p}\cdot n^{\frac{2^P-1}{2^{P+1}-1}}T^{\frac{2^P}{2^{P+1}-1}} \cdot \tp{\-{ilog}^{(m-1)}(n)}^{\frac{2^P-1}{2^{P+1}-1}}} + 2\eps_P T\\
    &=O\tp{n^{\frac{2^P-1}{2^{P+1}-1}}T^{\frac{2^P}{2^{P+1}-1}} \cdot \tp{\-{ilog}^{(m-1)}(n)}^{\frac{2^P-1}{2^{P+1}-1}}}.
\end{align*}

\end{proof}

\section{Informal Elaboration on the Lower Bounds}\label{sec:lb-informal}
Before delving into the proof of the regret lower bound, we first provide an informal elaboration on how we obtain lower bounds in \Cref{thm:lb-regret} on the special case that the memory size $m=n-1$. Similar to the case of the upper bound, our treatment for this special case already showcases our main idea for establishing lower bounds. The rigorous proof for the general lower bounds, i.e., the proof of \Cref{thm:lb-regret}, will be presented in the next section. 


Recall that we say a pass ends once the last arm in the pass is read into the memory. This means that any exploitation of the arms in the memory after this point is treated as samples in the next pass. For the ease of expression, we make the same assumption as in \Cref{sec:ub-l-simple} that the player can remember the identity of discarded arms. This only strengthens our lower bound result.

Note that many ambiguous terms such as ``tell apart'', ``distinguish'' used in the explanation below will be made rigorous in \Cref{sec:lb}, mainly via a \emph{likelihood argument}.


\subsection{Single pass}
We start from the simplest case, a lower bound for a single pass, i.e., $P=1$. We first look at what our \Cref{algo:large-m} does in this case. Generally speaking, since there is only one pass, the algorithm read in the first $m=n-1$ arms into the memory, explore them for $L$ rounds, and then drop a poor-performing arm to read in the last one. It then uses an optimal MAB algorithm to play for the remaining $T-L$ rounds. Clearly there is a trade-off for picking the parameter $L$:
\begin{itemize}
    \item In instances where the optimal arm comes at last, the main regret is caused by the first $L$ rounds, which can be bounded by $O\tp{L}$. 
    \item When the optimal arm comes among the first $m$ arms, one has to ensure that the optimal arm is not dropped after the first $L$ rounds. This is a BAR task. We showed that $L$ samples are sufficient to retain a good arm with expected mean reward gap of $\frac{1}{n}\cdot \frac{1}{\sqrt{L}}$ compared to $\a^*$. The main regret in these instances is due to the failure to retain $\a^*$, which is bounded by $O\tp{ \frac{1}{n\sqrt{L}}\cdot(T-L)}$. 
\end{itemize}

To balance the two cases, we choose $L\approx \tp{\frac{T}{n}}^{\frac{2}{3}}$ to minimize the worst case regret:
\begin{align*}
    &\phantom{{}={}}\min_{L} \max\set{O\tp{L}, O\tp{\frac{1}{n}\cdot \frac{(T-L)}{\sqrt{L}} }}  \approx O\tp{\tp{T/n}^{\frac{2}{3}}}.
\end{align*}

Then we analyze this strategy from the perspective of lower bounds. We claim that such a strategy with $L\approx \tp{{T}/{n}}^{\frac{2}{3}}$ is necessary. Let $\eps\approx \frac{1}{\sqrt{L}}$. We choose a number $j\in [n-1]$ uniformly at random and construct an instance $H$ such that the $n$ input arms have Bernoulli rewards with mean $\frac{1}{2},\dots,\frac{1}{2},\frac{1}{2}+\eps,\frac{1}{2},\dots, \frac{1}{2},1$ respectively where the $j$-th arm has mean $\frac{1}{2}+\eps$. In other words, in $H$, the first $j-1$ arms and  the $(j+1)$-th to $(n-1)$-th arms are fair coins, and the $j$-th arm is slightly better while the last arm in the stream is the optimal arm which has mean reward $1$. We construct another instance $H'$ by replacing the last arm of $H$ in stream with a fair coin.

Note that no algorithm can tell apart $H$ and $H'$ without seeing the last arm in the stream. Therefore, if the player samples for more than $L$ rounds before the first dropping in expectation, the regret on $H'$ is $\Omega\tp{L}\approx\Omega\tp{\tp{{T}/{n}}^{\frac{2}{3}}}$. However, if the sample time is less than $L\approx \frac{1}{\eps^2}$, the probability of failing to distinguish $\a_j$ with a fair coin is $\Omega(1)$. Since the biased arm is concealed among $m=n-1$ arms on $H'$, the probability of failing to retain that arm is approximately $\Omega\tp{\frac{1}{n}}$. So the total regret generated on $H'$ is $\Omega\tp{\frac{1}{n}\cdot \eps (T-L)}\approx\Omega\tp{\tp{{T}/{n}}^{\frac{2}{3}}}$. This lower bound analysis in turn prescribes a reasonable behavior to achieve low regret, which is exactly the same with our algorithm.

\subsection{Two-pass}
We can generalize the above arguments to the $P=2$ case. First examine the behavior of an algorithm for the two-pass streaming MAB. The algorithm still works in an explore-drop way. The crucial difference is that when allowed to see the arms for two passes, a clever player should explore less in the first pass to avoid the regret incurred in case that the optimal arm comes at last. Conversely, the player can explore more in the second pass, since after one pass, there is a reasonable probability that the optimal arm is already in the memory.

In general, our algorithm plays $L_1$ rounds in the first pass, drop one arm to incorporate the last arm in the stream and start the exploration of the second pass. In the second pass, we will explore the $m$ arms in memory for $L_2$ rounds and then drop one among them to read in the remaining arm in the stream. Then the algorithm performs an optimal $(T-L_1-L_2)$-round MAB algorithm with arms in the memory. 

For those instances where a significantly better arm comes at last in the first pass, the regret is mainly contributed by the first $L_1$ rounds, which is bounded by $O\tp{L_1}$.
For other instances, we have proved that a good arm with expected mean reward gap smaller than $\frac{1}{n}\cdot \frac{1}{\sqrt{L_p}}$ compared to $\a^*$ will be retained at the end of pass $p$ ($p=1,2$). There are two possible situations: 
\begin{itemize}
    \item The main regret is caused by the $L_2$ rounds of exploration in pass two, which is due to dropping $\a^*$ by mistake in pass one. We can bound this by $O\tp{\frac{1}{n\sqrt{L_1}}\cdot L_2}$;
    \item The main regret is generated during the exploitation due to dropping $\a^*$ by mistake in pass two. This regret is bounded by $O\tp{\frac{1}{n\sqrt{L_2}}\cdot (T-L_1-L_2)}$.
\end{itemize}
Similar to the single pass case, we will choose $L_1$ and $L_2$ to minimize
\[
    \max\set{O\tp{L_1}, O\tp{\frac{L_2}{n\sqrt{L_1}}}, O\tp{\frac{(T-L_1-L_2)}{n\sqrt{L_2}}}}.
\]
By direct calculation, the optimal choice is $L_1\approx {T^{\frac{4}{7}}}/{n^{\frac{6}{7}}}$ and $L_2\approx {T^{\frac{6}{7}}}/{n^{\frac{2}{7}}}$, resulting in a regret bound of $O\tp{T^{\frac{4}{7}}/n^{\frac{6}{7}}}$.

Then we argue that this kind of strategy is indeed optimal by providing a family of hard instances. Choose $\eps_0=\frac{1}{2}$, $\eps_1\approx\frac{1}{\sqrt{L_1}}$ and $\eps_2\approx\frac{1}{\sqrt{L_2}}$. Our hard instances consist of arms with Bernoulli rewards. That is, in each instance the $i$-th arm has rewards drawn from a Bernoulli distribution with parameter $\mu_i$. For $p=0,1,2$ and $j\in[n]$, we construct the instance $H_{j}^{(p)}$ with $n$ arms as: the $j$-th arm has $\mu_j=\frac{1}{2} + \eps_p$; for every $k\in[n]\setminus\set{j}$, $\mu_k=\frac{1}{2}$. 


Note that this index does not necessarily represent the order of arms in stream. For $H_j^{(p)}$, we require the arms in each of the first $p$ passes to come in an order from arm $1$ to arm $n$ (here the order refers to the order of arm identity, or equivalently, arm index). Furthermore, the optimal arm, arm $j$, should arrive \emph{at the end} in the $p+1$-th pass (if $p<2$). The order of arms of this instance in the other passes can be arbitrary\footnote{We slight abuse notation here. Ideally $H_j^{(p)}$ should be a collection of instances since the orders in passes $p+2,\dots,P$ are not fixed. However, when we say an instance $H_j^{(p)}$, we mean \emph{one instance among} this collection of instances where the order and types of arms in each pass fulfilling the requirements specified before.}. We use $\wh L_1,\wh L_2$ to denote the actual number of explorations performed by an algorithm in pass one and pass two respectively. 


Let $\+E_1$ (or $\+E_2$) be the event that the optimal arm is dropped at the end of pass one (or pass two).
Our reasoning for the lower bound goes as follows. We assume that a given algorithm can achieve $c'\cdot{T^{\frac{4}{7}}/{n^{\frac{6}{7}}}}$ minimax regret for some small universal constant $c'>0$. Then we analyze the behavior of this algorithm on the hard instances. For some universal constant $c>0$ (depending on $c'$), the following holds:

\paragraph{[A regret of $c'\cdot T^{\frac{4}{7}}/{n^{\frac{6}{7}}}$ $\implies $ $\wh L_1\le c\cdot L_1$ on $H_n^{(0)}$ whp]}\footnote{The term ``whp'' stands for ``with high probability'' here and below, while its precise meaning will be clean in \Cref{sec:lb}} Since the algorithm can achieve $c'\cdot T^{\frac{4}{7}}/{n^{\frac{6}{7}}}$ regret, we have $\wh L_1\le c\cdot L_1$ on $H_n^{(0)}$. Otherwise, if $\wh L_1> c\cdot L_1$, the exploration phase in pass one would incur a regret of $\frac{L_1}{2} = \frac{c}{2}\cdot {T^{\frac{4}{7}}}/{n^{\frac{6}{7}}}$ on $H_n^{(0)}$, which is a contraction by picking $c\ge 2c'+1$.

\paragraph{[$\wh L_1\le c\cdot L_1$ on $H_n^{(0)}$ $\implies $ $\wh L_1\le c\cdot L_1$ on both $H_j^{(1)}$ and $H_j^{(2)}$ for some $j\in [n]$ whp]} 
The reasoning here is similar to the single pass case. Any algorithm cannot distinguish between arms with mean reward $\frac{1}{2}$, $\frac{1}{2}+\eps_1$ and $\frac{1}{2}+\eps_2$ with sufficiently large probability in the $\wh L_1\le c\cdot L_1\approx \frac{c}{\eps_1^2}$ samples when $c$ is a small enough constant. As a result, $\wh L_1\le c\cdot L_1$ also holds for $H_j^{(1)}$ and $H_j^{(2)}$ for some $j\in [n]$ whp.


\paragraph{[$\wh L_1\le c\cdot L_1$ on $H_j^{(1)}$ $\implies $ $\wh L_2\le c\cdot L_2$ on $H_j^{(1)}$ conditioned on $\+E_1$]} On the other hand, for small enough $c$, if the number of samples $\le c\cdot L_1\approx \frac{c}{\eps_1^2}$ before the first dropping in pass one, the algorithm cannot recognize the biased arm with large enough probability on $H_j^{(1)}$ and the bad event $\+E_1$ happens with probability $\Omega\tp{\frac{1}{n}}$. Once $\+E_1$ happens, each round of exploration in pass two will incur a regret of $\eps_1$ in expectation since by our construction, the optimal $\frac{1}{2}+\eps_1$ is at the end of pass two. By picking an appropriate $c$, the number of explorations in pass two cannot exceed $c\cdot L_2$ on $H_j^{(1)}$ conditioned on $\+E_1$. Otherwise, the exploration phase in pass two will incur $\approx \frac{c}{n}\cdot \eps_1 \cdot L_2 > c'\cdot T^{\frac{4}{7}}/n^{\frac{6}{7}}$ regret on $H_j^{(1)}$. 


\paragraph{[$\wh L_2\le c\cdot L_2$ on $H_j^{(1)}$ conditioned on $\+E_1\implies\wh L_2\le c\cdot L_2$ on $H_j^{(2)}$ whp]} Note that the $c\cdot L_1$ samples in pass one help little to distinguish $H_j^{(1)}$ and $H_j^{(2)}$. When $\+E_1$ happens, at the beginning of pass two, the arms in memory are all fair coins on $H_j^{(1)}$. For small enough $c$, $c\cdot L_2$ samples are not enough to tell apart $H_j^{(2)}$ and an all fair coin instance. In other words, the algorithm cannot decide whether $\+E_1$ has happened or the input is actually $H_j^{(2)}$. Since we have $\wh L_2 \le c\cdot L_2$ on $H_j^{(1)}$ conditioned on $\+E_1$, this bound also holds for $H_j^{(2)}$.

\paragraph{[$\wh L_1\le c\cdot L_1$ and $\wh L_2\le c\cdot L_2$ on $H_j^{(2)}$ $\implies$ a regret of $\Omega\tp{T^{\frac{4}{7}}/{n^{\frac{6}{7}}}}$ on $H_j^{(2)}$ whp]} For small enough $c$, with the total samples bounded by $c\cdot(L_1+L_2)\approx \frac{c}{\eps_2^2}$, the bad event $\+E_2$ will happen with probability $\Omega\tp{\frac{1}{n}}$ on $H_j^{(2)}$. This causes a regret of $\Omega\tp{\frac{1}{n}\cdot \eps_2 (T-L_1-L_2)}\ge c'\cdot\tp{{T^{\frac{4}{7}}}/{n^{\frac{6}{7}}}}$ in the final exploitation phase on $H_j^{(2)}$ by picking an appropriate $c$.

\bigskip
\noindent Noting that we allow $c'$ to be an arbitrarily small constant, it is clear that we can always find an appropriate constant $c>0$ fulfilling all requirements above.

Summarizing, we prove that for some instance $H_j^{(2)}$, any algorithm with optimal regret must behave exactly as \Cref{algo:large-m} on $H_j^{(2)}$ in the sense that the number of explorations in pass $p$ satisfies $\wh L_p \approx L_p$ for $p=1,2$. On the other hand, the algorithm has $\Omega(T^{\frac{4}{7}}/n^{\frac{6}{7}})$ regret on this instance. 

\subsection{Multi-pass}
We then consider the general multi-pass case. This is a direct generalization of the two-pass argument above. 

For the upper bound, we play $L_p$ rounds in pass $p$ for each $p\in[P]$, where $L_p$ is set to be increasing in $p$. If some instances have an outstanding arm (whose mean reward is larger than the second best one by $\Theta(1)$) comes at the end of pass one, the regret generated by the explorations in pass one will be $O(L_1)$. Furthermore, this constitutes a major portion of the total regret for these instances since the explorations in subsequent passes are sufficient to identify and retain that outstanding arm.

For an instance where the reward gap $\Delta$ between the best and second-best arm is smaller, there also exists a specific pass $p$ among the $P$ passes, which is challenging for this instance (The value of $p$ depends on $\Delta$. The smaller $\Delta$ is, the larger $p$ is). Intuitively, the regret before the $p$-th pass, $O\tp{\Delta\cdot\tp{\sum_{i=1}^{p-1} L_i}}$, is negligible because $\Delta$ is relatively small. The regret after that pass is also a small part since subsequent samples are enough to recognize the gap $\Delta$ and thus can tell apart the optimal arm and other suboptimal ones. We only need to consider the regret of pass $p$ caused by dropping the optimal arm by mistake in pass $p-1$. 
We are guaranteed that at the end of pass $p-1$, the best arm in memory has expected mean reward gap of at most $\frac{1}{n\sqrt{L_{p-1}}}$ compared to $\a^*$. Therefore, the regret for such instances would be $O\tp{\frac{1}{n\sqrt{L_{p-1}}}\cdot L_p}$. Similarly, for those instances that the main regret is generated during the exploitation phase due to discarding the optimal arm at the end of pass $P$, the regret is $O\tp{\frac{1}{n\sqrt{L_{P}}}\cdot (T-\sum_{p=1}^P L_p)}$.

Thus, the algorithm should choose $L_1,\dots,L_P$ to minimize
\[
    \max\set{O\tp{L_1}, O\tp{\frac{1}{n\sqrt{L_{1}}}\cdot L_2},\dots, O\tp{\frac{1}{n\sqrt{L_{P-1}}}\cdot L_P}, O\tp{\frac{1}{n\sqrt{L_{P}}}\cdot (T-\sum_{p=1}^P L_p))}}.
\]
This indicates the optimal choice is $L_p\approx n^{-2\lambda_p}T^{1-\lambda_p}$ where $\lambda_p\approx\frac{2^{P-p+1}-1}{2^{P+1}-1}$. The total regret bound is $O\tp{n^{\frac{2-2^{P+1}}{2^{P+1}-1}}T^{\frac{2^{P}}{2^{P+1}-1}}}$.

To see the lower bound, we consider $P+1$ families of Bernoulli instances. Let $\eps_p=\frac{1}{\sqrt{L_p}}$ for $p\in[P]$ and $\eps_0=\frac{1}{2}$. For $p\in[P]\cup\set{0}$ and $j\in[n]$, we construct the instance $H_{j}^{(p)}$ with $n$ arms as: the $j$-th arm has mean $\mu_j=\frac{1}{2} + \eps_p$; for every $k\in[n]\setminus\set{j}$, $\mu_k=\frac{1}{2}$. 

For $H_j^{(p)}$, we require the arms in each of the first $p$ passes to come in an order from arm $1$ to arm $n$ (here the order refers to the order of arm identity, or equivalently, arm index). In pass $p+1$ (if $p<P$), we require the optimal arm, arm $j$, arrives at the end in stream for $H_j^{(p)}$. The order of arms for $H_j^{(p)}$ in the other passes can be arbitrary.
 Similar to the $2$-pass case, we use $\wh L_p$ to denote the actual number of explorations performed by an algorithm in pass $p$. Let $\+E_p$ be the event that the optimal arm is dropped at the end of pass $p$. We apply the following inductive approach based on the same logic with $2$-pass case. For sufficiently small constant $c'>0$, there exists a constant $c>0$ satisfying the following. 

\paragraph{[A regret of $c'\cdot n^{\frac{2-2^{P+1}}{2^{P+1}-1}}T^{\frac{2^{P}}{2^{P+1}-1}} \implies \wh L_{1}\le c\cdot L_{1}$ on $H_n^{(0)}$ and $H_j^{(p_2)}$ for each $1\leq p_2\leq P$ whp]} Assume an algorithm can achieve a regret of $c'\cdot n^{\frac{2-2^{P+1}}{2^{P+1}-1}}T^{\frac{2^{P}}{2^{P+1}-1}}$. Then in the first pass, the exploration before the first dropping on $H_n^{(0)}$ should not exceed $c\cdot L_1$, or the regret will be $\frac{c}{2}\cdot L_1>c'\tp{n^{\frac{2-2^{P+1}}{2^{P+1}-1}}T^{\frac{2^{P}}{2^{P+1}-1}}}$. Such a sample complexity bound also holds for other $H_j^{(p_2)}$ ($1\leq p_2\leq P$) since the $c\cdot L_1\approx \frac{c}{\eps_1^2}$ samples are not sufficient to distinguish between arms with mean reward $\frac{1}{2}$ and $\frac{1}{2}+\eps_{p_2}$ with sufficiently large probability.

\paragraph{[Fix $p\in [P-1]$. $\forall p_1\leq p, \wh L_{p_1}\le c\cdot L_{p_1}$ on $H_j^{(p_2)}$ for all $p\leq p_2\leq P \implies\wh L_{p+1}\le c\cdot L_{p+1}$ on $H_j^{(p_3)}$ for all $p<p_3\leq P$ whp]} This can be verified by induction on $p$. The $p=1$ case is verified above. For larger $p$, by the inductive assumption, the samples on $H_j^{(p)}$ in the first $p$ passes is bounded by $c\cdot \tp{L_1+\cdots+L_p}\approx \frac{c}{\eps_p^2}$, which are not sufficient to recognize an arm with mean reward $\frac{1}{2}+\eps_p$ among fair coins with sufficiently large probability when $c$ is small. The bad event $\+E_p$ will happen with probability $\Omega\tp{\frac{1}{n}}$ on $H_j^{(p)}$. Once $\+E_p$ happens, on $H_j^{(p)}$, each sample in pass $p+1$ before reading into the last arm will incur a regret of $\eps_p$ in expectation. Therefore, we need $\wh L_{p+1}=c\cdot L_{p+1}$ for $H_j^{(p)}$ conditioned on $\+E_p$, or we may suffer a regret of $\frac{c}{n}\cdot \eps_p L_{p+1}> c'\cdot {n^{\frac{2-2^{P+1}}{2^{P+1}-1}}T^{\frac{2^{P}}{2^{P+1}-1}}}$ in pass $p+1$. 

When $\+E_p$ happens on $H_j^{(p)}$, the arms in memory at the beginning of pass $p$ are all fair coins. However, the $c\cdot L_{p+1}$ samples are not enough to tell apart $H_j^{(p_3)}$ and an all fair coin instance for any $p<p_3\leq P$. In other words, the algorithm cannot decide whether $\+E_p$ has happened or the input is actually $H_j^{(p_3)}$. Moreover, the $c\cdot \tp{L_1+\cdots+L_p}\approx \frac{c}{\eps_p^2}$ samples in previous passes cannot provide much information to help distinguish as well. Therefore, $\wh L_{p+1}\le c\cdot L_{p+1}$ also holds on $H_j^{(p_3)}$ for all $p<p_3\leq P$. 

\paragraph{[$\forall p\leq P, \wh L_{p}\le c\cdot L_{p}$ on $H_j^{(P)}$ $\implies$ a regret of $\Omega\tp{n^{\frac{2-2^{P+1}}{2^{P+1}-1}}T^{\frac{2^{P}}{2^{P+1}-1}}}$]} With above induction, we show that $\forall p\leq P,\wh L_{p}=O(L_{p})$ on instance $H_j^{(P)}$ when $c$ is sufficiently small. The total number of samples is $c\cdot \tp{L_1+\cdots+L_P}\approx \frac{c}{\eps_P^2}$. When $c$ is sufficiently small, this number of samples does not help in identifying the optimal arm and the probability of dropping the optimal arm at the end of pass $P$ is $\Omega\tp{\frac{1}{n}}$ on $H_j^{(P)}$. This leads to a regret of $\Omega\tp{\frac{1}{n}\cdot \eps_P \tp{T-\sum_{p=1}^P L_p}}\approx\Omega\tp{n^{\frac{2-2^{P+1}}{2^{P+1}-1}}T^{\frac{2^{P}}{2^{P+1}-1}}}$ during exploitation on $H_j^{(P)}$. Therefore, a regret lower bound of $\Omega\tp{n^{\frac{2-2^{P+1}}{2^{P+1}-1}}T^{\frac{2^{P}}{2^{P+1}-1}}}$ holds.

Summarizing, we prove that for some instance $H_j^{(P)}$, any algorithm with optimal regret must behave exactly as \Cref{algo:large-m} on $H_j^{(P)}$ in the sense that the number of explorations in pass $p$ satisfies $\wh L_p \approx L_p$ for any $p\in [P]$. On the other hand, the algorithm has $\Omega\tp{n^{\frac{2-2^{P+1}}{2^{P+1}-1}}T^{\frac{2^{P}}{2^{P+1}-1}}}$ regret on this instance. 

\section{Lower Bound Analysis}\label{sec:lb}
In this section, we will prove a tight (up to logarithm factors in $n$ and $P$) asymptotic regret lower bound for streaming MAB algorithms with number of passes $1\leq P\leq \log {\log T} - \log \tp{14\log 8(n-m)}$, number of arms $n\geq 3$ and arbitrary memory size $m$ satisfying $2\le m\le n-1$. 
In the following, we always assume that the time horizon $T$ is sufficiently large.

We pick $\eps_0=\frac{1}{2}$ and $\eps_1,\eps_2,\dots,\eps_P\in(0,1/4]$ as parameters for our hard instances.
Our hard instances consist of arms with Bernoulli rewards. That is, in each instance the $i$-th arm has rewards drawn from a Bernoulli distribution with parameter $\mu_i$. For $p\in[P]\cup\set{0}$ and $i\in[n]$, we construct the instance $H_{i}^{(p)}$ with $n$ arms as: the $i$-th arm has $\mu_i=\frac{1}{2} + \eps_p$; for every $k\in[n]\setminus\set{i}$, $\mu_k=\frac{1}{2}$ (note that this arm index does not represent the arm order in the stream). Let $H_0$ be the instance with $n$ fair coins. 

We also design an order for these instances: For $H_0$, we require the arms in each pass to come in an order from arm $1$ to arm $n$. For $H_j^{(p)}$, the arm order should be the same with $H_0$ in the first $p$ passes (here the order refers to the order of arm identity, or equivalently, arm index). In pass $p+1$ (if $p<P$), we require the optimal arm, arm $j$, arrives at the end in stream for $H_j^{(p)}$. The order of arms for $H_j^{(p)}$ in the other passes can be arbitrary.

\paragraph{Additional notations and terminologies}

Let us fix some notations and terminologies in the lower bound proof. Let $\Omega$ be the set of all possible histories during the $T$-round game. That is, we can regard each $\omega\in \Omega$ as a sequence of ${(a_t,b_t,m_t)}$ with length $T$, where $a_t\in[n]$ is the arm that the algorithm pulls at round $t$, $b_t$ is the corresponding reward and $m_t$ records the current arms in memory. A streaming algorithm, randomized or deterministic, naturally induces a probability measure with outcomes $\Omega$. So we will casually call $\omega\in \Omega$ an outcome and $S\subseteq \Omega$ an event. Recall that we say a pass \emph{ends} once the last arm in the pass is read into the memory. This means that any exploitation of the arms in the memory after this point is treated as samples in the next pass. 
For any $\omega\in \Omega$, let $\omega(1:p)$ be the truncated sequence of $\omega$ which contains only the history of first $p$ passes. We use $\omega(1:P+1)$ to denote $\omega$ itself. 

We emphasize that the algorithm we consider is allowed to be randomized. As a result, in the subsequent analysis, the sources of randomness for all the expectations and probabilities come from both the input instance and the algorithm itself. 

\paragraph{Overview of the proof}
Our proof is mainly based on the simple yet powerful \emph{likelihood} argument. This technique has been widely used in statistics and in online learning problem to derive lower bounds (e.g.,~\cite{MT04}). For any $\omega \in \Omega$, the likelihood ratio between two input instances $H$ and $H'$ on $\omega$ is defined as $\frac{\Pr[H]{\omega}}{\Pr[H']{\omega}}$.

Assume $H$ and $H'$ are two instances with Bernoulli arms, and they differ in only one arm. In instance $H$, the mean reward of this arm is $\frac{1}{2}$, while in $H'$, it is $\frac{1}{2}+\eps$. The likelihood argument states that if an $\omega\in \Omega$ has bounded sample times and concentrated sample results, then $\Pr[H]{\omega}\approx \Pr[H']{\omega}$. This means that any algorithm cannot effectively distinguish between $H$ and $H'$ when seeing this $\omega$.

We observe that it is impossible for an algorithm to keep small regret in both exploration and exploitation phases. Let $r$ be the regret lower bound we are aiming to prove in \Cref{thm:lb-regret}. Assume in contradiction that the expected regret of a $P$-pass algorithm is less than $r$ on all input instances. We choose $\set{\eps_p}_{p\in[P]\cup\set{0}}$ satisfying: $\eps_1 \approx {\sqrt{\frac{n-m}{ r}}}$ and $\frac{(n-m)}{\eps_p^2} \approx {\frac{rn}{(n-m)\eps_{p-1}}}$. Let $L_p$ be the sample times of the algorithm in pass $p$.

Then we have $\E[H_0]{L_1}<\tilde O\tp{\frac{n-m}{\eps_1^2}}$. This is because the algorithm cannot distinguish between $H_0$ and $H_j^{(0)}$ before observing the last arm, and excessive exploration in the early stages would result in significant regret on $H^{(0)}_j$ for any $j\in[n]$. Due to the limitation on $\E[H_0]{L_1}$, there must be some $j\in[n]$ such that with high probability, arm $j$ is discarded after being pulled $\tilde O\tp{\frac{1}{\eps_1^2}}$ times and its empirical mean is concentrated. From a likelihood argument, arm $j$ in $H^{(1)}_j$ might also be discarded after pass $1$. Then we consider the second pass and can derive that 
\[
    \E[H^{(1)}_j]{L_2\mid \mbox{arm } j \mbox{ is discarded in pass }1}\leq  \frac{r}{\eps_1\Pr[H^{(1)}_j]{\mbox{arm } j \mbox{ is discarded in pass }1}}.
\]

Then we apply the likelihood argument again. If the optimal arm of $H^{(1)}_j$ is dropped in pass~$1$, those arms in memory in the early stages of the second pass all have mean reward $\frac{1}{2}$. Note that the samples from the first pass does not provide much assistance in identifying a $\eps_1$-optimal arm, so the algorithm cannot differentiate between $H_0$ and $H^{(1)}_j$ before seeing the last arm in pass~$2$. Therefore, $L_2$ should also be small on $H_0$. In fact, through a more subtle analysis, we show that $\E[H_0]{L_2\mid S_1}\leq O\tp{\frac{rn}{(n-m)\eps_1}}\approx{\frac{n-m}{\eps_2^2}}$ for some  high probability event $S_1\subseteq \Omega$. Recursively doing this analysis, we can get that with non-zero probability, the total sample times on $H_0$ should be less than 
\[
    O\tp{\frac{rn}{(n-m)\eps_{P}} + \sum_{p=1}^{P} \frac{n-m}{\eps_p^2 \log(64nP)}}.
\]
But this leads to a contradiction since this value is less than $T$ when $T$ is sufficiently large.

In the following analysis, we will choose $\eps_1=\tilde\Theta\tp{2^{-P}\cdot T^{\frac{-2^{P-1}}{2^{P+1}-1}} (n-m)^{\frac{1-2^{P-1}}{2^{P+1}-1}} n^{\frac{2^P-1}{2^{P+1}-1}}}$ and $\frac{\eps_p}{\eps_k}\leq 2^{p-k+3}\cdot \tp{\frac{T}{n}}^{\frac{2^{P-p}-2^{P-k}}{2^{P+1}-1}}$ for any $1\leq k<p\leq P$. The precise values will be determined later.

\subsection{Likelihood ratio}\label{subsec:likelihood}


We define a number of random variables. For every $\omega\in\Omega$, we use $L_p(\omega)$ to denote the number of samples taken in pass $p$ in $\omega$; we let $N_j^{(p)}(\omega)$ and $[K_j^{(p)}(t)](\omega)$ be the number of samples of the arm $j$ in pass $p$ and the cumulative reward of arm $j$ during its first $t$ samples in pass $p$ respectively in the particular history $\omega\in\Omega$. We adopt the convention to write $L_p, N_j^{(p)}, K_j^{(p)}(t)$ when the history $\omega$ is clear from the context.

Define $r_p=\frac{1}{\eps_p^2 \log(64nP)}$ and $\ell_p=\frac{n-m}{2}\cdot r_p=\frac{n-m}{2\eps_p^2 \log(64nP)}$. We say an event $S_p\subseteq \Omega$ has property $\+Q_p$ if:
\begin{itemize}
    \item it has bounded length. To be specific, for every $\omega\in S_p$, we require for all $k\in[p]$, $L_k\leq \ell_k$;
    \item the outcomes of samples are concentrated. Precisely, for all $\omega\in S_p$ and $k\in [p]$, for all $j\in [n]$, the following inequalities hold:
    \begin{itemize}
        \item[$\circ$] $\max _{1\leq t\leq r_k}\abs{K_j^{(k)}(t)-\frac{t}{2}}\leq \sqrt{\frac{r_k}{2}\log 64nP}$;
        \item[$\circ$] $\max _{1\leq t\leq \ell_k}\abs{K_j^{(k)}(t)-\frac{t}{2}}\leq \sqrt{\frac{\ell_k}{2}\log 64nP}$;
    \end{itemize}
\end{itemize}

Here we slightly abuse the notation and let $\Pr{\omega(1:p)}$ denote the probability of the event that the history of the first $p$ passes is $\omega(1:p)$. The following lemma, which forms the foundation of our analysis, shows that when property $\+Q_{p-1}$ is satisfied, instance $H_0$ and $H_j^{(p)}$ exhibit a strong resemblance if arm $j$ is explored for a limited number of times and its cumulative reward is concentrated around $\frac{1}{2}$ in pass $p$.

\begin{lemma}[likelihood ratio lemma]\label{lem:likelihood}
    Assume $T\geq n^2$ and $P\leq \log {\log T} - \log \tp{14\log 8(n-m)}$. Let $S_{p-1}$ be a set with property $\+Q_{p-1}$. Then for any $\omega\in S_{p-1}$, for any $j$ such that
    \begin{itemize}
        \item $N_j^{(p)}\leq r_p$, 
        \item $\max _{1\leq t\leq r_p}\abs{K_j^{(p)}(t)-\frac{t}{2}}\leq \sqrt{\frac{r_p}{2}\log 64nP}$,
    \end{itemize}
    we have
    \[
       \frac{e^{-4}}{2}< \frac{\Pr[H_0]{\omega(1:p)}}{\Pr[H_j^{(p)}]{\omega(1:p)}}< 2e^4,
    \]
    when $T$ is sufficiently large.
\end{lemma}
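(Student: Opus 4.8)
The likelihood ratio $\frac{\Pr[H_0]{\omega(1:p)}}{\Pr[H_j^{(p)}]{\omega(1:p)}}$ factors over rounds, and since $H_0$ and $H_j^{(p)}$ differ \emph{only} in the mean of arm $j$ (namely $\frac12$ versus $\frac12+\eps_p$), every factor coming from a pull of an arm other than $j$ is exactly $1$. So the ratio equals a product, over all pulls of arm $j$ in the first $p$ passes, of the per-sample likelihood ratio $\frac{\Pr[\mathrm{Ber}(1/2)]{b}}{\Pr[\mathrm{Ber}(1/2+\eps)]{b}}$, where $\eps = \eps_k$ on a pull occurring in pass $k$. I would write this as
\[
    \frac{\Pr[H_0]{\omega(1:p)}}{\Pr[H_j^{(p)}]{\omega(1:p)}} = \prod_{k=1}^{p} \prod_{t=1}^{N_j^{(k)}} \frac{\big(\tfrac12\big)}{\big(\tfrac12+\eps_k\big)^{\,\Delta b_t}\big(\tfrac12-\eps_k\big)^{\,1-\Delta b_t}},
\]
where $\Delta b_t = K_j^{(k)}(t) - K_j^{(k)}(t-1)\in\{0,1\}$ is the $t$-th reward of arm $j$ in pass $k$. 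Grouping the successes and failures, the contribution of pass $k$ is $\big(1-2\eps_k\big)^{-(N_j^{(k)}-K_j^{(k)}(N_j^{(k)}))}\big(1+2\eps_k\big)^{-K_j^{(k)}(N_j^{(k)})}$ up to the factor $2^{-N_j^{(k)}}$ coming from the numerators $\tfrac12$; so taking logarithms, $\log$ of the pass-$k$ factor is
\[
    -N_j^{(k)}\log 2 \;-\;\big(N_j^{(k)}-K_j^{(k)}(N_j^{(k)})\big)\log\!\big(1-2\eps_k\big)\;-\;K_j^{(k)}(N_j^{(k)})\log\!\big(1+2\eps_k\big).
\]

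\textbf{Key steps.} The plan is: (1) write $K_j^{(k)}(N_j^{(k)}) = \tfrac{N_j^{(k)}}{2} + \Xi_k$ where $\Xi_k$ is the centered cumulative reward; the bounds available are $\lvert\Xi_k\rvert \le \sqrt{\tfrac{r_k}{2}\log 64nP}$ when $k<p$ (from $\mathcal Q_{p-1}$, since in the first $p-1$ passes the order of arms is $1,\dots,n$ and each arm is sampled at most $\ell_k$ times, but we need the $r_k$-scale bound here --- actually use whichever of the two concentration bounds in $\mathcal Q_{p-1}$ applies given $N_j^{(k)}\le \ell_k$), and $\lvert\Xi_p\rvert \le \sqrt{\tfrac{r_p}{2}\log 64nP}$ from the hypothesis on $j$ in pass $p$ together with $N_j^{(p)}\le r_p$. (2) Taylor expand: $-\tfrac12\log(1-2\eps)-\tfrac12\log(1+2\eps) = -\tfrac12\log(1-4\eps^2) \le 2\eps^2 + O(\eps^4)$, so the ``$N_j^{(k)}/2$'' part of the success/failure split contributes $\log$-term at most $\approx 2\eps_k^2 N_j^{(k)}$ in absolute value, while the $\log 2$ cancels; and the $\Xi_k$ part contributes $-\Xi_k\big(\log(1+2\eps_k)-\log(1-2\eps_k)\big)$, which is $O(\eps_k\lvert\Xi_k\rvert)$ since $\eps_k\le 1/4$. (3) Bound $\eps_k^2 N_j^{(k)} \le \eps_k^2 \ell_k = \tfrac{n-m}{2\log 64nP} \le \tfrac{1}{2}$ (using $n-m\le \log 64nP$? --- no; rather use $N_j^{(k)}\le\ell_k=\frac{n-m}{2\eps_k^2\log 64nP}$ so $\eps_k^2 N_j^{(k)}\le \frac{n-m}{2\log 64nP}$, which I need to be $O(1)$; this requires checking $n-m = O(\log 64nP)$... this is \emph{not} true in general, so I must instead bound $N_j^{(k)}$ by $r_k = \frac{1}{\eps_k^2\log 64nP}$, giving $\eps_k^2 N_j^{(k)}\le \frac{1}{\log 64nP}\le 1$). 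This is exactly why the lemma hypothesis is $N_j^{(p)}\le r_p$ and not $\le \ell_p$, and why $\mathcal Q_{p-1}$ carries the $r_k$-scale concentration bound: on the relevant history the number of pulls of a \emph{particular} arm in a pass is what is controlled, and I should sum $\sum_{k=1}^{p}\big(2\eps_k^2 N_j^{(k)} + C\eps_k\lvert\Xi_k\rvert\big)$ using $N_j^{(k)}\le r_k$, $\lvert\Xi_k\rvert\le\sqrt{\tfrac{r_k}{2}\log 64nP}$, which gives each term $\le O(1/\log 64nP) + O\big(\eps_k\sqrt{r_k\log 64nP}\big) = O(1/\log 64nP) + O(1)$; then summing over $k\le p\le P$ and using $P\le \log\log T - \log(14\log 8(n-m))$ to ensure $P/\log 64nP$ and the like stay below a small absolute constant, the total log-ratio lies in $(-4,\, \log 2 + 4)$ or so, which after exponentiating gives the claimed $\big(\tfrac{e^{-4}}{2},\, 2e^4\big)$.

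\textbf{Main obstacle.} The delicate point is the bookkeeping of which concentration bound ($r_k$-scale or $\ell_k$-scale) is available for each pass $k<p$: on $\omega\in S_{p-1}$ we know $L_k\le\ell_k$, but a single arm $j$ could in principle absorb all $\ell_k$ of those samples, so the $r_k$-scale bound in $\mathcal Q_{p-1}$ is the one that gives a strong enough control on $\Xi_k$ when $N_j^{(k)}\le r_k$, and the $\ell_k$-scale bound is what we fall back on when $r_k < N_j^{(k)}\le \ell_k$. I expect the cleanest route is to split each pass-$k$ contribution at the threshold $t=r_k$: the first $r_k$ samples are controlled by the $r_k$-scale bound and contribute $O(\eps_k^2 r_k) + O(\eps_k\sqrt{r_k\log 64nP}) = O(1)$, and the remaining up-to-$\ell_k$ samples, controlled by the $\ell_k$-scale bound, contribute $O(\eps_k^2\ell_k) + O(\eps_k\sqrt{\ell_k\log 64nP}) = O\big(\tfrac{n-m}{\log 64nP}\big) + O\big(\sqrt{n-m}\big)$ --- and \emph{this} is too big unless $N_j^{(k)}\le r_k$ actually holds for the relevant $j$. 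Resolving this tension is precisely the content of the lemma's hypotheses: it only asserts the ratio bound for those $j$ with $N_j^{(p)}\le r_p$ in pass $p$, and presumably the intended reading is that for $k<p$ we genuinely do have $N_j^{(k)}\le r_k$ for the arm $j$ of interest because the arms in passes $1,\dots,p$ of $H_j^{(p)}$ arrive in index order and the analysis only invokes this lemma after establishing (in the surrounding argument) that $j$ is an arm the algorithm under-samples at every earlier pass. So the proof should carry the hypothesis ``$N_j^{(k)}\le r_k$ for all $k\le p$'' through explicitly (reading it off the definition of the event $S_{p-1}$ combined with the choice of $j$), and then the estimate goes through cleanly with each of the $\le P$ terms being $O(1/\log 64nP)+o(1)$, the $P$-many of them summing to something small by the upper bound on $P$; I would double-check the final constants so that the bound lands inside $(e^{-4}/2,\,2e^4)$ with room to spare.
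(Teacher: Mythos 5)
There is a genuine gap, and it originates in the very first step: your factorization of the likelihood ratio uses the wrong bias parameter. The instance $H_j^{(p)}$ is a \emph{fixed} bandit instance in which arm $j$ has mean $\frac12+\eps_p$ in every round of every pass; the superscript $p$ only labels which member of the family of hard instances is being compared against $H_0$. Hence every pull of arm $j$ in pass $k\le p$ contributes a per-sample ratio with bias $\eps_p$, not $\eps_k$, and the correct product is
\[
\frac{\Pr[H_0]{\omega(1:p)}}{\Pr[H_j^{(p)}]{\omega(1:p)}}=\prod_{k=1}^{p}\frac{1}{\tp{1-4\eps_p^2}^{K_j^{(k)}}\tp{1-2\eps_p}^{N_j^{(k)}-2K_j^{(k)}}}.
\]
This matters a great deal. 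With your $\eps_k$ the pass-$k$ contribution is of order $\exp\tp{\eps_k^2 N_j^{(k)}+\eps_k\abs{\Xi_k}}$, and since $\+Q_{p-1}$ only guarantees $N_j^{(k)}\le L_k\le\ell_k$ you are driven to the term $\eps_k^2\ell_k=\frac{n-m}{2\log(64nP)}$, which is indeed unbounded --- this is exactly the ``main obstacle'' you describe. Your proposed escape, carrying the extra hypothesis $N_j^{(k)}\le r_k$ for all $k\le p$, is not available: property $\+Q_{p-1}$ does not control per-arm counts at scale $r_k$ in earlier passes, and in the application (Lemma~\ref{lem:lb-bound-E[Lb]}) the lemma is invoked for arms $j$ about which one only knows $N_j^{(p)}\le r_p$ in pass $p$; such an arm may well absorb order $\ell_k$ samples in an earlier pass.

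With the correct bias $\eps_p$ the obstacle disappears and no extra hypothesis is needed. For $k<p$ the $\ell_k$-scale concentration in $\+Q_{p-1}$ gives a pass-$k$ factor of size $\exp\tp{O\tp{(n-m)\eps_p^2/\eps_k^2}+O\tp{\sqrt{n-m}\,\eps_p/\eps_k}}$, and the ratios $\eps_p/\eps_k$ for $k<p$ are minuscule (a negative power of $T/n$); the constraint $P\le\log\log T-\log\tp{14\log 8(n-m)}$ is precisely what makes these error terms sum to at most $0.1$ over $k<p$. The pass-$p$ factor is then handled essentially as in your single-pass computation, using $N_j^{(p)}\le r_p$ and the $r_p$-scale concentration to get $\eps_p^2 r_p=1/\log(64nP)$ and $\eps_p\sqrt{r_p\log(64nP)}=O(1)$, landing inside $\tp{e^{-4}/2,\,2e^4}$. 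The fact that samples taken in earlier passes are too coarse to detect a bias of size $\eps_p$ is not a technicality but the conceptual heart of the whole lower bound, so this is worth repairing at the level of the setup rather than the constants.
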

\begin{proof}
    Fix an outcome $\omega\in\Omega$. Recall that $K_j^{(p)}(t)$ is the cumulative reward of arm $j$ during its first $t$ samples in the $p$-th pass, and $N_j^{(p)}$ is the total number of samples of arm $j$ in pass $p$. For brevity, we write $K_j^{(p)}(N_j^{(p)})$ as $K_j^{(p)}$. By definition, we have
    \begin{align*}
        \frac{\Pr[H_0]{\omega(1:p)}}{\Pr[H_j^{(p)}]{\omega(1:p)}}&=\prod_{k=1}^{p} \frac{\tp{\frac{1}{2}}^{N_j^{(k)}}}{\tp{\frac{1}{2} + \eps_p}^{K_j^{(k)}} \tp{\frac{1}{2} - \eps_p}^{N_j^{(k)}-K_j^{(k)}}}\\
        &=\prod_{k=1}^{p} \frac{1}{\tp{1-4\eps_p^2}^{K_j^{(k)}}\tp{1-2\eps_p}^{N_j^{(k)}-2K_j^{(k)}}}.
    \end{align*}
    From direct calculation, we know that for any $x\in \left[0, \frac{1}{2}\right]$, $1-x\geq e^{-\sqrt{2}x}$. Then for every $k\in[p-1]$,  we have 
    \[
        \tp{1-4\eps_p^2}^{K_j^{(k)}}\tp{1-2\eps_p}^{N_j^{(k)}-2K_j^{(k)}}\leq (1-2\eps_p)^{-2\sqrt{\frac{\ell_k}{2}\log (64nP)}}\leq e^{4\sqrt{2}\eps_p \sqrt{\frac{\ell_k}{2}\log (64nP)}}=e^{\frac{\sqrt{8(n-m)}\eps_p}{\eps_k}},
    \]
    where the first inequality follows from the concentration assumption. Recall that we promised to choose $\eps_1,\dots,\eps_P$ satisfying $\frac{\eps_p}{\eps_k}\leq 2^{p-k+3}\cdot \tp{\frac{T}{n}}^{\frac{2^{P-p}-2^{P-k}}{2^{P+1}-1}}$ for any $1\leq k<p\leq P$. Since $P\leq \log\log T - \log \tp{14\log 8(n-m)}$, we have $\prod_{k=1}^{p-1} e^{-\frac{\sqrt{8(n-m)}\eps_p}{\eps_k}}\geq e^{-0.1}$. Therefore,
    \begin{align*}
        \frac{\Pr[H_0]{\omega(1:p)}}{\Pr[H_j^{(p)}]{\omega(1:p)}}&\geq e^{-0.1}\cdot \frac{1}{\tp{1-4\eps_p^2}^{K_j^{(p)}}\tp{1-2\eps_p}^{N_j^{(p)}-2K_j^{(p)}}}\\
        &\geq e^{-0.1} \cdot \tp{1-2\eps_p}^{2\sqrt{\frac{r_p}{2}\log (64nP)}} \\
        &>\exp{-4\sqrt{2}\eps_p\sqrt{\frac{1}{2\eps_p^2 \log(64nP)}\log (64nP)} -0.1}> \frac{e^{-4}}{2}.
    \end{align*}
    On the other hand, for $k<p$,
    \begin{align*}
        \tp{1-4\eps_p^2}^{K_j^{(k)}}\tp{1-2\eps_p}^{N_j^{(k)}-2K_j^{(k)}} &\geq (1-4\eps_p^2)^{\ell_k} (1-2\eps_p)^{2\sqrt{\frac{\ell_k}{2}\log (64nP)}}\\
        &\geq e^{-4\sqrt{2}\eps_p^2\cdot \frac{n-m}{2\eps_k^2 \log(64nP)} -4\sqrt{2}\eps_p\sqrt{\frac{n-m}{4\eps_k^2 \log(64nP)}\log (64nP)}}\\
        &\geq e^{-\frac{2\sqrt{2}(n-m)\eps_p^2}{\eps_k^2}-\frac{\sqrt{8(n-m)}\eps_p}{\eps_k}}.
    \end{align*}
    Similarly, we have $\prod_{k=1}^{p-1} e^{\frac{2\sqrt{2}(n-m)\eps_p^2}{\eps_k^2}+\frac{\sqrt{8(n-m)}\eps_p}{\eps_k}}\leq e^{0.11}$.
    Therefore, 
    \begin{align*}
        \frac{\Pr[H_0]{\omega(1:p)}}{\Pr[H_j^{(p)}]{\omega(1:p)}}&\leq e^{0.11} \cdot \frac{1}{\tp{1-4\eps_p^2}^{K_j^{(p)}}\tp{1-2\eps_p}^{N_j^{(p)}-2K_j^{(p)}}}\\
        &\leq e^{0.11}\cdot \tp{1-4\eps_p^2}^{-\frac{r_p}{2} - \frac{1}{\sqrt{2}\eps_p}} \tp{1-2\eps_p}^{-2\sqrt{\frac{r_p}{2}\log (64nP)}} \\
        &< \exp{4\sqrt{2}\eps_p^2 \cdot \frac{1}{2\eps_p^2 \log(64nP)} + 4\eps_p + 0.11}\cdot \exp{4\sqrt{2}\eps_p\sqrt{\frac{1}{2\eps_p^2 \log(64nP)}\log (64nP)}}\\
        & < 2e^4,
    \end{align*}
    where the second inequality follows from the concentration assumption, under which we have $K_j^{(p)}\leq \frac{N_j^{(p)}}{2}  + \sqrt{\frac{r_p}{2}\log (64nP)}\leq \frac{r_p}{2} + \frac{1}{\sqrt{2}\eps_p}$. The last inequality holds since $e^{4\sqrt{2}\eps_p^2 \cdot \frac{1}{2\eps_p^2 \log(64nP)} + 4\eps_p + 0.11}< 2$ when $T$ is sufficiently large.
\end{proof}

\subsection{Analysis for each pass}\label{sec:lb-each-pass}
Intuitively, if an algorithm has small expected regret on any single-pass instance, the sample times in exploration phase should be limited. Otherwise, the regret will explode if the optimal arm appears at the end of the stream. One important observation is that this principle also applies to multi-pass instances to some extent. In this section, we will show the existence of a high probability event with property $\+Q_p$, denoted as $S_p$, such that $\E[H_0]{L_{p+1}\mid S_p}$ is bounded if the regret is small. 


Recall that $r_p=\frac{1}{\eps_p^2 \log(64nP)}$, $\ell_p=\frac{n-m}{2}\cdot r_p=\frac{n-m}{2\eps_p^2 \log(64nP)}$ and we say $S_p\subseteq \Omega$ has property $\+Q_p$ if 
\begin{itemize}
    \item it has bounded length. To be specific, we require for all $k\in[p]$, $L_k\leq \ell_k$;
    \item the sample results are concentrated. Precisely, for all $\omega\in S_p$ and $k\in [p]$, for all $j\in [n]$,  we need
    \begin{itemize}
        \item[$\circ$] $\max _{1\leq t\leq r_k}\abs{K_j^{(k)}(t)-\frac{t}{2}}\leq \sqrt{\frac{r_k}{2}\log 64nP}$ and
        \item[$\circ$] $\max _{1\leq t\leq \ell_k}\abs{K_j^{(k)}(t)-\frac{t}{2}}\leq \sqrt{\frac{\ell_k}{2}\log 64nP}$;
    \end{itemize}
\end{itemize}
 
Then Lemma~\ref{lem:lb-prob-of-Sb} shows that we can recursively construct $S_p$ with a probability $\Pr[H_0]{S_p}\geq \Pr[H_0]{S_{p-1}}-\frac{1}{4P} - \frac{1}{2^{P-p+3}}$ as long as $\E[H_0]{L_{p}\mid S_{p-1}}$ is upper bounded by some specific value.
\begin{lemma}\label{lem:lb-prob-of-Sb}
    Assume $\E[H_0]{L_{p}\mid S_{p-1}}\leq \frac{1}{2^{P-p+3}}\cdot \ell_p$ for some $S_{p-1}\subseteq \Omega$ with property $\+Q_{p-1}$ and $\Pr[H_0]{S_{p-1}}\geq 1-\frac{p-1}{4P}-\frac{1}{2^{P-p+3}}$. There exists $S_p\subseteq S_{p-1}$ with property $\+Q_{p}$ such that $\Pr[H_0]{S_p}\geq 1-\frac{p}{4P}-\frac{1}{2^{P-p+2}}$.
\end{lemma}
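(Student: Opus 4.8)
The plan is to obtain $S_p$ by shrinking $S_{p-1}$ just enough to also control the pass-$p$ behaviour. Concretely, let $S_p$ be the set of those $\omega\in S_{p-1}$ for which, in addition, $L_p\le\ell_p$ and, for every $j\in[n]$, both $\max_{1\le t\le r_p}\abs{K_j^{(p)}(t)-\frac t2}\le\sqrt{\frac{r_p}{2}\log 64nP}$ and $\max_{1\le t\le\ell_p}\abs{K_j^{(p)}(t)-\frac t2}\le\sqrt{\frac{\ell_p}{2}\log 64nP}$. Then $S_p\subseteq S_{p-1}$ by construction, and $S_p$ automatically has property $\+Q_p$: for $k\le p-1$ the length bound $L_k\le\ell_k$ and the two concentration bounds come from $S_{p-1}$ having property $\+Q_{p-1}$, while for $k=p$ they are exactly the extra constraints we imposed. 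So the entire content of the lemma is the probability estimate $\Pr[H_0]{S_p}\ge 1-\frac{p}{4P}-\frac{1}{2^{P-p+2}}$.

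To prove that estimate I would write the part of $S_{p-1}$ not contained in $S_p$ as the union of two bad events: $B_1=\set{L_p>\ell_p}$, and $B_2$, the event that for some $j\in[n]$ one of the two pass-$p$ concentration bounds fails. Then
\[
    \Pr[H_0]{S_p}\ \ge\ \Pr[H_0]{S_{p-1}}-\Pr[H_0]{B_1\cap S_{p-1}}-\Pr[H_0]{B_2}.
\]
For $B_1$, Markov's inequality applied to the conditional law gives $\Pr[H_0]{B_1\cap S_{p-1}}\le\Pr[H_0]{B_1\mid S_{p-1}}\le\E[H_0]{L_p\mid S_{p-1}}/\ell_p\le 2^{-(P-p+3)}$, which is precisely the hypothesis of the lemma. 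For $B_2$, I would use the usual coupling in which, under $H_0$, each arm $j$ carries in each pass an i.i.d.\ tape of $\mathrm{Bernoulli}(1/2)$ rewards and $K_j^{(p)}(t)$ is the partial sum of the first $t$ of them; then $K_j^{(p)}(t)-t/2$ is a mean-zero walk with increments in $[-\tfrac12,\tfrac12]$, and \Cref{lem:lb-concentration} with $N=r_p$, $s=\sqrt{\frac{r_p}{2}\log 64nP}$ (and again with $N=\ell_p$) bounds each of these $2n$ failure events by $2e^{-\log 64nP}=\frac{1}{32nP}$. A union bound gives $\Pr[H_0]{B_2}\le\frac{1}{16P}\le\frac{1}{4P}$. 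Plugging in $\Pr[H_0]{S_{p-1}}\ge 1-\frac{p-1}{4P}-\frac{1}{2^{P-p+3}}$ yields
\[
    \Pr[H_0]{S_p}\ \ge\ 1-\frac{p-1}{4P}-\frac{1}{2^{P-p+3}}-\frac{1}{2^{P-p+3}}-\frac{1}{4P}\ =\ 1-\frac{p}{4P}-\frac{1}{2^{P-p+2}},
\]
as desired.

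This argument is essentially routine; the two points worth checking carefully are (i) that $S_{p-1}$ is measurable with respect to the history of the first $p-1$ passes, so that conditioning on it does not disturb the distribution of the pass-$p$ rewards under $H_0$ — this is what lets the conditional Markov bound for $B_1$ and the (unconditional) maximal Hoeffding bound for $B_2$ be combined cleanly — and (ii) the bookkeeping of the two error budgets, where the $\frac{1}{4P}$ slot is spent wastefully (only $\frac{1}{16P}$ is actually needed) but the geometric slot $\frac{1}{2^{P-p+3}}$ is used tightly and must telescope exactly into $\frac{1}{2^{P-p+2}}$. There is no genuine obstacle inside this lemma itself: the real work of the lower bound is carried by \Cref{lem:likelihood} and by the separate argument (invoking the assumed regret upper bound through the likelihood ratio) that furnishes the hypothesis $\E[H_0]{L_p\mid S_{p-1}}\le 2^{-(P-p+3)}\ell_p$ in the first place.
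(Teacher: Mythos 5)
Your proposal is correct and follows essentially the same route as the paper: the same decomposition into the length event (controlled via Markov's inequality applied to the hypothesis $\E[H_0]{L_p\mid S_{p-1}}\le 2^{-(P-p+3)}\ell_p$) and the pass-$p$ concentration events (controlled via \Cref{lem:lb-concentration} plus a union bound over arms), with the same error-budget arithmetic. The only cosmetic difference is that you make the reward-tape coupling and the slack in the $\frac{1}{4P}$ budget explicit, whereas the paper leaves them implicit.
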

\begin{proof}
    Recall that $K_j^{(p)}(t)$ is the cumulative reward of arm $j$ during its first $t$ samples in the $p$-th pass. Let $\Tilde{L}=\min\set{\ell_p, r_p}$ (in fact, $\ell_p<r_p$ only when $m=n-1$). Define the following events:
    \begin{itemize}
        \item $A=\set{L_p< \ell_p}$;
        \item $C_j=\set{\max_{1\leq t\leq \Tilde{L}} \abs{K_j^{(p)}(t)-\frac{t}{2}}\leq \frac{1}{\sqrt{2}\eps_p }}$ for each $j\in[n]$;
        \item $C'_j=\set{\max _{1\leq t\leq \ell_p}\abs{K_j^{(p)}(t)-\frac{t}{2}}\leq \sqrt{\frac{\ell_p}{2}\log 64nP}}$ for each $j\in[n]$.
    \end{itemize}

    From the Markov's inequality, 
    \begin{align*}
        \Pr[H_0]{A\mid S_{p-1}}=1-\Pr[H_0]{L_p> \ell_p\mid S_{p-1}} \ge 1-\frac{\E[H_0]{L_{p}\mid S_{p-1}}}{\ell_p}\ge 1- \frac{1}{2^{P-p+3}}.
    \end{align*} 
    
    For event $C_j$, from Lemma~\ref{lem:lb-concentration},
    \[
        \Pr[H_0]{C_j}\geq 1- 2\exp{-\frac{2\cdot \frac{1}{2\eps_p^2 }}{\frac{1}{\eps_p^2 \log(64nP)}}}>1-\frac{1}{8nP}
    \]
    and thus by the union bound, $\Pr[H_0]{\bigcap_{j=1}^n C_j}\geq 1-\frac{1}{8P}$. For event $C'_j$, we can similarly get $\Pr[H_0]{\bigcap_{j=1}^n C'_j}\geq 1-\frac{1}{8P}$.
    
    Let $S_p=A\cap S_{p-1}\cap \tp{\bigcap_{j=1}^n C_j}\cap \tp{\bigcap_{j=1}^n C'_j}$. Obviously $S_p$ satisfies $\+Q_p$. The probability of $S_p$
    \begin{align*}
        \Pr[H_0]{S_p} &\geq \Pr[H_0]{A\cap S_{p-1}} - \tp{1-\Pr[H_0]{\bigcap_{j=1}^n C_j}} - \tp{1-\Pr[H_0]{\bigcap_{j=1}^n C'_j}} \\
        &\geq \Pr[H_0]{A\mid S_{p-1}}\cdot \Pr[H_0]{S_{p-1}} -\frac{1}{4P}\\
        &\geq \tp{1- \frac{1}{2^{P-p+3}}}\cdot\tp{1-\frac{p-1}{4P}-\frac{1}{2^{P-p+3}}}-\frac{1}{4P}\\
        &\geq 1-\frac{p}{4P}-\frac{1}{2^{P-p+2}}.
    \end{align*}
\end{proof}

The following lemma then gives an upper bound of $\E[H_0]{L_{p+1}\mid S_{p}}$ for any event $S_{p}\subseteq\Omega$ with property $\+Q_p$. Recall that $L_{p}(\omega)$ is the sample times in pass $p$ for any sequence $\omega\in \Omega$. We also use $L_{P+1}(\omega)$ to denote the sample times in exploitation phase. Assume $T\geq n^2$ and $P\leq \log {\log T} - \log \tp{14\log 8(n-m)}$ in the following proof.
\begin{lemma}\label{lem:lb-bound-E[Lb]}
    Assume the expected regret of an algorithm $\+A$ is at most $r$. Let $S_{p}\subseteq\Omega$ be a set with property $\+Q_p$ ($p\in[P]$). Then there exists a universal constant $c_1\geq 1$ such that 
    \[
        \E[H_0]{L_{p+1}\mid S_{p}}\leq \frac{c_1 r n}{(n-m)\eps_p \Pr[H_{0}]{S_{p}}},
    \]
    when $T$ is sufficiently large.
\end{lemma}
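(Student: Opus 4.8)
The plan is to convert an over-long exploration in pass $p+1$ on the all-fair-coins instance $H_0$ into a large regret on one of the single-optimal-arm instances $H_j^{(p)}$, exploiting that on $S_p$ the algorithm has, by pigeonhole, many "cheap" candidates for the hidden optimal arm. For every $j\in[n]$ set $G_j\defeq S_p\cap\set{j\notin\+M_{p+1}}\cap\set{N_j^{(p)}\le r_p}$ (recall $\+M_{p+1}$ is the memory at the end of pass $p$). This event depends only on the first-$p$-pass history $\omega(1:p)$, and on $G_j$ all the concentration requirements of property $\+Q_p$ hold, so the hypotheses of \Cref{lem:likelihood} for the index $j$ are in force; in particular $\frac{e^{-4}}{2}<\frac{\Pr[H_0]{\omega(1:p)}}{\Pr[H_j^{(p)}]{\omega(1:p)}}<2e^4$ on $G_j$.

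First, a pigeonhole step. Fix $\omega\in S_p$. Property $\+Q_p$ forces $L_p\le\ell_p=\frac{n-m}{2}r_p$, and since $\sum_{j}N_j^{(p)}=L_p$ at most $\frac{n-m}{2}$ indices $j$ have $N_j^{(p)}>r_p$; meanwhile the memory at the end of pass $p$ holds at most $m$ arms, so at least $n-m$ indices satisfy $j\notin\+M_{p+1}$. Hence at least $\frac{n-m}{2}$ indices $j$ satisfy both conditions simultaneously, which yields the pointwise inequality $\mathbbm 1[S_p]\,L_{p+1}\le\frac{2}{n-m}\sum_{j=1}^{n}\mathbbm 1[G_j]\,L_{p+1}$. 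It therefore suffices to prove $\E[H_0]{\mathbbm 1[G_j]L_{p+1}}=O\tp{r/\eps_p}$ for each fixed $j$.

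Second, the regret bound on $H_j^{(p)}$ and its transfer. On $G_j$ the unique optimal arm of $H_j^{(p)}$ is arm $j$, with mean $\frac12+\eps_p$; it is absent from memory when pass $p+1$ begins, and by the ordering imposed on $H_j^{(p)}$ it is the last arm of pass $p+1$, so every round of that pass except possibly its final one pulls a mean-$\frac12$ arm and incurs regret $\eps_p$. Thus $r\ge\E[H_j^{(p)}]{\mathbbm 1[G_j]\cdot\eps_p(L_{p+1}-1)}$, i.e.\ $\E[H_j^{(p)}]{\mathbbm 1[G_j]L_{p+1}}\le r/\eps_p+1=O(r/\eps_p)$ for $T$ large (for $p=P$ the same holds with "pass $p+1$" replaced by the exploitation phase, which is even cleaner since no stream remains). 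To pass to $H_0$, write $\E[H_0]{\mathbbm 1[G_j]L_{p+1}}=\sum_{\omega(1:p)\in G_j}\Pr[H_0]{\omega(1:p)}\cdot\E[H_0]{L_{p+1}\mid\omega(1:p)}$; the likelihood lemma bounds $\Pr[H_0]{\omega(1:p)}\le 2e^4\Pr[H_j^{(p)}]{\omega(1:p)}$ on $G_j$, and, conditioned on a fixed $\omega(1:p)\in G_j$, the algorithm enters pass $p+1$ in an identical state under the two instances and never pulls arm $j$ during that pass under $H_j^{(p)}$ (arm $j$ is initially absent and arrives last), so the rewards it collects in pass $p+1$ are Bernoulli$(\tfrac12)$-distributed under both $H_0$ and $H_j^{(p)}$; coupling the two pass-$(p+1)$ executions gives $\E[H_0]{L_{p+1}\mid\omega(1:p)}\le O(1)\cdot\E[H_j^{(p)}]{L_{p+1}\mid\omega(1:p)}$. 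Combining these yields $\E[H_0]{\mathbbm 1[G_j]L_{p+1}}=O(r/\eps_p)$; summing over $j$ via the pigeonhole bound gives $\E[H_0]{\mathbbm 1[S_p]L_{p+1}}=O\tp{\frac{nr}{(n-m)\eps_p}}$, and dividing by $\Pr[H_0]{S_p}$ delivers the claim with a universal constant $c_1\ge 1$.

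The main obstacle is the coupling invoked in the third display of the second step: $H_0$ presents the pass-$(p+1)$ arms in their natural order while $H_j^{(p)}$ moves arm $j$ to the end, so "number of rounds in pass $p+1$" is a priori a different random variable under the two instances, even though on $G_j$ arm $j$ is never interacted with during that pass. Making this rigorous needs either a genuine coupling of the two pass-$(p+1)$ runs — relabeling the arriving arms and using that $\+M_{p+1}$ and the reward laws of all arms other than $j$ coincide while arm $j$'s law is irrelevant on $G_j$ — or intersecting $G_j$ with one further high-probability event controlling the pass-$(p+1)$ samples and then running a second, pass-$(p+1)$-level likelihood estimate. Everything else is pigeonhole combined with the already-established \Cref{lem:likelihood}.
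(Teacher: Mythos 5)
Your proposal follows the same route as the paper's proof of this lemma: the same pigeonhole producing at least $\frac{n-m}{2}$ arms that are both discarded in pass $p$ and sampled at most $r_p$ times there, the same observation that each round of pass $p+1$ (resp.\ the exploitation phase when $p=P$) costs regret $\eps_p$ on $H_j^{(p)}$ once arm $j$ has been discarded, and the same likelihood-based transfer of $\E{L_{p+1}}$ back to $H_0$. The one step you leave open --- relating $\E[H_0]{L_{p+1}\mid\cdot}$ to $\E[H_j^{(p)}]{L_{p+1}\mid\cdot}$ when the two instances order the pass-$(p+1)$ stream differently --- is precisely where the paper does \emph{not} use a round-by-round coupling. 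Instead it notes that, on the event that arm $j$ is among the first $n-m$ arms discarded in pass $p$, every arm pulled during pass $p+1$ has mean $\frac{1}{2}$ under both instances, so the reward-likelihood contributions of pass $p+1$ cancel and
\[
\frac{\Pr[H_0]{\omega(1:p+1)}}{\Pr[H_j^{(p)}]{\omega(1:p+1)}}=\frac{\Pr[H_0]{\omega(1:p)}}{\Pr[H_j^{(p)}]{\omega(1:p)}},
\]
which by \Cref{lem:likelihood} lies in $\tp{\frac{e^{-4}}{2},2e^4}$; writing both conditional expectations as sums over full $(p+1)$-pass histories and applying this two-sided bound to the summands and to the normalizing event probability yields $\E[H_0]{L_{p+1}\mid E}\leq 4e^8\,\E[H_j^{(p)}]{L_{p+1}\mid E}$ with no coupling at all. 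To close your gap, replace the coupling by this identity on $(p+1)$-pass histories. Be aware, though, that the arrival-order concern you raise is genuine and is only implicitly absorbed by the paper's one-line justification (a history in which arm $j$ enters memory mid-way through pass $p+1$ is feasible under $H_0$ but not under $H_j^{(p)}$, so the ratio is not literally preserved on such histories); a fully careful version restricts attention to the reward likelihoods, exactly as your ``second, pass-$(p+1)$-level likelihood estimate'' suggests. The remaining differences are cosmetic: you condition on $j\notin\+M_{p+1}$ where the paper uses membership in the set $\+J$ of the first $n-m$ arms discarded in pass $p$ (both support the same pigeonhole count), and your constants match the paper's $c_1=16e^{12}$ up to renaming.
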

\begin{proof}
    In each pass, $\+A$ is required to drop at least $n-m$ arms. Recall that $r_p=\frac{1}{\eps_p^2\log(64nP)}$ and $\ell_p=\frac{n-m}{2\eps_p^2\log(64nP)}$. Since $L_{p}\leq \ell_p$ for each $\omega\in S_{p}$, there exists at least $\lceil \frac{n-m}{2}\rceil$ arms among the first dropped $n-m$ arms satisfying $N_j^{(p)}\leq r_p$.
    
    Let $\+J$ denote the set of the first discarded $n-m$ arms in pass $p$. To simplify the notation, we use $N_j$ to represent $N_j^{(p)}$ when $p$ is clear from the context. Therefore, 
    \begin{align*}
        \E[H_0]{L_{p+1}\mid S_{p}} &= \sum_{\omega\in \Omega} L_{p+1}(\omega)\Pr[H_0]{\omega\mid S_{p}}\\
        &\leq \sum_{\omega\in \Omega}\frac{2}{n-m}\sum_{j\in[n]} \bb I[\omega\mbox{ satisfies }\tp{j\in \+J}\cap\tp{ N_j\leq r_p}]\cdot L_{p+1}(\omega)\Pr[H_0]{\omega\mid S_{p}}\\
        &= \frac{2}{n-m}\sum_{j\in[n]} \sum_{\omega\in \Omega}\bb I[\omega\mbox{ satisfies }\tp{j\in \+J}\cap\tp{ N_j\leq r_p}]\cdot L_{p+1}(\omega)\Pr[H_0]{\omega\mid S_{p}}\\
        &=\frac{2}{n-m}\sum_{j\in[n]} \E[H_0]{\bb I[\tp{j\in \+J}\cap\tp{ N_j\leq r_p}]\cdot L_{p+1} \mid S_{p}}\\
        &=\frac{2}{n-m}\sum_{j\in[n]}\tp{\Pr[H_0]{\tp{j\in \+J}\cap\tp{ N_j\leq r_p}\mid S_{p}}\E[H_0]{L_{p+1}\mid \tp{j\in \+J}\cap\tp{ N_j\leq r_p}\cap S_{p}} + 0}\\
        &=\frac{2}{n-m}\sum_{j\in[n]}\Pr[H_0]{\tp{j\in \+J}\cap\tp{ N_j\leq r_p}\mid S_{p}}\E[H_0]{L_{p+1}\mid \tp{j\in \+J}\cap\tp{ N_j\leq r_p}\cap S_{p}} 
    \end{align*}
    
    Once the event ${j\in\+J}$ occurs, since either arm $j$ comes as the last arm in pass $p+1$ for instance $H_j^{(p)}$ (if $p<P$), or the optimal arm will not appear anymore (if $p=P$), all the arms sampled in pass $p+1$ (if $p<P$) or the exploitation phase (if $p=P$) have means of $\frac{1}{2}$.
    This indicates that $\frac{\Pr[H_0]{\omega(1:p+1)}}{\Pr[H_j^{(p)}]{\omega(1:p+1)}}=\frac{\Pr[H_0]{\omega(1:p)}}{\Pr[H_j^{(p)}]{\omega(1:p)}}$ when ${j\in\+J}$ happens.
    For each $\omega$ satisfying $ S_{p}\cap \tp{N_j\leq r_p}$, from Lemma~\ref{lem:likelihood},
    \[
        \frac{e^{-4}}{2}< \frac{\Pr[H_0]{\omega(1:p)}}{\Pr[H_j^{(p)}]{\omega(1:p)}} < 2e^4.
    \]
    
    Define $\Omega_p= \set{\omega(1:p)\mid \omega\in \Omega}$, which is the set of histories of the first $p$ passes. Recall that $L_p(\omega)$ is the sample times in pass $p$ for a fixed $\omega\in \Omega$. When the history of only the first $p$ passes $\omega_p\in \Omega_p$ is given, we also use $L_p(\omega_p)$ to denote the sample times of $\omega_p$ in pass $p$.
    Here we slightly abuse the notation and let $\Pr{\omega_p}$ be the probability of the event that the history of the first $p$ passes is $\omega_p$. It is important to note that once $\omega_{p}$ is fixed, whether $\tp{j\in \+J}\cap\tp{ N_j\leq r_p}\cap S_{p}$ happens is known. We use $\omega_{k}\in \set{\tp{j\in \+J}\cap\tp{ N_j\leq r_p}\cap S_{p}}$ to denote that this event happens when the first $k$ passes history is $\omega_k$ for some $k\geq p$. Hence,
    \begin{align*}
        &\phantom{{}={}}\E[H_0]{L_{p+1}\mid \tp{j\in \+J}\cap\tp{ N_j\leq r_p}\cap S_{p}}\\ 
        &= \sum_{\omega\in \Omega}L_{p+1}(\omega) \cdot \Pr[H_0]{\omega\mid \tp{j\in \+J}\cap\tp{ N_j\leq r_p}\cap S_{p}}\\
        &= \sum_{\omega_{p+1}\in \Omega_{p+1}} L_{p+1}\tp{\omega_{p+1}} \cdot \Pr[H_0]{\omega_{p+1}\mid \tp{j\in \+J}\cap\tp{ N_j\leq r_p}\cap S_{p}} \\
        &= \sum_{\omega_{p+1}\in \Omega_{p+1}} L_{p+1}\tp{\omega_{p+1}} \cdot  \frac{\Pr[H_0]{\omega_{p+1}}\cdot \bb I[\omega_{p+1}\in\set{ \tp{j\in \+J}\cap\tp{ N_j\leq r_p}\cap S_{p}}]}{\Pr[H_0]{\tp{j\in \+J}\cap\tp{ N_j\leq r_p}\cap S_{p}}} \\
        &\leq 4e^8 \sum_{\omega_{p+1}\in \Omega_{p+1}} L_{p+1}\tp{\omega_{p+1}}\cdot  \frac{\Pr[H_j^{(p)}]{\omega_{p+1}}\cdot \bb I[\omega_{p+1}\in\set{ \tp{j\in \+J}\cap\tp{ N_j\leq r_p}\cap S_{p}}]}{\Pr[H_j^{(p)}]{\tp{j\in \+J}\cap\tp{ N_j\leq r_p}\cap S_{p}}} \\
        &=4e^8\E[H_j^{(p)}]{L_{p+1}\mid \tp{j\in \+J}\cap\tp{ N_j\leq r_p}\cap S_{p}},
    \end{align*}
    where the inequality follows from Lemma~\ref{lem:likelihood}

    Note that if arm $j$ is discarded in pass $p$, each pull in pass $p+1$ (or in the exploitation phase of $p=P$) will cause an expected regret of $\eps_{p}$ on $H_j^{(p)}$. Since the total regret is bounded by $r$, we can conclude that 
    \[
        \eps_{p}\cdot\Pr[H_j^{(p)}]{ \tp{j\in \+J}\cap\tp{ N_j\leq r_p}\cap S_{p}}\E[H_j^{(p)}]{L_{p+1}\mid \tp{j\in \+J}\cap\tp{ N_j\leq r_p}\cap S_{p}}\leq r.
    \]
    Combining above analysis, we have
    \begin{align*}
        \E[H_0]{L_{p+1}\mid S_{p}}&\leq \frac{8e^8}{n-m}\sum_{j=1}^n \frac{r}{\eps_{p}\Pr[H_{0}]{S_{p}}}\cdot \frac{\Pr[H_{0}]{\tp{j\in \+J}\cap\tp{ N_j\leq r_p}\cap S_{p}}}{\Pr[H_j^{(p)}]{\tp{j\in \+J}\cap\tp{ N_j\leq r_p}\cap S_{p}}}\\
        &\leq \frac{16e^{12} r n}{(n-m)\eps_p \Pr[H_{0}]{S_{p}}}
    \end{align*}
    where the second inequality follows from Lemma~\ref{lem:likelihood}.
\end{proof}

\subsection{Regret lower bound}\label{sec:lb-lb}
Now we can proceed to establish a lower bound for the regret. Equipped with Lemma~\ref{lem:lb-prob-of-Sb} and~\ref{lem:lb-bound-E[Lb]}, we can choose $\set{\eps_p}_{p\in[P]}$ satisfying $\frac{c_1 r n}{(n-m)\eps_{p-1} \Pr[H_{0}]{S_{p-1}}} = \frac{1}{2^{P-p+3}}\cdot \ell_p$ to construct such event $S_p$ for each pass. Recall that we use $L_{P+1}$ to denote the sample complexity of the exploitation phase. This recursive process allows us to ultimately obtain an upper bound on $\E[H_0]{L_{P+1}\mid S_{P}}$. This will lead to a conflict since $\sum_{p=1}^{P+1}L_p $ should always equal to $T$.
We then formalize this intuition to establish an asymptotic lower bound on the regret in the following proof of \Cref{thm:lb-regret}.

\begin{proof}[Proof of \Cref{thm:lb-regret}]
    Assume 
    \[
        R(T,\+A)\leq r=\frac{1}{2^5c_1}\cdot T^{\frac{2^P}{2^{P+1}-1}} \cdot (n-m)^{1+\frac{(2^P-2)}{2^{P+1}-1}}\cdot  n^{\frac{2-2^{P+1}}{2^{P+1}-1}}\cdot \tp{\log(64nP)}^{\frac{1-2^P}{2^{P+1}-1}}
    \]    
    on any instances, where $c_1$ is the constant in Lemma~\ref{lem:lb-bound-E[Lb]}. 
    Let $\lambda_p=\frac{2^{P-p+1}-1}{2^{P+1}-1}$. Choose
    \begin{align}
        \eps_p&=(n-m)^{2-\frac{3}{2^p}} (c_1n)^{-1+\frac{1}{2^{p-1}}} (r\log(64nP))^{-1+\frac{1}{2^p}}\cdot 2^{-t_p}\notag\\
         &= 2^{-t_p+5-\frac{5}{2^p}}\cdot c_1^{\frac{1}{2^p}} \cdot T^{\frac{\lambda_p-1}{2}} \cdot (n-m)^{\frac{1}{2}-\frac{3\lambda_p}{2}}\cdot n^{\lambda_p} \cdot \tp{\log(64nP)}^{\frac{\lambda_p-1}{2}} \label{eq:lb-epsb}
    \end{align}
    where $t_p=P-p+6$. 
    
    Then we have $\E[H_0]{L_1}=\E[H_j^{(0)}]{L_1}\leq 2r$ for any $j\in[n]$ since the optimal arm of $H_j^{(0)}$ comes at the end and thus each pull during pass $1$ will contribute an expected regret of $\frac{1}{2}$. By our choice of $\eps_1$, $2r\leq \frac{1}{2^{P+2}}\cdot\frac{n-m}{2\eps_1^2 \log(64nP)}$. Let $S_0=\Omega$. From Lemma~\ref{lem:lb-prob-of-Sb}, we can find $S_1\subseteq \Omega$ with property $\+Q_1$ and $\Pr[H_0]{S_1}\geq 1-\frac{1}{4P}-\frac{1}{2^{P+1}}$. Then from Lemma~\ref{lem:lb-bound-E[Lb]}, $\E[H_0]{L_2\mid S_1}\leq \frac{c_1 rn}{(n-m)\eps_1 \Pr[H_0]{S_1}}< \frac{2c_1 rn}{(n-m)\eps_1 }$.
    
    Note that for $p=[P]\setminus\set{1}$, we have $\frac{2c_1 rn}{(n-m)\eps_{p-1}} =\frac{1}{2^{P-p+3}} \cdot \frac{n-m}{2\eps_p^2 \log(64nP)}$.  Assume $\E[H_0]{L_p\mid S_{p-1}}\leq \frac{2c_1 rn}{(n-m)\eps_{p-1}}$ for some $p\geq 2$ where $S_{p-1}\subseteq \Omega$ has property $\+Q_{p-1}$ and satisfies $\Pr[H_0]{S_{p-1}}\geq 1-\frac{p-1}{4P}-\frac{1}{2^{P-p+3}}$. According to Lemma~\ref{lem:lb-prob-of-Sb}, we can find $S_p\subseteq S_{p-1}$ with property $\+Q_p$ and satisfies $\Pr[H_0]{S_{p}}\geq 1-\frac{p}{4P}-\frac{1}{2^{P-p+2}}$. Then from Lemma~\ref{lem:lb-bound-E[Lb]}, $\E[H_0]{L_{p+1}\mid S_{p}}\leq \frac{2c_1 rn}{(n-m)\eps_{p}}$.
    
    Therefore, we can prove by induction that $\E[H_0]{L_{P+1}\mid S_{P}}\leq \frac{2c_1 rn}{(n-m)\eps_{P}}$ and $\Pr[H_0]{S_{P}}\geq \frac{1}{2}$. However, conditioned on $S_P$, we have $L_p\leq \ell_p$ for each $p\in [P]$. This violates the fact $\sum_{p=1}^{P+1} L_p = T$ since
    \begin{align*}
        \E[H_0]{\sum_{p=1}^{P+1} L_p\mid S_{P}}& \leq \sum_{p=1}^{P} \ell_p + \frac{2c_1 rn}{(n-m)\eps_{P}} \\
        &< \sum_{p=1}^{P} \frac{n-m}{2\eps_p^2 \log(64nP)} \\
        &\quad + \frac{1}{\sqrt{2}}n^{\frac{2-2^{P+1}}{2^{P+1}-1} +1 -\lambda_P}\cdot (n-m)^{\frac{2^P-2}{2^{P+1}-1} -\frac{1}{2}+ \frac{3\lambda_P}{2}}\cdot T^{\frac{2^P}{2^{P+1}-1}-\frac{\lambda_P-1}{2}}\cdot \tp{\log 64nP}^{\frac{1-2^P}{2^{P+1}-1}-\frac{\lambda_P-1}{2}}\\
        &\leq\frac{T}{2} + \frac{T}{\sqrt{2}}<T,
    \end{align*}
    where the third inequality holds because $\sum_{p=1}^{P} \frac{n-m}{2\eps_p^2 \log(64nP)}\leq \frac{T}{2}$ when $P\leq \log {\log T} - \log \tp{14\log 8(n-m)}$ and $T$ is sufficiently large.

\end{proof}

\bibliography{arxiv_version}

\begin{thebibliography}{SWXZ22}

\bibitem[AB09]{AB09}
Jean{-}Yves Audibert and S{\'{e}}bastien Bubeck.
\newblock Minimax policies for adversarial and stochastic bandits.
\newblock In {\em Proceedings of the 22nd Conference on Learning Theory (COLT)}, 2009.

\bibitem[AKP22]{AKP22}
Arpit Agarwal, Sanjeev Khanna, and Prathamesh Patil.
\newblock A sharp memory-regret trade-off for multi-pass streaming bandits.
\newblock In {\em Proceedings of the 35th Conference on Learning Theory (COLT)}, 2022.

\bibitem[AW20]{AW20}
Sepehr Assadi and Chen Wang.
\newblock Exploration with limited memory: streaming algorithms for coin tossing, noisy comparisons, and multi-armed bandits.
\newblock In {\em Proceedings of the 52nd Annual ACM SIGACT Symposium on Theory of Computing (STOC)}, 2020.

\bibitem[AW22]{AW22}
Sepehr Assadi and Chen Wang.
\newblock Single-pass streaming lower bounds for multi-armed bandits exploration with instance-sensitive sample complexity.
\newblock In {\em Proceedings of 36th Annual Conference on Neural Information Processing Systems 2021 (NeurIPS)}, 2022.

\bibitem[AW24]{AW24}
Sepehr Assadi and Chen Wang.
\newblock The best arm evades: Near-optimal multi-pass streaming lower bounds for pure exploration in multi-armed bandits.
\newblock In {\em Proceedings of the 37th Conference on Learning Theory (COLT)}, 2024.

\bibitem[CK20]{CK20}
Arghya~Roy Chaudhuri and Shivaram Kalyanakrishnan.
\newblock Regret minimisation in multi-armed bandits using bounded arm memory.
\newblock In {\em Proceedings of the AAAI Conference on Artificial Intelligence (AAAI)}, 2020.

\bibitem[Fel71]{Fe71}
William Feller.
\newblock {\em An introduction to probability theory and its applications, Volume 2, 2nd Edition}.
\newblock Wiley, 1971.

\bibitem[FOP20]{FOP20}
Moein Falahatgar, Alon Orlitsky, and Venkatadheeraj Pichapati.
\newblock Optimal sequential maximization: One interview is enough!
\newblock In {\em Proceedings of the 37th International Conference on Machine Learning (ICML)}, 2020.

\bibitem[FS97]{FS97}
Yoav Freund and Robert~E Schapire.
\newblock A decision-theoretic generalization of on-line learning and an application to boosting.
\newblock {\em Journal of computer and system sciences}, 55:119--139, 1997.

\bibitem[JHTX21]{JHTX21}
Tianyuan Jin, Keke Huang, Jing Tang, and Xiaokui Xiao.
\newblock Optimal streaming algorithms for multi-armed bandits.
\newblock In {\em Proceedings of the 38th International Conference on Machine Learning (ICML)}, 2021.

\bibitem[LG21]{LG21}
Tor Lattimore and Andras Gyorgy.
\newblock Mirror descent and the information ratio.
\newblock In {\em Proceedings of the 34th Conference on Learning Theory (COLT)}, 2021.

\bibitem[LS20]{LS20}
Tor Lattimore and Csaba Szepesv{\'a}ri.
\newblock {\em Bandit algorithms}.
\newblock Cambridge University Press, 2020.

\bibitem[LSPY18]{LSPY18}
David Liau, Zhao Song, Eric Price, and Ger Yang.
\newblock Stochastic multi-armed bandits in constant space.
\newblock In {\em Proceedings of the 21st International Conference on Artificial Intelligence and Statistics (AISTATS)}, 2018.

\bibitem[MPK21]{MPK21}
Arnab Maiti, Vishakha Patil, and Arindam Khan.
\newblock Multi-armed bandits with bounded arm-memory: Near-optimal guarantees for best-arm identification and regret minimization.
\newblock In {\em Proceedings of the 35th Annual Conference on Neural Information Processing Systems 2021 (NeurIPS)}, 2021.

\bibitem[MT04]{MT04}
Shie Mannor and John~N Tsitsiklis.
\newblock The sample complexity of exploration in the multi-armed bandit problem.
\newblock {\em Journal of Machine Learning Research}, 5:623--648, 2004.

\bibitem[PR23]{PR23}
Binghui Peng and Aviad Rubinstein.
\newblock Near optimal memory-regret tradeoff for online learning.
\newblock In {\em Proceedings of the 64th Annual Symposium on Foundations of Computer Science (FOCS)}, 2023.

\bibitem[PZ23]{PZ23}
Binghui Peng and Fred Zhang.
\newblock Online prediction in sub-linear space.
\newblock In {\em Proceedings of the 2023 Annual ACM-SIAM Symposium on Discrete Algorithms (SODA)}, 2023.

\bibitem[Rat21]{Rat21}
Santanu Rathod.
\newblock On reducing the order of arm-passes bandit streaming algorithms under memory bottleneck.
\newblock {\em arXiv preprint arXiv:2112.06130}, 2021.

\bibitem[Rob52]{Rob52}
Herbert Robbins.
\newblock {Some aspects of the sequential design of experiments}.
\newblock {\em Bulletin of the American Mathematical Society}, 58:527 -- 535, 1952.

\bibitem[SWXZ22]{SWXZ22}
Vaidehi Srinivas, David~P Woodruff, Ziyu Xu, and Samson Zhou.
\newblock Memory bounds for the experts problem.
\newblock In {\em Proceedings of the 54th Annual ACM SIGACT Symposium on Theory of Computing (STOC)}, 2022.

\bibitem[Wan23]{Wang23}
Chen Wang.
\newblock Tight regret bounds for single-pass streaming multi-armed bandits.
\newblock In {\em Proceedings of the 40th International Conference on Machine Learning (ICML)}, 2023.

\bibitem[WZZ23]{WZZ23}
David~P Woodruff, Fred Zhang, and Samson Zhou.
\newblock Streaming algorithms for learning with experts: Deterministic versus robust.
\newblock {\em arXiv preprint arXiv:2303.01709}, 2023.

\bibitem[XZ21]{XZ21}
Xiao Xu and Qing Zhao.
\newblock Memory-constrained no-regret learning in adversarial multi-armed bandits.
\newblock {\em {IEEE} Trans. Signal Process.}, 69:2371--2382, 2021.

\bibitem[Yao77]{Yao77}
Andrew Chi-Chin Yao.
\newblock Probabilistic computations: Toward a unified measure of complexity.
\newblock In {\em Proceedings of the 18th Annual Symposium on Foundations of Computer Science (FOCS)}, 1977.

\end{thebibliography}
\bibliographystyle{alpha}

\appendix

\section{The BAI Algorithm Used in \Cref{sec:ub-s}}\label{sec:ub-s-BAI}
In \Cref{sec:ub-s}, an $(\eps,\delta)$-PAC streaming algorithm, denoted as $\-{BAI}(\eps,\delta)$, is used as the main building block for the streaming MAB algorithm. Let $\+S\subseteq[n]$ be the set of arms that are input into $\-{BAI}(\eps,\delta)$ and let $\a^*_{\+S}$ be the optimal arm in $\+S$. We need to prove the following proposition.
\begin{proposition}[\Cref{prop:BAIblackbox} restated]
    There exists a $(\eps,\delta)$-PAC streaming algorithm $\-{BAI}(\eps,\delta)$ with input arm set $\+S$ satisfying:
    \begin{enumerate}
        \item \textbf{Correctness}: $\-{BAI}(\eps,\delta)$ returns an arm $\k\in \+S$ such that $\E{\mu_{\a^*_{\+S}} - \mu_{\k}}\leq \eps$.
        \item \textbf{Storage}: $\-{BAI}(\eps,\delta)$ uses a memory size of at most $m$. Furthermore, there is one position among $m$ memory slots specifically allocated for storing the newly arrived arm. 
        \item \textbf{Benchmark arm}: $\-{BAI}(\eps,\delta)$ maintains a good arm $\a^*_{\!{max}}$ in memory during its process and $\a^*_{\!{max}}$ always satisfies $\E{\mu_{k_1}-\mu_{\a^*_{\!{max}}}}\leq \eps$, where $k_i$ is the $i$-th arm fed into $\-{BAI}(\eps,\delta)$.
        \item \textbf{Regret guarantee}: When $\+S$ is a random set and for every $i$, $\mu-\E{\mu_{k_i}}\leq \eps'$ for some fixed numbers $\mu\in(0,1),\eps'\in(\eps,1)$, 
        the expected regret generated by the $\-{BAI}(\eps,\delta)$ process with regard to an arm with mean $\mu$ is bounded by $O\tp{\frac{n\eps'}{\eps^2}\tp{\log\tp{\frac{1}{\delta}} + \-{ilog}^{(m-1)}(n)}}$. (Here the expectation includes the randomness of $\+S$.)
    \end{enumerate}
\end{proposition}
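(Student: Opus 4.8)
The plan is to realize $\-{BAI}(\eps,\delta)$ as the multi-level streaming PAC algorithm of \cite{AW20,MPK21}, run with $r=\min\set{\lfloor\log^*(n+1)\rfloor,\,m-1}$ levels and with its accuracy and confidence budgets split across levels in the standard (geometric) way, and then to verify the four required properties in turn. Recall the shape of that algorithm: the arriving arms are processed by a recursion of depth $r$; a level-$\ell$ instance reads in arms — or the champions produced by the level-$(\ell-1)$ sub-instances — one at a time, keeps a single \emph{champion}, and for each newcomer runs a cheap pairwise comparison that, using $O\tp{\eps_\ell^{-2}\log\delta_\ell^{-1}}$ samples, decides with confidence $1-\delta_\ell$ whether to keep the champion or to replace it. Each level uses exactly one memory slot for its champion, and one further slot, shared by all levels, always holds the arm currently being read from the stream. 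Since $r\le m-1$, this fits into $m$ slots and the shared slot is precisely the ``incoming arm'' slot demanded by Property~2; establishing Property~2 then amounts to checking the slot-reuse invariant, namely that this slot is released as soon as the top-level comparison resolves, by either discarding the newcomer or promoting it into the level-$1$ champion slot. I would also note that the number of arms the instance can see is bounded deterministically (at most $n$, or $n+1$ once $\k_{p-1}$ is prepended as in \Cref{algo:small-m}), so all the bounds below are phrased in terms of this worst-case size and hold for every realization of $\+S$; averaging over a random $\+S$ is then immediate.

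For Property~1, the standard analysis gives that $\k$ is $\eps$-optimal in $\+S$ with probability at least $1-\delta$, by a union bound over the $O(\log n)$ comparisons lying on the ``path'' of $\a^*_{\+S}$ through the recursion, with $\sum_\ell\eps_\ell=\eps$ and $\sum_\ell\delta_\ell=\delta$. To upgrade this to the expectation statement $\E{\mu_{\a^*_{\+S}}-\mu_{\k}}\le\eps$ I would run the instance internally with accuracy $\eps/2$ and failure probability $\min\set{\delta,\eps/2}$: on the good event the gap is at most $\eps/2$, and since rewards lie in $[0,1]$ it is at most $1$ on the bad event, so the expected gap is at most $\eps/2+\eps/2=\eps$. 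This rescaling changes the sample complexity only by a constant in $\eps$ and by an additive $\log(2/\eps)$ inside the confidence term, which I will argue is dominated by the other terms in the parameter regime of \Cref{sec:ub-s-MAB}. Property~3 follows along the same lines: take $\a^*_{\!{max}}$ to be the current level-$1$ champion; on the ``path'' from the first fed arm $k_1$ up to $\a^*_{\!{max}}$, every comparison replaces an arm only by one that is not more than $\eps_\ell$ worse, so telescoping together with the same high-probability-to-expectation conversion gives $\E{\mu_{k_1}-\mu_{\a^*_{\!{max}}}}\le\eps$ at all times during the run.

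The substantive step, and the one I expect to be the main obstacle, is Property~4 (the regret bound). The idea is to charge regret level by level. At level $\ell$ there are at most $n_\ell$ distinct arms that ever become active, where $n_1$ is at most the stream length and $n_{\ell+1}$ is the (random) number of level-$\ell$ champions, which contracts by roughly the level-$\ell$ fan-in; each active arm is pulled $O\tp{\eps_\ell^{-2}\log\delta_\ell^{-1}}$ times by the comparison subroutine. The key structural fact is that any arm appearing at level $\ell$ is, on the path high-probability event, within $\sum_{j<\ell}\eps_j\le\eps$ of the best \emph{original} arm of the sub-stream feeding it, and every original arm $k_i$ satisfies $\mu-\E{\mu_{k_i}}\le\eps'$ by hypothesis; hence (using $\eps<\eps'$) every arm pulled anywhere in the algorithm has expected gap $O(\eps')$ to the reference arm of mean $\mu$. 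Multiplying per-level sample counts by $n_\ell$ and by $O(\eps')$ and summing over $\ell$, level $1$ contributes $O\tp{\frac{n\eps'}{\eps^2}\log\frac1\delta}$, and the deeper levels contribute $O\tp{\frac{n\eps'}{\eps^2}\,\-{ilog}^{(m-1)}(n)}$, the iterated logarithm entering because when only $r=m-1$ levels are allowed the fan-in at the deepest level is on the order of $\-{ilog}^{(m-1)}(n)$ and the confidence parameter there must be set to union-bound over it. The care needed is that $n_\ell$, the identities of the champions, and the sample counts are all random and mutually dependent; I would handle this by bounding per-arm and per-level quantities by their deterministic worst-case values before taking expectations, and by conditioning on the single high-probability ``path'' event so that the ``expected gap $O(\eps')$'' bound is available simultaneously for all encountered arms, while on the complementary event the total number of pulls is still at most the (bounded) sample complexity and each gap is at most $1$, contributing only a lower-order term. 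Finally, since the stated bound already depends only on $n$, averaging over the random set $\+S$ requires no further work.
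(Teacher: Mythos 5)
Your overall plan --- instantiate $\-{BAI}(\eps,\delta)$ as the multi-level algorithm of \cite{AW20,MPK21} with $r\le m-1$ levels and verify the four properties --- is exactly the route the paper takes, and your treatment of Property~2 and of the level structure is fine. The genuine gap is in how you convert high-probability guarantees into the \emph{expectation} guarantees of Properties~1, 3 and~4. You propose to run the algorithm with confidence $\min\set{\delta,\eps/2}$ and bound the gap by $1$ on the failure event. This breaks Property~4: the per-arm sample count then carries an extra additive $\log(2/\eps)$ inside the confidence term, and in the regime where the proposition is invoked, namely $\eps=\eps_p=\Theta\tp{2^{P-p+1}\tp{(n+1)\cdot\-{ilog}^{(m-1)}(n+1)/T}^{(1-\lambda_p)/2}}$, this is $\Theta(\log T)$, which is \emph{not} dominated by $\log(1/\delta)+\-{ilog}^{(m-1)}(n)$ with $\delta=1/4$. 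The resulting regret carries the $\log T$ factor of \cite{AKP22}, i.e.\ precisely the factor the paper is designed to remove, and in particular the bound stated in Property~4 is not met. The same issue reappears inside your Property~4 argument itself: on the complementary ``bad path'' event, which has probability $\Theta(\delta)=\Theta(1)$, you bound each per-pull gap by $1$ rather than $O(\eps')$, so that term contributes $\Theta\tp{\frac{n}{\eps^2}(\cdots)}$, exceeding the target $O\tp{\frac{n\eps'}{\eps^2}(\cdots)}$ by a factor of $1/\eps'$; it is not lower order.

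The missing ingredient is the paper's smooth-failure-probability argument. Instead of a single good/bad event, one proves a Gaussian-type tail bound valid for \emph{every} deviation level: with $s_L$ samples per comparison at level $L$, $\Pr{\mu_{a_L^*}\le\mu_{\a^*_{A_L}}-\Delta}\le c_L\cdot\exp{-s_L\Delta^2/2}$ for all $\Delta>0$ (Lemma~\ref{lem:prob-armL}), and then integrates the tail (Lemma~\ref{lem:prob-to-exp}) to obtain $\E{\mu_{\a^*_{A_L}}-\mu_{a_L^*}}\le\eps\cdot 2^{-L}$ directly, with no boosting of $\delta$ and no bad-event remainder. Telescoping over levels yields Properties~1 and~3, and Property~4 follows by linearity of expectation: every arm ever pulled at level $\ell$ satisfies $\E{\mu-\mu_a}\le 2\eps'$ unconditionally, the random level populations are handled by padding $U_\ell$ with virtual arms of mean $\mu$ up to the deterministic cap $\lfloor n/\prod_{j<\ell}c_j\rfloor$, and the per-arm sample counts $s_\ell$ are deterministic. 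Without replacing your good-event conditioning by this tail-integration scheme, the proposition as stated --- in particular the $\log(1/\delta)+\-{ilog}^{(m-1)}(n)$ form of Property~4 --- is not established.
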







We will show that the multi-level $(\eps,\delta)$-PAC algorithm designed in~\cite{AW20} and~\cite{MPK21} satisfies all these four properties. Recall that $\-{ilog}^{(k)}(a)=\max\set{\log\tp{\-{ilog}^{(k-1)}(a)}, 1}$ for any $a\geq 1$, $k\in \bb N^+$. This algorithm can find an $\eps$-best arm among $n$ input arms with probability $1-\delta$ by storing at most $m$ arms $(2\leq m\leq \log^* n+1)$ using $O\tp{\frac{n}{\eps^2}\tp{\log \tp{\frac{1}{\delta}} +\-{ilog}^{(m-1)}(n)}}$ samples.

\subsection{The algorithm}
\paragraph{Overview of the algorithm} 

The algorithm maintains at most $r=m-1$ arms, $a^*_1,\dots,a^*_{r}$. For each $\ell\in [r]$, during the execution, $a_\ell^*$ is the current best arm of level $\ell$. For each arriving $\!{arm}_i$, it starts from the first level and tries to challenge $a_\ell^*$ for increasing $\ell=1,2,\dots$. Basically, the challenge proceeds as follows. At level $\ell$, the algorithm samples the challenger arm for $s_\ell$ times and compares the empirical mean with that of the current $a_\ell^*$. If the challenger loses, it will be dropped and the algorithm reads the next arm from the stream. Otherwise, it substitutes $a_\ell^*$.

Once the challenger finishes the challenge (either being dropped or substituting the previous $a_\ell^*$), a counter $C_\ell$ increases by one. As long as the counter meets a threshold $c_\ell$, the best arm $a_\ell^*$ will level up and start to challenge $a_{\ell+1}^*$, in a way similar to level $\ell$. After this, all our previous statistics at level $\ell$ will be reset. The net effect is that, for each $\ell$, every time a group of $c_\ell$ challengers arrives at the $\ell$-th level, the algorithm sends the winner of these $c_\ell$ arms to level $\ell+1$. Finally, after processing the whole stream, the algorithm outputs the best of $a_\ell^*$ for $\ell\in [r]$.

\paragraph{The description of the algorithm}

Let $\eps\in (0,1)$ and $\delta\in (0, \frac{1}{4}]$. Let $r=m-1$ where $m$ is the memory size. We assume $2\leq m\leq \log^*n +1$. For the convenience of designing streaming MAB algorithm, we allow the input arm set of the BAI algorithm to be some $\+S\subseteq[n]$. Let $n'=\abs{\+S}$ be the total number of arms in the stream the BAI algorithm needs to handle. It is guaranteed that $n'\leq n$, but $n'$ itself is not known beforehand. Actually, when executing the BAI subroutine in our streaming MAB algorithm, $\+S$ can be a random set. We would show that property $1$, $2$ and $3$ in \Cref{prop:BAIblackbox} holds whatever $\+S$ is and therefore, without loss of generality, we regard $\+S$ as deterministic and assume $\+S=[n']$ when analyzing these properties. We only consider the randomness of $\+S$ when proving the fourth property.

Set the following parameters:
\begin{itemize}
    \item $\set{c_{\ell}}_{\ell= 1}^r:  c_{\ell}=\lceil \-{ilog}^{(r-\ell)}(n) \rceil$, which is the number of arms needed to arrive at each level before a promotion;
    \item $\set{s_{\ell}}_{\ell= 1}^r: s_{\ell} = \lceil \frac{2^{2\ell+3}}{\eps^2}\tp{\log \frac{2^{\ell+2}c_{\ell}}{\delta}} \rceil$, which is the sample times per arm at each level.
\end{itemize}
The algorithm is described in \Cref{algo:epsBAI}. Obviously, property $2$ of \Cref{prop:BAIblackbox} is satisfied in \Cref{algo:epsBAI}.

\begin{algorithm}[ht]
    \caption{Find an $\eps$-best arm in a stream $\a_1,\a_2,\dots,\a_{n'}$}
    \label{algo:epsBAI}
\begin{algorithmic}[1]
    \Procedure{BAI}{$\eps,\delta$}
    \ForAll{$\ell\in [r]$}
        \State $C_\ell\gets 0$, $\hat\mu_\ell^*\gets 0$, $a_\ell^*\gets \*{NULL}$ \Comment{Initialize}
    \EndFor
    \For{each arriving $\a_i$}
        \State $\a^o\gets \a_i$, $\ell\gets 1$
        \While{$\ell\le r$}
            \State sample $\a^o$ for $s_\ell$ times and compute its empirical mean $\hat\mu_{\a^o}$
        \If{$\hat\mu_{\a^o}<\hat\mu_\ell^*$}
            \State drop $\a^o$ from memory
        \Else
            \State $a_\ell^*\gets \a^o$, $\hat\mu_\ell^*\gets \hat\mu_{\a^o}$
        \EndIf
        \State $C_\ell\gets C_\ell+1$
        \If{$C_\ell = c_\ell$}
            \State $\a^o\gets a_\ell^*$, $C_\ell=0$, $\hat\mu_\ell^*\gets 0$, $a_\ell^*\gets \*{NULL}$
            \State $\ell\gets \ell+1$
        \Else{ \textbf{break}}
        \EndIf
        \EndWhile
    \EndFor
    \State sample each non-NULL $a^*_{\ell}$ ($\ell\in [r]$) for $s_r$ times and compute new $\hat \mu^*_{\ell}$\label{line: tail in}
    \State $\k\gets \mbox{the arm with highest $\hat\mu^*_{\ell}$ for $\ell\in[r]$}$
        \State \Return{$\k$}
    \EndProcedure
\end{algorithmic}
\end{algorithm}

\subsection{The analysis}
\subsubsection{Property $1$ and $3$ of \Cref{prop:BAIblackbox}}
In the following discussion, we fix a time during the execution of \Cref{algo:epsBAI} and consider a level $L\le r$ such that $a_L^*\ne\*{NULL}$. It follows from our algorithm that $a_L^*$ has ``competed with'' $C_L$ arms (counting itself) at the same level. We use $A_L$ to denote this set of $C_L$ arms and let $\a^*_{A_L}\defeq \argmax_{a\in A_L} \mu_{a}$, namely the arm in $A_L$ with the largest mean. The next lemma states that with high probability the mean $\mu_{a_L^*}$ is large comparing to $\mu_{\a^*_{A_L}}$. This indicates that the best arm considered by the algorithm is comparable with the actual best arm.

\begin{lemma}\label{lem:prob-armL}
    For any $\Delta>0$, it holds that
    \(
        \Pr{\mu_{a_L^*}\le \mu_{\a^*_{A_L}} -\Delta} \le c_L\cdot \exp{-\frac{s_L\Delta^2}{2}}.
    \)
\end{lemma}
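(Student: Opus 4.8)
The statement to prove is Lemma~\ref{lem:prob-armL}: for any $\Delta > 0$,
\[
\Pr{\mu_{a_L^*} \le \mu_{\a^*_{A_L}} - \Delta} \le c_L \cdot \exp{-\frac{s_L \Delta^2}{2}}.
\]
The plan is to condition on the identity of $\a^*_{A_L}$ — the true-best arm among the $C_L$ arms that have competed at level $L$ — and argue that for $a_L^*$ (the empirical winner) to end up with mean more than $\Delta$ below $\mu_{\a^*_{A_L}}$, the arm $\a^*_{A_L}$ must at some point have been beaten in an empirical-mean comparison at level $L$ by an arm whose true mean is at least $\Delta$ worse. Each such comparison involves $s_L$ fresh samples of each arm, so a Hoeffding-type bound controls the probability that the empirical means cross despite a true gap of $\Delta$; a union bound over the at most $c_L$ competing arms then gives the claimed bound.

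**Key steps.** First I would fix the set $A_L$ of $C_L \le c_L$ arms that have passed through level $L$ and let $a^\star := \a^*_{A_L}$. If $a_L^* = a^\star$ the event in question is empty (its true mean gap is $0 < \Delta$), so assume $a_L^* \ne a^\star$. Since $a^\star$ is not the arm currently held at level $L$, it must have lost a comparison at level $L$: either $a^\star$ arrived, was compared against the then-current $a_L^*$ with a smaller empirical mean and was dropped, or $a^\star$ was briefly installed as $a_L^*$ and later a challenger overtook it. In the event $\{\mu_{a_L^*} \le \mu_{a^\star} - \Delta\}$, the arm that eventually "defeats the line of descendants of $a^\star$" — trace the chain of empirical winners forward from the moment $a^\star$ left — has true mean at most $\mu_{a^\star} - \Delta$; I would isolate the single comparison in that chain where an arm with true mean $> \mu_{a^\star} - \Delta$ (namely $a^\star$ itself, or a descendant still above the threshold) was beaten by an arm with true mean $\le \mu_{a^\star} - \Delta$. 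At that comparison, with $X$ the empirical mean of the higher-mean arm from $s_L$ samples and $Y$ that of the lower-mean arm from $s_L$ samples, we have $X < Y$ while $\E{X} - \E{Y} \ge \Delta$; by Hoeffding (or Lemma~\ref{lem:lb-concentration} with $N = s_L$, $[a,b]=[0,1]$, $s = s_L\Delta/2$ applied to each of $X - \E X$ and $Y - \E Y$), $\Pr{X < Y} \le \exp{-s_L\Delta^2/2}$. Finally, union-bounding over which of the at most $c_L$ arms in $A_L$ plays the role of the "losing high-mean arm" in this comparison gives the factor $c_L$, completing the proof.

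**Main obstacle.** The delicate point is making the chain-of-comparisons argument airtight while keeping the samples independent: the algorithm resets $\hat\mu_L^*$ to $0$ and re-samples after each level-up prompt, and within a level each arriving challenger is sampled $s_L$ fresh times, so the comparisons at level $L$ within one "epoch" use mutually independent batches of $s_L$ samples. I must be careful that the arm $a^\star$ and each challenger it is compared with use disjoint, fresh sample sets (which they do, since $a^\star$ as a challenger is sampled $s_L$ times on arrival, and each later challenger is sampled $s_L$ times on its own arrival), so that the crossing event at the critical comparison genuinely has probability $\le \exp{-s_L\Delta^2/2}$ and the union bound over the at most $c_L$ candidate "losing" arms is valid. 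A minor subtlety is that $a^\star$ itself, once installed, is \emph{not} re-sampled at level $L$ before the next challenger — but its empirical mean $\hat\mu_L^*$ was computed from the $s_L$ samples taken when it arrived, which is exactly what the comparison uses, so independence is preserved. Handling the edge cases ($C_L < c_L$, $a^\star$ never installed, etc.) uniformly is routine once the core crossing bound is in place.
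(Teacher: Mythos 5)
Your overall strategy---reduce the bad event to an empirical-mean crossing between two arms whose true means differ by at least $\Delta$, apply Hoeffding to the $s_L$-sample batches, and union-bound over the at most $c_L$ arms of $A_L$---is the same as the paper's, and your observations about independence of the sample batches are correct. However, the specific comparison you isolate does not carry the gap you need. At the threshold-crossing comparison in your chain, the loser is only guaranteed to have true mean strictly greater than $\mu_{\a^*_{A_L}}-\Delta$ and the winner true mean at most $\mu_{\a^*_{A_L}}-\Delta$; the true gap between \emph{these two particular arms} can be arbitrarily small (for instance, $\a^*_{A_L}$ is first dethroned by an arm of mean $\mu_{\a^*_{A_L}}-\Delta/2$, which is later dethroned by an arm of mean $\mu_{\a^*_{A_L}}-\Delta$). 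So the claim $\E{X}-\E{Y}\ge\Delta$ fails at that comparison, the bound $\Pr{X<Y}\le \exp{-s_L\Delta^2/2}$ does not apply to it, and no union bound over candidate losers repairs this.

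The repair---which is what the paper's proof does---is to avoid the chain of pairwise comparisons altogether. Within one epoch at level $L$, the stored value $\hat\mu_L^*$ is non-decreasing and always equals the maximum of the empirical means of all challengers seen so far in that epoch, each computed from its own fresh batch of $s_L$ samples; hence the final champion satisfies $\hat\mu_{a_L^*}\ge \hat\mu_{\a^*_{A_L}}$ \emph{directly}. On the event $\set{\mu_{a_L^*}\le \mu_{\a^*_{A_L}}-\Delta}$ there therefore exists an arm $a\in A_L$ with $\mu_a\le \mu_{\a^*_{A_L}}-\Delta$ whose empirical mean beats that of $\a^*_{A_L}$ itself. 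For each such candidate $a$ the true gap to $\a^*_{A_L}$ genuinely is at least $\Delta$, the two empirical means come from independent batches of $s_L$ samples so Hoeffding gives $\exp{-s_L\Delta^2/2}$, and the union bound over $\abs{A_L}\le c_L$ arms finishes. Your argument becomes correct once you replace the ``critical comparison in the chain'' by this direct comparison of each low-mean arm against $\a^*_{A_L}$.
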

\begin{proof}
    Consider the last duration that $C_L$ increases from $0$ to its current value. Our algorithm effectively does the following: It samples each arm in $A_L$ for $s_L$ times and let $a_L^*$ be the one with the largest empirical mean. As a result, if the event ``$\mu_{a_L^*}\le \mu_{\a^*_{A_L}} -\Delta$'' happens, then there must be some arm $a$ with $\mu_a \le \mu_{\a^*_{A_L}}-\Delta$ who beats $\a^*_{A_L}$. This implies
    \begin{align*}
        \Pr{\mu_{a_L^*}\le \mu_{\a^*_{A_L}} -\Delta} 
        &\le \Pr{\exists a\in A_L\colon \tp{\mu_a \le \mu_{\a^*_{A_L}}-\Delta} \land \tp{\hat\mu_a >\hat\mu_{\a^*_{A_L}}}}\\
        &\le \sum_{a\in A_L}\Pr{\hat\mu_a >\hat\mu_{\a^*_{A_L}}\mid \mu_a\le \mu_{\a^*_{A_L}}-\Delta}.
    \end{align*}
    The lemma then follows from the Hoeffding's inequality and the fact that $\abs{A_L} = C_L\le c_L$
\end{proof}

Lemma~\ref{lem:prob-armL} captures the smooth failure probability of the algorithm. This property was initially mentioned in~\cite{Wang23}. Actually, the essential aim of a good MAB algorithm to obtain an $\eps$-optimal arm in expectation after the stream ends. This is also what the smooth failure probability property is to fulfil in~\cite{Wang23}. The following technical lemma will help us convert high probability results into expected results we want. The proof of \Cref{lem:prob-to-exp} is deferred to \Cref{subsec: pf-ub-prob2exp}.

\begin{lemma}\label{lem:prob-to-exp}
    Let $X$ be a real-valued random variable. For any $L\in [r]$, if it holds that $\Pr{X\ge t} \le c_L\cdot\exp{-\frac{s_L\cdot t^2}{2}}$ for every $t\ge 0$, then $\E{X}\le \eps\cdot 2^{-L}$.
\end{lemma}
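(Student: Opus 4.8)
\textbf{Proof proposal for Lemma~\ref{lem:prob-to-exp}.}

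The plan is to bound $\E{X}$ by splitting the expectation into a ``small'' regime and a ``tail'' regime at the threshold $t_0 = \eps\cdot 2^{-L-1}$, and to choose the splitting point so that the contribution of the small regime is at most half the target bound while the tail contribution is also at most half. First I would write $\E{X}\le t_0 + \int_{t_0}^{\infty}\Pr{X\ge t}\,\mathrm{d} t$, using the standard layer-cake bound for a real-valued random variable (note: since $X$ need not be nonnegative, one should work with $\E{\max(X,0)}$, but the hypothesis gives control only on the upper tail, so this is exactly the quantity we can bound, and $\E{X}\le\E{\max(X,0)}$; actually, to be safe I will just bound $\E{X}\le t_0+\int_{t_0}^\infty \Pr{X\ge t}\,\mathrm dt$ which holds whenever $\Pr{X\ge t}$ is integrable on $[t_0,\infty)$). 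Then I would plug in the hypothesis $\Pr{X\ge t}\le c_L\exp{-s_L t^2/2}$ and estimate the Gaussian-type integral.

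The key computation is to show $\int_{t_0}^\infty c_L\exp{-s_L t^2/2}\,\mathrm dt \le \eps\cdot 2^{-L-1}$. Here I would use a crude bound on the Gaussian tail: for $t\ge t_0$, $\exp{-s_L t^2/2}\le \exp{-s_L t_0 t/2}$ when... no, better: $\exp{-s_L t^2/2} = \exp{-s_L t^2/4}\exp{-s_L t^2/4}\le \exp{-s_L t_0^2/4}\exp{-s_L t^2/4}$ for $t\ge t_0$, and then $\int_{t_0}^\infty \exp{-s_L t^2/4}\,\mathrm dt \le \int_0^\infty \exp{-s_L t^2/4}\,\mathrm dt = \sqrt{\pi/s_L}$. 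So the tail integral is at most $c_L\sqrt{\pi/s_L}\exp{-s_L t_0^2/4}$. Now recall $s_L = \lceil \frac{2^{2L+3}}{\eps^2}\log\frac{2^{L+2}c_L}{\delta}\rceil \ge \frac{2^{2L+3}}{\eps^2}\log\frac{2^{L+2}c_L}{\delta}$ and $t_0^2 = \eps^2 2^{-2L-2}$, so $s_L t_0^2/4 \ge \frac{2^{2L+3}}{\eps^2}\cdot\frac{\eps^2}{2^{2L+2}}\cdot\frac14\log\frac{2^{L+2}c_L}{\delta} = \frac12\log\frac{2^{L+2}c_L}{\delta}$, giving $\exp{-s_L t_0^2/4}\le \sqrt{\delta/(2^{L+2}c_L)}$. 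Hence the tail integral is at most $c_L\sqrt{\pi/s_L}\cdot\sqrt{\delta/(2^{L+2}c_L)} = \sqrt{\pi\delta c_L/(s_L 2^{L+2})}$; using $s_L\ge \frac{2^{2L+3}}{\eps^2}$ (since $\log\frac{2^{L+2}c_L}{\delta}\ge 1$ because $c_L\ge 1$, $\delta\le 1/4$, $L\ge 1$) this is at most $\sqrt{\pi\delta c_L \eps^2/(2^{2L+3}2^{L+2})} = \eps\sqrt{\pi\delta c_L}\cdot 2^{-(3L+5)/2}$. Since $\delta\le 1/4$ and... here I need $c_L$ to not blow this up, but $c_L$ can be large; however it sits under a square root against the exponential factor which I already used, so I should be more careful and keep the $c_L$ in $s_L$ rather than discarding it. Keeping $s_L\ge \frac{2^{2L+3}}{\eps^2}\log\frac{2^{L+2}c_L}{\delta}\ge \frac{2^{2L+3}}{\eps^2}\log(2^{L+2}c_L)$, the bound $\sqrt{\pi\delta c_L/(s_L 2^{L+2})}\le \sqrt{\pi\delta c_L\eps^2/(2^{3L+5}\log(2^{L+2}c_L))}\le \eps\sqrt{\pi\delta}\,2^{-(3L+5)/2}\sqrt{c_L/\log(2^{L+2}c_L)}$, and since $c_L/\log(2^{L+2}c_L)\le c_L$ grows at most linearly while we have $2^{-3L/2}$... the cleanest route is actually to absorb everything: I claim $\int_{t_0}^\infty c_L e^{-s_Lt^2/2}\,\mathrm dt \le \eps 2^{-L-1}$ follows by a direct comparison once one observes $s_L t_0^2 \ge 2\log(2^{L+2}c_L/\delta)$, so that $c_L e^{-s_L t^2/2}\le c_L e^{-s_L t_0 t/2}$ for $t\ge t_0$ isn't quite an equality-beating move either.

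The cleanest approach, which I would actually carry out: bound $\int_{t_0}^\infty e^{-s_L t^2/2}\,\mathrm dt$ by $\frac{1}{s_L t_0}e^{-s_L t_0^2/2}$ via the standard inequality $\int_a^\infty e^{-x^2 b/2}\,\mathrm dx \le \frac{1}{ab}e^{-a^2 b/2}$. Then the tail integral is at most $\frac{c_L}{s_L t_0}e^{-s_L t_0^2/2}$. We have $s_L t_0^2 \ge \frac{2^{2L+3}}{\eps^2}\cdot\frac{\eps^2}{2^{2L+2}}\log\frac{2^{L+2}c_L}{\delta} = 2\log\frac{2^{L+2}c_L}{\delta}$, so $e^{-s_L t_0^2/2}\le \frac{\delta}{2^{L+2}c_L}$, and $s_L t_0 \ge \frac{s_L t_0^2}{t_0}\cdot\frac{1}{\eps 2^{-L-1}}\cdot t_0$... more directly $s_L t_0 = s_L\cdot\eps 2^{-L-1}\ge \frac{2^{2L+3}}{\eps^2}\cdot\eps 2^{-L-1} = 2^{L+2}/\eps\ge 1$. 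Therefore the tail integral is at most $c_L\cdot\frac{\eps}{2^{L+2}}\cdot\frac{\delta}{2^{L+2}c_L} = \frac{\eps\delta}{2^{2L+4}}\le \frac{\eps}{2^{L+2}}$ (using $\delta\le 1$, $2^{L+2}\le 2^{2L+4}$). Combining, $\E{X}\le t_0 + \frac{\eps}{2^{L+2}} = \eps 2^{-L-1} + \eps 2^{-L-2} < \eps 2^{-L}$, as desired. The main obstacle is purely bookkeeping: making sure every inequality goes the right way with the ceiling in $s_L$ and the constants $c_L\ge 1$, $\delta\le 1/4$, $L\ge 1$, and picking the comparison point $t_0$ so both pieces come in comfortably under $\eps 2^{-L}$ — there is a bit of slack, so any reasonable choice like $t_0=\eps 2^{-L-1}$ works.
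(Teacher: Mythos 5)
Your proposal is correct and follows essentially the same route as the paper: bound $\E{X}$ by $\E{\max(X,0)}=\int_0^\infty\Pr{X\ge t}\,\mathrm{d}t$, split at $t_0=\eps\cdot 2^{-L-1}$, bound the head trivially by $t_0$, and use $s_L\ge \frac{2^{2L+3}}{\eps^2}\log\frac{2^{L+2}c_L}{\delta}$ to kill the $c_L$ factor in the tail. The only (cosmetic) difference is the tail estimate — you use the Mills-ratio bound $\int_a^\infty e^{-bx^2/2}\,\mathrm{d}x\le\frac{1}{ab}e^{-a^2b/2}$, while the paper rewrites the exponential as a power of $2^{-(L+4)}$ and extends to a full Gaussian integral; both close the argument with room to spare.
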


Recall that $\a^*_{A_L}\defeq \argmax_{a\in A_L} \mu_{a}$ is the arm in $A_L$ with the largest mean. With Lemma~\ref{lem:prob-to-exp}, we can convert the result of Lemma~\ref{lem:prob-armL} into a bound of expected gap between $\a^*_{A_L}$ and $a_L^*$.
\begin{corollary}\label{cor:expectation-armL}
    \(
        \E{\mu_{\a^*_{A_L}}-\mu_{a_L^*}} \le \eps\cdot 2^{-L}.
    \)
\end{corollary}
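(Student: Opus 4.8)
The plan is to obtain the corollary as an immediate packaging of Lemma~\ref{lem:prob-armL} together with the conversion lemma, Lemma~\ref{lem:prob-to-exp}. Set $X \defeq \mu_{\a^*_{A_L}} - \mu_{a_L^*}$. Since $a_L^* \in A_L$ and $\a^*_{A_L} \defeq \argmax_{a\in A_L}\mu_a$, the variable $X$ is nonnegative, and for every $t>0$ the event $\set{X \ge t}$ coincides with the event $\set{\mu_{a_L^*} \le \mu_{\a^*_{A_L}} - t}$ controlled by Lemma~\ref{lem:prob-armL}; applying that lemma with $\Delta = t$ gives $\Pr{X \ge t} \le c_L\cdot\exp{-\tfrac{s_L t^2}{2}}$.

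The one point requiring separate attention is the boundary value $t=0$, which Lemma~\ref{lem:prob-armL} does not cover (it assumes $\Delta>0$) but which Lemma~\ref{lem:prob-to-exp} needs in its hypothesis. This is trivial: $\Pr{X \ge 0}\le 1\le c_L$, since $c_L = \ceil{\-{ilog}^{(r-L)}(n)}\ge 1$. Hence the tail bound $\Pr{X \ge t}\le c_L\exp{-s_L t^2/2}$ holds for all $t\ge 0$, and feeding this into Lemma~\ref{lem:prob-to-exp} (with our $X$ and the level $L$) yields $\E{X}\le \eps\cdot 2^{-L}$, which is exactly the claimed inequality.

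There is essentially no obstacle here; the substance of the argument lives in Lemma~\ref{lem:prob-armL} and in the deferred proof of Lemma~\ref{lem:prob-to-exp}, where the geometrically growing parameters $s_L$ and $c_L$ are chosen precisely so that the sub-Gaussian-type tail integrates to $\eps\cdot 2^{-L}$. If one preferred a self-contained derivation instead of citing Lemma~\ref{lem:prob-to-exp}, one could write $\E{X} = \int_0^\infty \Pr{X \ge t}\,dt \le \int_0^\infty c_L\exp{-s_L t^2/2}\,dt = c_L\sqrt{\pi/(2 s_L)}$ and check that the choice $s_L = \ceil{(2^{2L+3}/\eps^2)\log(2^{L+2}c_L/\delta)}$ makes the right-hand side at most $\eps\cdot 2^{-L}$; but since Lemma~\ref{lem:prob-to-exp} is already available, this extra computation is unnecessary.
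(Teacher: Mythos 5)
Your main argument is correct and is exactly the paper's route: the corollary is obtained by feeding the tail bound of Lemma~\ref{lem:prob-armL} into the conversion Lemma~\ref{lem:prob-to-exp}, and your handling of the $t=0$ boundary is fine. One caution about your closing aside: the "self-contained" alternative of integrating the tail directly, $\int_0^\infty c_L\exp\tp{-s_Lt^2/2}\dd t = c_L\sqrt{\pi/(2s_L)}$, does \emph{not} in general come out to at most $\eps\cdot 2^{-L}$ — since $c_L$ enters $s_L$ only logarithmically, at the top level $L=r$ (where $c_r=n$) this bound is of order $\eps n/(2^{r}\sqrt{\log n})$, which is far too large; this is precisely why the proof of Lemma~\ref{lem:prob-to-exp} splits the integral at $\eps/2^{L+1}$ rather than integrating the Gaussian tail from zero. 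As you do not rely on that aside, the proof stands.
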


Each arm $a\in A_L$ was once $a_{L-1}^*$ and therefore defeated $c_{L-1}$ arms at the $L-1$ level. As a result, it has been effectively compared with the empirical mean of consecutive $\prod_{\ell=1}^{L-1} c_\ell$ arms in the stream (counting itself). The arm $a_L^*$ therefore has been effectively compared with $C_L\cdot \prod_{\ell=1}^{L-1} c_\ell$ distinct arms. We use $D_L$ to denote this set of arms. Then we can obtain the following lemma, which guarantees that the mean reward of $a^*_L$ is expected to be good compared to the best arm in $D_L$.

\begin{lemma}\label{lem:expectation-all-level}
    Let $\a^*_{D_L} = \arg\max_{a\in D_L} \mu_a$. Then $\E{\mu_{\a^*_{D_L}}-\mu_{a_L^*}}\le \eps$.
\end{lemma}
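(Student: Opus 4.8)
\textbf{Proof proposal for Lemma~\ref{lem:expectation-all-level}.}

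The plan is to induct on the level $L$, using Corollary~\ref{cor:expectation-armL} as the one-step improvement at each level and carefully tracking how the ``best arm seen so far'' propagates upward. The base case $L=1$ is exactly Corollary~\ref{cor:expectation-armL}, since $D_1 = A_1$ and $\a^*_{D_1} = \a^*_{A_1}$, giving $\E{\mu_{\a^*_{D_1}} - \mu_{a_1^*}} \le \eps\cdot 2^{-1} \le \eps$. For the inductive step, I would decompose the gap $\mu_{\a^*_{D_L}} - \mu_{a_L^*}$ by inserting the intermediate quantity $\mu_{\a^*_{A_L}}$, the mean of the best arm among the $C_L$ arms that competed at level $L$:
\[
    \mu_{\a^*_{D_L}} - \mu_{a_L^*} = \tp{\mu_{\a^*_{D_L}} - \mu_{\a^*_{A_L}}} + \tp{\mu_{\a^*_{A_L}} - \mu_{a_L^*}}.
\]
The second term has expectation at most $\eps\cdot 2^{-L}$ by Corollary~\ref{cor:expectation-armL}. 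For the first term, the key observation is that $\a^*_{A_L}$, being a member of $A_L$, was at some earlier time the arm $a_{L-1}^*$ at level $L-1$, and at that moment it had been compared against a set of arms at level $L-1$ that is one of the constituent blocks making up $D_L$; moreover $D_L$ is the disjoint union over the arms $a\in A_L$ of the level-$(L-1)$ sets $D_{L-1}$ that each such $a$ dominated. So $\a^*_{D_L}$ lies in the level-$(L-1)$ set $D_{L-1}$ associated with \emph{some} arm $a'\in A_L$, and by the inductive hypothesis applied to that invocation, $\E{\mu_{\a^*_{D_L}} - \mu_{a'}} \le \eps\cdot\tp{2^{-1}+\cdots+2^{-(L-1)}}$ — wait, more carefully, the inductive hypothesis as stated gives $\E{\mu_{\a^*_{D_{L-1}}} - \mu_{a_{L-1}^*}}\le\eps$, so I should actually carry the \emph{sharper} inductive claim $\E{\mu_{\a^*_{D_L}} - \mu_{a_L^*}} \le \eps\cdot\sum_{\ell=1}^{L} 2^{-\ell} = \eps(1 - 2^{-L})$, which telescopes and implies the stated bound.

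Concretely, the sharpened induction runs: assume $\E{\mu_{\a^*_{D_{L-1}}} - \mu_{a_{L-1}^*}}\le \eps(1-2^{-(L-1)})$ for every invocation at level $L-1$. Each arm $a\in A_L$ equals the final $a_{L-1}^*$ of some completed level-$(L-1)$ round, with associated dominated set $D_{L-1}(a)$; these sets are disjoint and $D_L = \bigsqcup_{a\in A_L} D_{L-1}(a)$. Hence $\a^*_{D_L}\in D_{L-1}(a')$ for the arm $a'\in A_L$ whose block contains it, and $\mu_{\a^*_{D_L}} = \mu_{\a^*_{D_{L-1}(a')}}$. Then
\[
    \E{\mu_{\a^*_{D_L}} - \mu_{\a^*_{A_L}}} \le \E{\mu_{\a^*_{D_{L-1}(a')}} - \mu_{a'}} \le \eps\bigl(1 - 2^{-(L-1)}\bigr),
\]
where the first inequality uses $\mu_{\a^*_{A_L}}\ge \mu_{a'}$ (both are in $A_L$). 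Combining with the Corollary bound on the second term gives $\E{\mu_{\a^*_{D_L}} - \mu_{a_L^*}} \le \eps(1-2^{-(L-1)}) + \eps 2^{-L} = \eps(1-2^{-L})\le\eps$, closing the induction, and taking $L$ as in the statement yields the lemma.

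The main obstacle I anticipate is the bookkeeping in the step ``$\E{\mu_{\a^*_{D_L}} - \mu_{\a^*_{A_L}}} \le \E{\mu_{\a^*_{D_{L-1}(a')}} - \mu_{a'}}$'': one must be careful that the inductive hypothesis is about a random level-$(L-1)$ invocation, and that conditioning on which block $a'$ contains $\a^*_{D_L}$ does not corrupt the independence needed to apply the Hoeffding-based bounds underlying Corollary~\ref{cor:expectation-armL}. The clean way around this is to note that the bound in Corollary~\ref{cor:expectation-armL} (and hence the inductive hypothesis) holds pointwise over \emph{all} level-$(L-1)$ invocations simultaneously in expectation, since each invocation's analysis only uses fresh samples; so selecting the block containing $\a^*_{D_L}$ — an event measurable with respect to the arm means, not the samples — is harmless, and a union/averaging over blocks recovers the stated inequality. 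I would spell this measurability point out explicitly rather than gloss over it, since it is the one genuinely subtle spot.
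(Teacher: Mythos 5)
Your proof is correct and is essentially the paper's argument: the paper simply applies Corollary~\ref{cor:expectation-armL} once per level and telescopes to get $\E{\mu_{\a^*_{D_L}}-\mu_{a_L^*}}\le \sum_{\ell\in[L]}\eps\cdot 2^{-\ell}\le\eps$, which is exactly your induction with the sharpened hypothesis $\eps(1-2^{-L})$ unrolled. Your extra care about the block containing $\a^*_{D_L}$ being determined by the arm means (and the stream order) rather than the samples is a valid observation that the paper leaves implicit, but it does not change the route.
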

\begin{proof}
    Applying Corollary~\ref{cor:expectation-armL} repeatedly for every $\ell\in [L]$, we obtain
    \(
        \E{\mu_{\a^*_{D_L}}-\mu_{a_L^*}}\le \sum_{\ell\in [L]}\eps\cdot 2^{-\ell}\le \eps.
    \)
\end{proof}

At any time, let $\ell_{\!{max}}=\max\set{\ell\in[r]: a^*_\ell\neq \*{NULL}}$ be the highest level $\ell$ such that $a_\ell^*\ne\*{NULL}$. Let $\mu_1$ be the mean of the first arm in the stream. Clearly the first arm belongs to $D_{\ell_{\!{max}}}$. The following corollary is immediate from Lemma~\ref{lem:expectation-all-level}. It bounds the gap between the mean of $a^*_{\ell_{\!{max}}}$ and the first arm in stream.

\begin{corollary}[Property $3$ of \Cref{prop:BAIblackbox}]\label{cor:BAIcorrect}
     $\E{ \mu_1 - \mu_{a^*_{\ell_{\!{max}}}} }\leq \eps$ holds at any point during the execution of \Cref{algo:epsBAI}.
\end{corollary}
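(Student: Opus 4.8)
The plan is to obtain \Cref{cor:BAIcorrect} as essentially a one-line consequence of Lemma~\ref{lem:expectation-all-level} applied at the level $L=\ell_{\!{max}}$, the only extra ingredient being that the first arm $\a_1$ of the stream always belongs to the set $D_{\ell_{\!{max}}}$. Granting that, since $\a^*_{D_{\ell_{\!{max}}}}$ is by definition the arm of largest mean in $D_{\ell_{\!{max}}}$, we have $\mu_1=\mu_{\a_1}\le \mu_{\a^*_{D_{\ell_{\!{max}}}}}$ pointwise, hence $\mu_1-\mu_{a^*_{\ell_{\!{max}}}}\le \mu_{\a^*_{D_{\ell_{\!{max}}}}}-\mu_{a^*_{\ell_{\!{max}}}}$; taking expectations and invoking Lemma~\ref{lem:expectation-all-level} yields $\E{\mu_1-\mu_{a^*_{\ell_{\!{max}}}}}\le\eps$.

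So the substance is to verify $\a_1\in D_{\ell_{\!{max}}}$. First I would record that the control flow of \Cref{algo:epsBAI} is deterministic: when the $i$-th arriving arm is processed, each counter $C_\ell$ is incremented regardless of whether the current challenger is kept or dropped, and a promotion from level $\ell$ to level $\ell+1$ occurs exactly when $C_\ell$ reaches $c_\ell$, so the branching never consults the random rewards. Consequently, at any fixed round the quantities $\ell_{\!{max}}$, $C_1,\dots,C_{\ell_{\!{max}}}$, and $\abs{D_{\ell_{\!{max}}}}=C_{\ell_{\!{max}}}\cdot\prod_{\ell=1}^{\ell_{\!{max}}-1}c_\ell$ are all fixed, and only the identity of each $a^*_\ell$ depends on the rewards. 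Unwinding the tournament structure — level-$1$ arms arrive in consecutive blocks of $c_1$, each block winner is promoted to level $2$, level-$2$ arms arrive in blocks of $c_2$, and so on — I would then argue that $D_{\ell_{\!{max}}}$ is exactly the prefix consisting of the first $C_{\ell_{\!{max}}}\cdot\prod_{\ell=1}^{\ell_{\!{max}}-1}c_\ell$ arms of the stream. The key point is that $\ell_{\!{max}}$ being the highest occupied level means no block at level $\ell_{\!{max}}$ has yet completed, so the $C_{\ell_{\!{max}}}$ arms currently sitting at that level are precisely the promoted winners of the first $C_{\ell_{\!{max}}}$ blocks at level $\ell_{\!{max}}-1$; recursion pushes this identification down level by level to an initial segment of the stream. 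Since $C_{\ell_{\!{max}}}\ge 1$, this segment is non-empty and therefore contains $\a_1$.

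I expect the only real obstacle to be writing the tournament-prefix identification cleanly, namely making rigorous that the arms transitively compared against $a^*_{\ell_{\!{max}}}$ form exactly an initial segment of the stream rather than some scattered subset; after that, the inequality manipulation and the appeal to Lemma~\ref{lem:expectation-all-level} are routine. It is also worth stressing in the write-up that $\a_1$ is counted in $D_{\ell_{\!{max}}}$ whether or not it won its level-$1$ block — mere membership in the first block suffices — so the argument uses no assumption about $\a_1$'s empirical performance.
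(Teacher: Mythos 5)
Your proposal is correct and follows exactly the paper's route: the paper also derives the corollary by applying Lemma~\ref{lem:expectation-all-level} at $L=\ell_{\!{max}}$ together with the observation that the first arm of the stream lies in $D_{\ell_{\!{max}}}$ (which the paper dismisses with a ``clearly''). Your elaboration of that membership via the deterministic counter/promotion structure and the prefix identification is a valid, more explicit justification of the same step.
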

Choosing $\mu_{a^*_{\ell_{\!{max}}}}$ as the $\mu_{\a^*_{\!{max}}}$, \Cref{cor:BAIcorrect} proves the third property of \Cref{prop:BAIblackbox} for \Cref{algo:epsBAI}.

Equipped with above lemmas, we are then ready to prove that the \Cref{algo:epsBAI} always returns an $\eps$-best arm \emph{in expectation}. This indicates \Cref{algo:epsBAI} satisfies the third property we require. Let $\a^*$ be the optimal arm in stream.

\begin{theorem}[Property $1$ of \Cref{prop:BAIblackbox}]\label{thm:BAIcorrect}
    The returned arm $\k$ of \Cref{algo:epsBAI} satisfies
    \[
        \E{\mu_{\a^*}-\mu_{\k}}\leq \eps.
    \]
\end{theorem}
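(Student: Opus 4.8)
\textbf{Proof proposal for Theorem~\ref{thm:BAIcorrect} (Property 1 of \Cref{prop:BAIblackbox}).}
The plan is to reduce the correctness of the whole algorithm to the per-level guarantees already established, namely \Cref{lem:expectation-all-level} and its building blocks. The key structural observation is that the returned arm $\k$ is the empirical winner among the non-$\*{NULL}$ arms $a_1^*,\dots,a_r^*$ after the final resampling in Line~\ref{line: tail in}, and that the \emph{union} of the sets $D_1,\dots,D_{\ell_{\!{max}}}$ (the arms effectively compared against $a_\ell^*$ at each active level) is exactly the whole stream, i.e.\ $\bigcup_{\ell=1}^{\ell_{\!{max}}} D_\ell = \+S$. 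Consequently the true optimal arm $\a^*$ lies in $D_{\ell_0}$ for some active level $\ell_0$, and \Cref{lem:expectation-all-level} gives $\E{\mu_{\a^*} - \mu_{a_{\ell_0}^*}} = \E{\mu_{\a^*_{D_{\ell_0}}} - \mu_{a_{\ell_0}^*}} \le \eps$ (using $\mu_{\a^*_{D_{\ell_0}}} = \mu_{\a^*}$ since $\a^*$ is globally optimal, hence optimal within any subset containing it).

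From here I would argue that selecting the empirical winner after $s_r$ fresh samples each does not lose more than a lower-order term, and in fact can be absorbed by tightening constants. Concretely: first I would condition on the identity of $\ell_0$ and on the event that $a_{\ell_0}^*$ is $\eps$-close to $\a^*$; then the final-round comparison in Line~\ref{line: tail in} samples every candidate (including $a_{\ell_0}^*$) $s_r$ times, so by Hoeffding plus a union bound over the at most $r$ candidates, the probability that the empirical winner $\k$ has $\mu_\k < \mu_{a_{\ell_0}^*} - \Delta$ is at most $r\cdot\exp(-s_r\Delta^2/2)$. This is the same smooth-failure shape as in \Cref{lem:prob-armL} with $c_L$ replaced by $r \le c_r$, so \Cref{lem:prob-to-exp} (with $L=r$) yields $\E{\mu_{a_{\ell_0}^*} - \mu_\k} \le \eps\cdot 2^{-r} \le \eps$. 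Actually, to get the clean bound $\E{\mu_{\a^*}-\mu_\k}\le\eps$ rather than $2\eps$, I expect the intended argument is slightly different: the final resampling should be viewed as one more ``level'' of the same tournament, so the telescoping sum $\sum_{\ell} \eps 2^{-\ell}$ already has enough slack (it sums to strictly less than $\eps$, leaving room for the tail round). I would therefore re-examine whether the parameters $s_\ell, c_\ell$ are chosen so that \Cref{lem:expectation-all-level} actually proves $\E{\mu_{\a^*_{D_L}}-\mu_{a_L^*}}\le \eps - \eps 2^{-L}$ with a strict gap, and feed the tail round into that remaining $\eps 2^{-L}$ budget.

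The main obstacle I anticipate is making the decomposition $\bigcup_\ell D_\ell = \+S$ fully rigorous, because the $D_\ell$'s and the active level $\ell_{\!{max}}$ are \emph{random} and change over the execution; one must argue that at termination every arm of the stream has been ``routed through'' exactly one active level's comparison chain and never silently dropped without being counted. This is a bookkeeping argument about the counter dynamics ($C_\ell$ resetting, arms leveling up in groups of $c_\ell$), and the subtlety is handling the ``partial'' groups at the end of the stream where $C_\ell < c_\ell$ — those arms still sit in some $A_\ell \subseteq D_\ell$ for an active $\ell$, so they are covered, but this needs to be stated carefully. A secondary technical point is that $\E{\mu_{\a^*_{D_\ell}} - \mu_{a_\ell^*}}$ was bounded using Hoeffding bounds that implicitly assume the comparison samples are independent of the arm identities being compared; since the algorithm's routing decisions depend on earlier samples, I would want to confirm (as is standard in these BAI analyses) that fresh independent samples are drawn at each level so the Hoeffding/union-bound steps in \Cref{lem:prob-armL} go through conditionally. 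Once these are in place, the theorem follows by combining the two $\eps$-bounds (or one $\eps$-bound with strict slack) via the triangle inequality and taking total expectation over $\ell_0$.
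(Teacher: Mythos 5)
Your proposal is correct and matches the paper's proof essentially step for step: the paper also writes $\E{\mu_{\a^*}-\mu_\k}$ as $\E{\mu_{\a^*}-\mu_{a_{\ell'}^*}}+\E{\mu_{a_{\ell'}^*}-\mu_\k}$ with $\a^*\in D_{\ell'}$, bounds the first term by the telescoping sum $\sum_{j\in[r]}\eps\cdot 2^{-j}=\eps(1-2^{-r})$ (exactly the ``strict slack'' you conjectured, obtained by applying \Cref{cor:expectation-armL} level by level rather than the bare $\le\eps$ of \Cref{lem:expectation-all-level}), and bounds the second by $\eps\cdot 2^{-r}$ via the smooth failure probability $c_r\exp(-s_r t^2/2)$ for the final resampling round together with \Cref{lem:prob-to-exp}. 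The bookkeeping point you flag (that $\a^*$ must lie in some $D_{\ell'}$ at termination) is simply asserted in the paper without the careful counter-dynamics argument you outline, so your treatment is if anything more cautious.
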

\begin{proof}
    At the end of the algorithm, the optimal arm $\a^*$ must belong to the set $D_{\ell'}$ for some $a_{\ell'}^*$. We know from Corollary~\ref{cor:expectation-armL} that $\E{\mu_{\a^*}-\mu_{a_{\ell'}^*}}\le \sum_{j\in[\ell']}\eps\cdot 2^{-j}\le \sum_{j\in[r]}\eps\cdot 2^{-j}$.

    On the other hand, our algorithm samples each non-NULL $a_\ell^*$ for $\ell\in [r]$ for $s_r$ times. Since $c_r = n$, by an argument similar to that in the proof of Lemma~\ref{lem:prob-armL}, for every $t>0$, $\Pr{\mu_{a_{\ell'}^*} - \mu_\k\ge t}\le c_r\cdot \exp{-\frac{s_r\cdot t^2}{2}}$. So by Lemma~\ref{lem:prob-to-exp}, $\E{\mu_{a_{\ell'}^*} - \mu_\k}\le \eps\cdot 2^{-r}$. The theorem then follows.
\end{proof}

\subsubsection{Property $4$ of \Cref{prop:BAIblackbox}}
It remains to analyze the regret of the BAI algorithm. We first consider the sample complexity of \Cref{algo:epsBAI}. Note that the sample complexity only depends on the number of total arms $n'$. We can prove the following sample complexity bound of \Cref{algo:epsBAI} for any $n\ge n'$.
\begin{theorem}[sample complexity of \Cref{algo:epsBAI}]\label{thm:sample_complexity}
    For any integer $1\leq r\leq \log^*n$, the worst case sample complexity of \Cref{algo:epsBAI} is no larger than $\sum_{\ell=1}^r s_{\ell} \cdot \big\lfloor\frac{n}{\prod_{j=1}^{\ell-1}c_j} \big\rfloor + r s_{r}$. This can be further bounded by $O\tp{\frac{n}{\eps^2}\tp{\log \tp{\frac{1}{\delta}} +\-{ilog}^{(r)}(n)}}$.
\end{theorem}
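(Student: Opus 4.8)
The plan is to bound the sample complexity by tracking how many arms reach each level and how many samples each level consumes. First I would establish the combinatorial backbone: for each level $\ell\in[r]$, the number of arms that ever become a challenger at level $\ell$ (equivalently, the number of arms ``processed'' at level $\ell$) is at most $\big\lfloor n/\prod_{j=1}^{\ell-1}c_j\big\rfloor$. This follows because an arm reaches level $\ell$ only after it has won a group of $c_{\ell-1}$ arms at level $\ell-1$, which in turn requires winning $c_{\ell-2}$ arms at level $\ell-2$, and so on; hence every arm promoted to level $\ell$ corresponds to a disjoint block of $\prod_{j=1}^{\ell-1}c_j$ stream arms, and the total number of such blocks in a stream of $n'\le n$ arms is at most $\big\lfloor n/\prod_{j=1}^{\ell-1}c_j\big\rfloor$. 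Each such arm is sampled exactly $s_\ell$ times at level $\ell$, giving $s_\ell\big\lfloor n/\prod_{j=1}^{\ell-1}c_j\big\rfloor$ samples spent at level $\ell$; summing over $\ell$ and adding the final ``tail'' resampling of the (at most $r$) non-NULL arms $a_\ell^*$ for $s_r$ times each (Line~\ref{line: tail in}) yields the exact bound $\sum_{\ell=1}^r s_\ell\big\lfloor n/\prod_{j=1}^{\ell-1}c_j\big\rfloor + r s_r$.

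Next I would simplify this expression to the stated $O\big(\frac{n}{\eps^2}(\log\frac1\delta + \-{ilog}^{(r)}(n))\big)$ bound. Unfolding the definitions $c_\ell=\ceil{\-{ilog}^{(r-\ell)}(n)}$ and $s_\ell=\ceil{\frac{2^{2\ell+3}}{\eps^2}\log\frac{2^{\ell+2}c_\ell}{\delta}}$, the $\ell$-th term is roughly $\frac{2^{2\ell+3}}{\eps^2}\big(\log\frac1\delta + \ell + \log c_\ell\big)\cdot \frac{n}{\prod_{j=1}^{\ell-1}c_j}$. The key quantitative fact is that $\prod_{j=1}^{\ell-1}c_j$ grows fast enough (it is at least $\-{ilog}^{(r-\ell+1)}(n)\cdot\-{ilog}^{(r-\ell+2)}(n)\cdots$, essentially a tower) that the factor $2^{2\ell}/\prod_{j=1}^{\ell-1}c_j$ is dominated by a geometric series once $\ell$ is moderately large, so the sum is dominated by its first term up to constants: indeed $\frac{2^{2\ell+3}n}{\eps^2\prod_{j=1}^{\ell-1}c_j}\le \frac{Cn}{\eps^2}$ for a universal constant, and more carefully the whole sum is $O\big(\frac{n}{\eps^2}(\log\frac1\delta + \-{ilog}^{(r)}(n))\big)$ because the $\ell=1$ term already contributes $\frac{n}{\eps^2}(\log\frac1\delta + \log c_1) = \frac{n}{\eps^2}(\log\frac1\delta + O(\-{ilog}^{(r)}(n)))$ and later terms only add a geometrically decaying correction. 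The tail term $r s_r$ is $O\big(\frac{r 2^{2r}}{\eps^2}(\log\frac1\delta + \log n)\big)$, which is negligible compared to $\frac{n}{\eps^2}$ since $r\le\log^* n$.

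The main obstacle I anticipate is the bookkeeping in the second step: making the claim ``$2^{2\ell}/\prod_{j=1}^{\ell-1}c_j$ is summable'' fully rigorous requires a clean lower bound on the iterated-logarithm product. The cleanest route is to note $\prod_{j=1}^{\ell-1}c_j \ge \prod_{j=1}^{\ell-1}\-{ilog}^{(r-j)}(n)$, and since $r\le \log^* n$ these quantities form a descending chain each at least $1$, with the ones for small $j$ being enormous (e.g. $c_1\ge \-{ilog}^{(r-1)}(n)$, which for $r$ close to $\log^*n$ is itself tower-large); a uniform bound like $\prod_{j=1}^{\ell-1}c_j \ge 2^{3\ell}$ for $\ell\ge 2$ (using that $\-{ilog}^{(r-1)}(n)\ge 2^{6}$ whenever $n$ is not tiny, and handling small $n$ separately) would make the geometric domination transparent. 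One has to be slightly careful about ceilings and the regime of very small $n$, but these are routine. Everything else — the disjoint-block counting and the per-level sample count — is a direct consequence of the algorithm's structure as described before the statement, so no new ideas beyond the promotion-block argument are needed.
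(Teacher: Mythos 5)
Your first step — counting at most $\lfloor n/\prod_{j=1}^{\ell-1}c_j\rfloor$ arrivals at level $\ell$ via the disjoint-block/promotion argument, charging $s_\ell$ samples to each, and adding the $r\cdot s_r$ tail — is exactly the paper's argument and is correct. The problem is in your second step, and it is not merely bookkeeping: you have the monotonicity of the $c_j$'s backwards. Since $c_j=\lceil \mathrm{ilog}^{(r-j)}(n)\rceil$, the sequence is \emph{increasing} in $j$ (with $c_r=\lceil n\rceil$ the largest), so the factors for \emph{small} $j$ are the \emph{small} ones, not the ``enormous'' ones. In particular $c_1=\lceil \mathrm{ilog}^{(r-1)}(n)\rceil$, and when $r$ is close to $\log^* n$ — which is precisely the regime the algorithm is built for, $r=m-1\le \log^* n$ — this is a tiny constant (as small as $2$), no matter how large $n$ is. Consequently your proposed uniform bound $\prod_{j=1}^{\ell-1}c_j\ge 2^{3\ell}$ fails already at $\ell=2,3,4$ for large $n$ (e.g.\ $\prod_{j=1}^{1}c_j=c_1$ can equal $2$ while $2^6=64$), and this cannot be repaired by ``handling small $n$ separately.'' The claim that $4^\ell/\prod_{j=1}^{\ell-1}c_j$ decays geometrically from the start is therefore false; the first few terms are just $O(1)$ constants.

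The conclusion you want is nonetheless true, and the correct mechanism (which is what the paper uses) is the following. First, the $\log c_\ell$ inside $s_\ell$ must be cancelled against the product: $\log c_\ell\le \mathrm{ilog}^{(r-\ell+1)}(n)\le c_{\ell-1}$, so $\frac{n\log c_\ell}{\prod_{j=1}^{\ell-1}c_j}\le \frac{n}{\prod_{j=1}^{\ell-2}c_j}$ — your sketch glosses over this term, and without the cancellation it is not obviously dominated. Second, after discarding all but the top one or two factors $c_{\ell-1},c_{\ell-2}$ of the product, one is left with sums of the form $\sum_{\ell}4^{\ell}/\mathrm{ilog}^{(r-\ell+1)}(n)$. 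These are $O(1)$ not because the terms decay geometrically throughout, but because $\mathrm{ilog}^{(k)}(n)$ grows as a tower of height roughly $r-k$ (each step exponentiates), so for all but an absolute-constant number of indices $\ell$ the denominator crushes $4^{\ell}\le 4^{\log^* n}$, while each of the remaining $O(1)$ terms is individually $O(1)$. If you replace your uniform lower bound on $\prod_{j<\ell}c_j$ with this two-regime argument (and add the $\log c_\ell\le c_{\ell-1}$ cancellation), your proof goes through and coincides with the paper's.
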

\begin{proof}
    Note that there are at most $\frac{n}{\prod_{i=1}^{\ell-1}c_{i}}$ arms that reaches level $\ell$. Then the total sample complexity is no larger than
    \[
        \sum_{\ell=1}^r \frac{n\cdot s_{\ell}}{\prod_{i=1}^{\ell-1}c_{i}} + r\cdot s_r,
    \]
    for any $n'\leq n$.

    Recall that $1\leq r\leq \log^* n$, $c_\ell = \lceil \-{ilog}^{(r-\ell)}(n) \rceil$ and $s_{\ell} = \ceil{ \frac{2^{2\ell+3}}{\eps^2}\tp{\log \frac{2^{\ell+2}c_{\ell}}{\delta}} }$. When $\ell=1$, we have
    \[
        n\cdot s_{1} =n\cdot \ceil{\frac{2^{5}}{\eps^2}\tp{\log \frac{2^{3}c_{1}}{{\delta}} }} = O\tp{\frac{n}{\eps^2}\tp{\log \tp{\frac{1}{\delta}} + \log c_1}} = O\tp{\frac{n}{\eps^2}\tp{\log \tp{\frac{1}{\delta}} + \-{ilog}^{(r)}(n)}}.
    \]
    For the term $r\cdot s_r$, we can bound it by 
    \[
        r\cdot s_r = O\tp{\frac{r\cdot 2^{2r+3}}{\eps^2}\log \tp{\frac{2^{r+2}n}{\delta}}}\leq O\tp{\frac{n}{\eps^2}\log\tp{\frac{1}{\delta}}}.
    \]
    Then it remains to deal with $ \sum_{\ell=2}^r \frac{n\cdot s_{\ell}}{\prod_{i=1}^{\ell-1}c_{i}}$. Let $c_0=1$. We have
    \begin{align*}
        \sum_{\ell=2}^r \frac{n\cdot s_{\ell}}{\prod_{i=1}^{\ell-1}c_{i}} &\leq 2\sum_{\ell=2}^r \frac{n}{\prod_{i=1}^{\ell-1}c_{i}}\cdot \frac{2^{2\ell+3}}{\eps^2}\tp{\log \frac{2^{\ell+2}}{\delta} + \log \tp{\-{ilog}^{(r-\ell)}(n)}}\\
        &\leq 2\sum_{\ell=2}^r \frac{n}{c_{\ell-1}c_{\ell-2}}\cdot \frac{2^{2\ell+3}}{\eps^2}\tp{\log \frac{2^{\ell+2}}{\delta} + c_{\ell-1}}\\
        &\leq 2\sum_{\ell=2}^r \frac{n}{c_{\ell-2}}\cdot \frac{2^{2\ell+3}}{\eps^2} + 2\sum_{\ell=2}^r \frac{n}{c_{\ell-1}c_{\ell-2}}\cdot \frac{2^{2\ell+3}}{\eps^2} \log \frac{2^{2\ell}}{\delta^{2\ell}}\\
        &\leq \frac{4n}{\eps^2}\sum_{\ell=2}^r \frac{4^{\ell+1}}{\-{ilog}^{(r-\ell+2)}(n)} + \frac{8n}{\eps^2}\log \frac{2}{\delta} \sum_{\ell=2}^r \frac{4^{\ell+2}}{\-{ilog}^{(r-\ell+1)}(n)}
    \end{align*}
    where the third inequality is due to $\frac{2^{\ell+2}}{\delta}\leq \tp{\frac{2}{\delta}}^{2\ell}$ since $\delta\leq \frac{1}{4}$ and the last inequality is due to $\frac{\ell}{c_{\ell-2}}\leq 4$. From direct calculation, we know that $\sum_{\ell=2}^{r+1} \frac{4^{\ell+1}}{\-{ilog}^{(r-\ell+2)}(n)}=O(1)$. Therefore, combining above results, the sample complexity is no larger than
    \[
        \sum_{\ell=1}^r \frac{n\cdot s_{\ell}}{\prod_{i=1}^{\ell-1}c_{i}} + r\cdot s_r = O\tp{\frac{n}{\eps^2}\tp{\log \tp{\frac{1}{\delta}} + \-{ilog}^{(r)}(n)}}.
    \]
\end{proof}

We then prove the fourth property we require in the follow lemma. Note that we actually allow the input set $\+S\subseteq [n]$ of BAI algorithm to be a random set. We define $k_i\in[n]$ to be the $i$-th arriving arm in \Cref{algo:epsBAI}. The following lemma states that if the quality of each input arm is guaranteed, then the total regret of the BAI process can be bounded.
\begin{theorem}[Property $4$ of \Cref{prop:BAIblackbox}]
    When $\+S$ is a random set and for every $i$, $\mu-\E{\mu_{k_i}}\leq \eps'$ for some fixed numbers $\mu\in(0,1),\eps'\in(\eps,1)$, 
    the expected regret generated by the $\-{BAI}(\eps,\delta)$ process with regard to an arm with mean $\mu$ is bounded by $O\tp{\frac{n\eps'}{\eps^2}\tp{\log\tp{\frac{1}{\delta}} + \-{ilog}^{(m-1)}(n)}}$. (Here the expectation includes the randomness of $\+S$.)

\end{theorem}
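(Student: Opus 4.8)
The plan is to bound the regret of \Cref{algo:epsBAI} by accounting separately for the samples taken at each level and, at each level, for the deficit of the played arm relative to the benchmark mean $\mu$. First I would fix the randomness of $\+S$ at the end and argue, as in the correctness proof, that it suffices to prove the bound for a deterministic $\+S=[n']$ with $n'\le n$, provided the per-arm hypothesis $\mu-\E{\mu_{k_i}}\le \eps'$ is available (the outer expectation over $\+S$ then follows by linearity/tower property). With $\+S$ deterministic, every round of \Cref{algo:epsBAI} is one of $s_\ell$ samples spent on some challenger arm at some level $\ell\in[r]$, plus the final $\le rs_r$ ``tail'' samples from Line~\ref{line: tail in}; by \Cref{thm:sample_complexity} the total number of such rounds is $O\!\tp{\frac{n}{\eps^2}\tp{\log\tp{1/\delta}+\-{ilog}^{(r)}(n)}}$.

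The key observation is that the arm sampled in any given round has mean that is not much worse than $\mu$ \emph{in expectation}. For a challenger at level $1$, the arm being sampled is some $k_i$ directly, so its expected gap to $\mu$ is $\le\eps'$ by hypothesis. For a challenger at level $\ell\ge 2$, the arm being sampled is the winner $a_{\ell-1}^*$ of a block of $c_{\ell-1}$ arms from level $\ell-1$, and by \Cref{cor:expectation-armL} (applied iteratively, exactly as in \Cref{lem:expectation-all-level}) its expected gap to the best arm of the block $D_{\ell-1}$ containing it is at most $\sum_{j<\ell}\eps 2^{-j}\le\eps$; and the best arm of that block has expected mean $\ge \mu-\eps'$ since it dominates some $k_i$. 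Hence every round — at every level, and also in the tail phase — samples an arm whose mean is, in expectation, at least $\mu-\eps'-\eps\ge \mu-2\eps'$. Therefore the per-round expected regret with respect to $\mu$ is $O(\eps')$, and multiplying by the total sample complexity $O\!\tp{\frac{n}{\eps^2}\tp{\log\tp{1/\delta}+\-{ilog}^{(r)}(n)}}$ and recalling $r=m-1$ yields the claimed $O\!\tp{\frac{n\eps'}{\eps^2}\tp{\log\tp{1/\delta}+\-{ilog}^{(m-1)}(n)}}$ bound.

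The main obstacle I anticipate is making the per-round expected-gap argument fully rigorous in the presence of the adaptive structure: the identity of the arm sampled at a given round, the level it sits at, and which block $D_\ell$ it belongs to are all random and depend on the history, so one cannot simply ``condition on the round.'' I would handle this by summing over arms rather than over rounds: write the regret as $\sum_i (\text{number of times } k_i \text{ or an arm it spawned is sampled})\cdot(\mu-\mu_{k_i \text{ or descendant}})$, or more cleanly, index the contribution by the stream position $i$ of the challenger and the level $\ell$ at which it is processed, note that the number of samples is the deterministic quantity $s_\ell$, and take expectations so that \Cref{cor:expectation-armL} applies to the (random) identity of the sampled arm without needing independence from $s_\ell$. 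The tail term $rs_r$ needs the same treatment with the $a_\ell^*$'s, and one checks $rs_r\cdot O(\eps')$ is absorbed into the main bound exactly as $rs_r$ was absorbed in the proof of \Cref{thm:sample_complexity}. A minor secondary point is to confirm that the hypothesis is stated for the arms $k_i$ fed into the BAI process (which is what the admission mechanism in \Cref{algo:small-m} guarantees via \Cref{lem:Delta_aip}), so that $\eps'$ here is $\Theta(\eps_{p-1})$ in the application — but that bookkeeping lives in \Cref{sec:ub-s-MAB}, not in this proof.
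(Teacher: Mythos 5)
Your proposal is correct and follows essentially the same route as the paper: decompose the regret by level, bound the expected gap to $\mu$ of each arm reaching level $\ell$ by $\eps'+\eps\le 2\eps'$ via \Cref{lem:expectation-all-level} together with the hypothesis on the $k_i$, and multiply by the per-level sample counts already controlled in \Cref{thm:sample_complexity}; the adaptivity issue you flag is resolved in the paper by padding each level's arm set with virtual arms of mean $\mu$ to a deterministic size before applying linearity of expectation, which is precisely your ``index by stream position and level'' device. The only caveat is that your opening reduction to a deterministic $\+S$ does not quite work as stated (the hypothesis bounds only the unconditional $\E{\mu_{k_i}}$ and need not survive conditioning on $\+S$), but your final argument never actually relies on that reduction.
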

\begin{proof}
        Denoted the expected regret with regard to an arm with mean $\mu$ generated by \Cref{algo:epsBAI} as $R_{\textsc{BAI}}$. For $\ell\in [r]$, let $U_\ell$ be the set of arms that arrives at level $\ell$ and let $U_{r+1}$ be the set of arms sampled in Line~\ref{line: tail in} of \Cref{algo:epsBAI}. We know that $\abs{U_{\ell}}\leq \big\lfloor\frac{n}{\prod_{j=1}^{\ell-1}c_j} \big\rfloor$ and $\abs{U_{r+1}}\leq r$. Then we have
        \begin{align*}
            R_{\textsc{BAI}} &= \sum_{\ell=1}^r \E{\sum_{a\in U_{\ell}} (\mu-\mu_{a})s_{\ell}} + \E{\sum_{a\in U_{r+1}} (\mu-\mu_{a})s_{r}}\\
            &= \sum_{\ell=1}^r s_{\ell} \E{\sum_{a\in U_{\ell}} (\mu-\mu_{a})} + s_{r}\E{\sum_{a\in U_{r+1}} (\mu-\mu_{a})}.
        \end{align*}
    
        For each $a^*_{\ell}$, from Lemma~\ref{lem:expectation-all-level}, we have $\E{\mu_{\a^*_{D_\ell}}-\mu_{a_\ell^*}}\le \eps$ where $\a^*_{D_\ell} = \arg\max_{a\in D_\ell} \mu_a$. From our assumption, for each $a\in D_{\ell}$, $\E{\mu - \mu_a}\leq \eps'$ and thus $\E{\mu - \mu_{a_\ell^*}}\leq \eps' + \eps\leq 2\eps'$.
    
        Note that if $\abs{U_{\ell}}<\big\lfloor\frac{n}{\prod_{j=1}^{\ell-1}c_j} \big\rfloor$, we can construct a set $U'_{\ell}$ by adding some virtual arms $a'$ with $\mu_{a'}=\mu$ into $U_{\ell}$ such that $\abs{U'_{\ell}} = \big\lfloor\frac{n}{\prod_{j=1}^{\ell-1}c_j} \big\rfloor$ and $U_{\ell}\subseteq U'_{\ell}$ always holds. Then we have $\sum_{a\in U_{\ell}} (\mu-\mu_{a})= \sum_{a'\in U'_{\ell}} (\mu-\mu_{a'})$ for all $\ell\in[r+1]$. Since each $a\in U_{\ell}$ used to be $a^*_{\ell-1}$ (if $\ell>1$), we know that $\E{\mu - \mu_a}\leq 2\eps'$. Therefore, for each $a'\in U'_\ell$, we also have $\E{\mu - \mu_{a'}}\leq 2\eps'$. Then from the linearity of expectation,
        \begin{align*}
            R_{\textsc{BAI}} &= \sum_{\ell=1}^r s_{\ell} \E{\sum_{a'\in U'_{\ell}} (\mu-\mu_{a'})} + s_{r}\E{\sum_{a'\in U'_{r+1}} (\mu-\mu_{a'})} \\
            &\leq \sum_{\ell=1}^r s_{\ell} \cdot \left\lfloor\frac{n}{\prod_{j=1}^{\ell-1}c_j} \right\rfloor \cdot 2\eps' + r s_{r}\cdot 2\eps'.
        \end{align*}

    From \Cref{thm:sample_complexity}, the worst case sample complexity is bounded by
    \[
        \sum_{\ell=1}^r s_{\ell} \cdot \left\lfloor\frac{n}{\prod_{j=1}^{\ell-1}c_j} \right\rfloor + r s_{r} = O\tp{\frac{n}{\eps^2}\tp{\log \tp{\frac{1}{\delta}} +\-{ilog}^{(m-1)}(n)}}.
    \]
    So the total regret generated by the BAI process is 
    \begin{align*}
        R_{\textsc{BAI}}&\leq O\tp{\frac{n}{\eps^2}\tp{\log \tp{\frac{1}{\delta}} +\-{ilog}^{(m-1)}(n)}}\cdot 2\eps'
    \end{align*}
\end{proof}

\section{Technical Proofs}\label{sec: proofs}
\subsection{Proof of Lemma~\ref{lem:lb-concentration}} \label{subsec: pf-lb-concentration}
Lemma~\ref{lem:lb-concentration} is a generalization of Hoeffding's inequality. For completeness and consistency of the paper, we provide its proof here.

\begin{lemma}[Hoeffding's Lemma]\label{lem: hoeffding lemma}
    Let $X$ be a random variable with mean $0$ and $a\leq X\leq b$ holds almost surely. Then for any $\alpha\in \bb R$, $\E{e^{\alpha X}}\leq e^{\frac{\alpha^2(b-a)^2}{8}}$.
\end{lemma}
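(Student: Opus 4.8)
The plan is the classical convexity-plus-one-variable-calculus argument. First I would dispatch the trivial case $a=b$, where $X=0$ almost surely and the claim is immediate; so assume $a<b$. Since $t\mapsto e^{\alpha t}$ is convex, for every $t\in[a,b]$ we may write $t=\frac{b-t}{b-a}\,a+\frac{t-a}{b-a}\,b$ and conclude
\[
    e^{\alpha t}\le \frac{b-t}{b-a}e^{\alpha a}+\frac{t-a}{b-a}e^{\alpha b}.
\]
Substituting $X$ for $t$ and taking expectations, the hypothesis $\E{X}=0$ cancels the term linear in $X$, leaving
\[
    \E{e^{\alpha X}}\le \frac{b}{b-a}e^{\alpha a}-\frac{a}{b-a}e^{\alpha b}.
\]

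Next I would reparametrize to reduce the bound to a scalar inequality. Put $p=\frac{-a}{b-a}\in[0,1]$ and $h=\alpha(b-a)$; a short computation rewrites the right-hand side as $e^{-ph}\tp{1-p+pe^{h}}$, so it suffices to show $\psi(h)\defeq -ph+\log\tp{1-p+pe^{h}}\le h^{2}/8$ for all $h\in\bb R$. I would check directly that $\psi(0)=0$ and $\psi'(0)=-p+\frac{pe^{0}}{1-p+p}=0$, and then compute
\[
    \psi''(h)=\frac{p(1-p)e^{h}}{\tp{1-p+pe^{h}}^{2}}=u(1-u),\qquad u\defeq \frac{pe^{h}}{1-p+pe^{h}}\in(0,1).
\]
Since $u(1-u)\le 1/4$ for every $u$, Taylor's theorem with Lagrange remainder gives $\psi(h)=\tfrac{h^{2}}{2}\psi''(\xi)\le h^{2}/8$ for some $\xi$ between $0$ and $h$. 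Exponentiating, $\E{e^{\alpha X}}\le e^{\psi(h)}\le e^{h^{2}/8}=e^{\alpha^{2}(b-a)^{2}/8}$, which is the claim.

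This argument is entirely routine, so I do not anticipate a genuine obstacle; the only care needed is the bookkeeping in the substitution $p=\frac{-a}{b-a}$, $h=\alpha(b-a)$ and the verification that both $\psi(0)$ and $\psi'(0)$ vanish, after which the uniform bound $\psi''\le 1/4$ closes the estimate. (One could also bound $\psi''$ directly via the AM--GM inequality $\tp{1-p+pe^{h}}^{2}\ge 4p(1-p)e^{h}$ instead of introducing $u$, but the $u(1-u)$ form is the cleanest to state.)
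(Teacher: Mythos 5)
Your proof is correct: it is the classical convexity-plus-Taylor argument for Hoeffding's Lemma, and all the steps check out (the reduction to $\psi(h)=-ph+\log\tp{1-p+pe^{h}}$, the vanishing of $\psi(0)$ and $\psi'(0)$, and the uniform bound $\psi''=u(1-u)\le 1/4$). The paper itself states this lemma as a standard fact without proof, so there is nothing to compare against; the only cosmetic remark is that the degenerate cases $p\in\{0,1\}$ (i.e.\ $a=0$ or $b=0$, forcing $X=0$ a.s.) should be dispatched alongside $a=b$, since there $u$ falls outside $(0,1)$, though the bound is trivially true.
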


\begin{lemma}[Lemma~\ref{lem:lb-concentration} restated]
    Let $X_1,\dots,X_N$ be $N$ independent random variables defined on a common probability space and taking values in $[a,b]$. Assume $\E{X_t}=0$ for any $t\in[N]$. Then for any $s>0$,
    \[
        \Pr{\max_{1\leq t\leq N} \abs{\sum_{j=1}^t X_j} \geq s}\leq 2\exp{-\frac{2s^2}{N(b-a)^2}}.
    \]
\end{lemma}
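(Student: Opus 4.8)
The statement is the maximal version of Hoeffding's inequality for a sum of bounded, independent, mean-zero random variables, and the natural route is the standard two-ingredient argument: (1) control the one-sided moment generating function of each partial sum via Hoeffding's Lemma (already recalled as Lemma~\ref{lem: hoeffding lemma} in the excerpt), and (2) upgrade the pointwise tail bound on the final sum $S_N = \sum_{j=1}^N X_j$ to a bound on the running maximum $\max_{1\le t\le N} S_t$ using a submartingale maximal inequality (Doob's inequality), and finally remove the absolute value by a union bound over the two symmetric one-sided events.

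\textbf{Key steps, in order.} First, fix $\alpha>0$ and set $M_t = \exp(\alpha \sum_{j=1}^t X_j)$ with $M_0 = 1$. Since the $X_j$ are independent with mean zero, $(M_t)_{0\le t\le N}$ is a nonnegative submartingale (indeed $\E{M_t \mid \mathcal F_{t-1}} = M_{t-1}\E{e^{\alpha X_t}} \ge M_{t-1}$ because $\E{e^{\alpha X_t}}\ge e^{\alpha\E{X_t}}=1$ by Jensen). Second, apply Doob's maximal inequality to this submartingale: for any $\lambda>0$,
\[
    \Pr{\max_{0\le t\le N} M_t \ge \lambda} \le \frac{\E{M_N}}{\lambda}.
\]
Third, bound $\E{M_N} = \prod_{t=1}^N \E{e^{\alpha X_t}} \le e^{N\alpha^2(b-a)^2/8}$ by Hoeffding's Lemma and independence. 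Fourth, translate back: the event $\{\max_{1\le t\le N}\sum_{j=1}^t X_j \ge s\}$ is contained in $\{\max_{0\le t\le N} M_t \ge e^{\alpha s}\}$, so
\[
    \Pr{\max_{1\le t\le N}\sum_{j=1}^t X_j \ge s} \le e^{-\alpha s + N\alpha^2(b-a)^2/8}.
\]
Fifth, optimize over $\alpha>0$: the exponent is minimized at $\alpha = 4s/(N(b-a)^2)$, giving the bound $\exp\bigl(-2s^2/(N(b-a)^2)\bigr)$. Sixth, apply the identical argument to the variables $-X_1,\dots,-X_N$ (which are also independent, mean zero, and take values in $[-b,-a]$, an interval of the same length $b-a$) to get the same bound on $\Pr{\max_{1\le t\le N}\bigl(-\sum_{j=1}^t X_j\bigr)\ge s}$. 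A union bound over these two events yields
\[
    \Pr{\max_{1\le t\le N}\Bigl|\sum_{j=1}^t X_j\Bigr| \ge s} \le 2\exp\!\left(-\frac{2s^2}{N(b-a)^2}\right),
\]
which is the claim.

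\textbf{Main obstacle.} There is no deep obstacle here; the only point requiring a little care is justifying the submartingale maximal inequality step cleanly. One must observe that $\max_{1\le t\le N}\sum_{j=1}^t X_j \ge s$ together with $\alpha>0$ implies $\max_{0\le t\le N}M_t \ge e^{\alpha s}$ (monotonicity of $x\mapsto e^{\alpha x}$), and that $(M_t)$ really is a submartingale with respect to the natural filtration — this is where independence and mean-zero are used, via $\E{e^{\alpha X_t}}\ge 1$. If one prefers to avoid invoking Doob's inequality as a black box, an alternative is a direct stopping-time / first-passage argument: let $\tau$ be the first $t$ with $\sum_{j=1}^t X_j \ge s$, decompose according to the value of $\tau$, and use that conditioned on $\{\tau = t\}$ the remaining increments still have nonnegative exponential moment, so $\E{M_N \mid \tau = t}\ge e^{\alpha s}$; summing gives $\E{M_N}\ge e^{\alpha s}\Pr{\tau\le N}$, which is the same inequality. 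Either way the rest is the routine Chernoff optimization already displayed above.
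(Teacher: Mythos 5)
Your proposal is correct and follows essentially the same route as the paper's proof: the exponentiated partial sums form a nonnegative submartingale (via Jensen and mean-zero increments), Doob's maximal inequality combined with Hoeffding's Lemma gives the one-sided bound, the choice $\alpha = 4s/(N(b-a)^2)$ optimizes the exponent, and a union bound over the two signs yields the factor $2$. No substantive differences.
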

\begin{proof}
    Let $Z_0=1$ and $Z_t = e^{\alpha \sum_{j=1}^t X_j}$ for $t\in[N]$, where $\alpha$ is a positive real number to be determined later. Let $\+F_t=\sigma\tp{X_1,\dots,X_t}$. From Jensen's inequality, for any $t\in[N]$
    \[
        \E{Z_{t}\mid \+F_{t-1}} = Z_{t-1}\E{e^{\alpha X_{t}}\mid \+F_{t-1}} = Z_{t-1}\E{e^{\alpha X_{t}}} \geq Z_{t-1} e^{\alpha \E{X_{t}}} =Z_{t-1}.
    \]
    Therefore, $\set{Z_t}_{t\in[N]}$ is a submartingale with regard to $\set{\+F_t}_{t\in[N]}$. From the Doob's submartingale inequality,
    \begin{align*}
        \Pr{\max_{1\leq t\leq N} Z_t \geq e^{\alpha s}} & \leq e^{-\alpha s}\E{Z_N} = e^{-\alpha s}\prod_{t=1}^N \E{e^{\alpha X_t}} .
    \end{align*}

    According to the Hoeffding's lemma (Lemma~\ref{lem: hoeffding lemma}), $\E{e^{\alpha X_t}}\leq e^{\frac{\alpha^2(b-a)^2}{8}}$. Choosing $\alpha = \frac{4s}{N(b-a)^2}$, we have
    \begin{equation}
        \Pr{\max_{1\leq t\leq N} \sum_{j=1}^t X_j \geq s} = \Pr{\max_{1\leq t\leq N} Z_t \geq e^{\alpha s}}\leq e^{-\alpha s + \frac{N\alpha^2(b-a)^2}{8}}\leq e^{-\frac{2s^2}{N(b-a)^2}}. \label{eq:concentration-1}
    \end{equation}
    We can similarly derive that 
    \begin{equation}
        \Pr{\max_{1\leq t\leq N} -\sum_{j=1}^t X_j \geq s} \leq e^{-\frac{2s^2}{N(b-a)^2}}. \label{eq:concentration-2}
    \end{equation}
    Combining \Cref{eq:concentration-1,eq:concentration-2},  
    \begin{align*}
        \Pr{\max_{1\leq t\leq N} \abs{\sum_{j=1}^t X_j} \geq s} &\leq  \Pr{\max_{1\leq t\leq N} \sum_{j=1}^t X_j \geq s} + \Pr{\max_{1\leq t\leq N} -\sum_{j=1}^t X_j \geq s} \leq 2e^{-\frac{2s^2}{N(b-a)^2}}.
    \end{align*}
\end{proof}

\subsection{Proof of Lemma~\ref{lem:prob-to-exp}}\label{subsec: pf-ub-prob2exp}
\begin{lemma}[\Cref{lem:prob-to-exp} restated]
    Let $X$ be a real-valued random variable. For any $L\in [r]$, if it holds that $\Pr{X\ge t} \le c_L\cdot\exp{-\frac{s_L\cdot t^2}{2}}$ for every $t\ge 0$, then $\E{X}\le \eps\cdot 2^{-L}$.
\end{lemma}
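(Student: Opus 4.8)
The plan is to bound $\E{X}$ by the standard layer-cake (tail integral) formula and then split the integral at a well-chosen threshold $t_0$. Since $X$ may be negative, I would first note that it suffices to bound $\E{X_+}$ where $X_+ = \max\{X,0\}$, because the hypothesis only controls the upper tail; then $\E{X} \le \E{X_+} = \int_0^\infty \Pr{X \ge t}\,\mathrm{d}t$. The idea is to choose $t_0 = \eps \cdot 2^{-L-1}$ (roughly half the target), so that the contribution of $[0,t_0]$ is at most $t_0 = \eps\cdot 2^{-L-1}$ trivially, and then show that the tail contribution $\int_{t_0}^\infty c_L \exp(-s_L t^2/2)\,\mathrm{d}t$ is also at most $\eps\cdot 2^{-L-1}$.

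For the tail piece, I would estimate the Gaussian-type integral crudely: for $t \ge t_0$ we have $\exp(-s_L t^2/2) \le \exp(-s_L t_0 t/2)$ (using $t \ge t_0$ in one factor of $t$), so $\int_{t_0}^\infty c_L \exp(-s_L t^2/2)\,\mathrm{d}t \le c_L \cdot \frac{2}{s_L t_0}\exp(-s_L t_0^2/2)$. Now plug in the definitions $s_\ell = \lceil \frac{2^{2\ell+3}}{\eps^2}(\log\frac{2^{\ell+2}c_\ell}{\delta})\rceil$ and $t_0 = \eps 2^{-L-1}$: then $s_L t_0^2 \ge \frac{2^{2L+3}}{\eps^2}\cdot\log\frac{2^{L+2}c_L}{\delta}\cdot \eps^2 2^{-2L-2} = 2\log\frac{2^{L+2}c_L}{\delta}$, so $\exp(-s_L t_0^2/2) \le \frac{\delta}{2^{L+2}c_L} \le \frac{1}{2^{L+2}c_L}$ (using $\delta \le 1$). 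This already kills the $c_L$ and gives a factor $2^{-L-2}$; the remaining prefactor $\frac{2}{s_L t_0}$ is small — one checks $s_L t_0 = s_L \eps 2^{-L-1} \ge 2^{L+2}\log\frac{2^{L+2}c_L}{\delta} \ge 1$, so $\frac{2}{s_L t_0} \le 2$. Combining, the tail contribution is at most $2 \cdot 2^{-L-2} = 2^{-L-1} \le \eps \cdot 2^{-L-1}$ provided $\eps \le 1$ (which holds since $\eps \in (0,1)$). Adding the two pieces gives $\E{X} \le \eps 2^{-L}$.

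The main obstacle is bookkeeping the constants: the target $\eps 2^{-L}$ is tight enough that a lazy split (e.g. at $t_0 = \eps 2^{-L}$) would lose a factor of $2$, so the threshold must be chosen at $\eps 2^{-L-1}$ and the tail integral must genuinely be shown to be at most the other half. I would be careful that the crude bound $\exp(-s_L t^2/2) \le \exp(-s_L t_0 t /2)$ for $t\ge t_0$ is valid and that the resulting $\frac{2}{s_L t_0}$ prefactor is $\le 2$; both follow from $s_L$ being a large integer (at least $2^{2L+3}/\eps^2 \cdot \log(\cdot) \ge 8$). A minor subtlety is the negativity of $X$: since the hypothesis gives nothing for $t < 0$, one should explicitly reduce to $X_+$, and this is harmless because $\E{X}\le \E{X_+}$. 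No continuity or integrability assumption on $X$ is needed beyond what the tail bound already implies, so the layer-cake identity applies directly to $X_+ \ge 0$.
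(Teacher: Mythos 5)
Your proposal follows essentially the same route as the paper's proof: reduce to $X^+$, write $\E{X}\le\int_0^\infty\Pr{X\ge t}\,\mathrm{d}t$, split the integral at $t_0=\eps\cdot 2^{-(L+1)}$, bound the first piece trivially by $t_0$, and show the Gaussian tail contributes at most another $\eps\cdot 2^{-(L+1)}$. (The paper handles the tail by rewriting $c_L e^{-s_Lt^2/2}$ as $2^{-(L+4)2^{2L+2}(t/\eps)^2}$ for $t\ge t_0$ and integrating a Gaussian over all of $[0,\infty)$; your linearization $t^2\ge t_0t$ is an equally valid elementary alternative.) However, your final step is wrong as written: you bound the tail by $\frac{2}{s_Lt_0}\cdot 2^{-(L+2)}\le 2\cdot 2^{-(L+2)}=2^{-(L+1)}$ and then assert $2^{-(L+1)}\le\eps\cdot 2^{-(L+1)}$ ``provided $\eps\le 1$.'' That inequality goes the wrong way: for $\eps<1$ one has $2^{-(L+1)}>\eps\cdot 2^{-(L+1)}$, so the argument as stated does not deliver the claimed bound.

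The gap is repairable from quantities you already computed: you discarded a factor of $\frac{1}{\eps}$ when you weakened $s_Lt_0\ge\frac{2^{L+2}}{\eps}\log\frac{2^{L+2}c_L}{\delta}$ to $s_Lt_0\ge 2^{L+2}\log\frac{2^{L+2}c_L}{\delta}$. Keeping it gives $\frac{2}{s_Lt_0}\le\frac{2\eps}{2^{L+2}\log\frac{2^{L+2}c_L}{\delta}}\le\eps$ (since $\log\frac{2^{L+2}c_L}{\delta}\ge\log 32>1$ when $\delta\le\frac14$ and $c_L\ge 1$), so the tail contribution becomes at most $\eps\cdot 2^{-(L+2)}\le\eps\cdot 2^{-(L+1)}$, and the two pieces together yield $\E{X}\le\eps\cdot 2^{-L}$ as required.
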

\begin{proof}
Note that for any real-valued random variable $X$, letting $X^+=\max\set{X,0}$ be the positive part of $X$, we have the elementary inequality (see e.g.,~\cite[Section \uppercase\expandafter{\romannumeral5}.6]{Fe71}) 
\[
    \E{X}\le \E{X^+} = \int_{0}^\infty\Pr{X^+\ge t} \dd{t} = \int_{0}^\infty\Pr{X\ge t}\dd{t}.
\]
It then follows that
\begin{align*}
    \E{X}\le \int_{0}^\infty \Pr{X\ge t}\dd{t} 
    = \int_{0}^{\frac{\eps}{2^{L+1}}} \Pr{X\ge t}\dd{t}+\int_{\frac{\eps}{2^{L+1}}}^\infty \Pr{X\ge t}\dd{t}.
\end{align*}
Now we bound the two integrals respectively. For the first term, we use the trivial bound that $\Pr{X\ge t}\le 1$. So 
\[
    \int_{0}^{\frac{\eps}{2^{L+1}}} \Pr{X\ge t}\dd{t} \le \eps\cdot 2^{-(L+1)}.
\]
For the second term, observe that
\[
    \Pr{X\ge t}\le c_L\cdot \exp{-\frac{s_L\cdot t^2}{2}}\le c_L \tp{\frac{1}{2^{L+4}c_L}}^{\frac{2^{2L+2}t^2}{\eps^2}}\le 2^{-(L+4)2^{2L+2}(t/\eps)^2}.
\]
Therefore, we have 
\[
    \int_{\frac{\eps}{2^{L+1}}}^\infty \Pr{X\ge t}\dd{t} \le \int_0^\infty 2^{-(L+4)2^{2L+2}(t/\eps)^2} \dd{t}\le \eps\cdot 2^{-(L+1)}. 
\]
\end{proof}

\section{Details of the OSMD Algorithm Pertaining to Proposition~\ref{prop:ub-l-OSMD}}\label{sec:OSMD-detail}
For completeness, we provide the description of the OSMD algorithm we used in \Cref{algo:large-m} and \Cref{algo:small-m} here. For more detailed information, please refer to the work of~\cite{LG21}. 

Let $\Delta_{(n-1)}=\set{\*q\in \bb R_{\geq 0}: \sum_{i=1}^n \*q(i)=1}$ be the probability simplex with $n-1$ dimension. For a vector $\*q$ in $\Delta_{(n-1)}$, $\*q(i)$ denotes the value at its $i$-th position. Consider a function $F:\bb R^n\to \bb R\cup\set{\infty}$. The Bregman divergence with respect to $F$ is defined as $B_F(\*q,\*p) = F(\*q)-F(\*p) - \inner{\nabla F(\*p)}{\*q-\*p}$ for any $\*q,\*p\in \bb R^n$. 

The algorithm in~\cite{LG21} is designed for loss cases. That is, one pull gives a loss of the corresponding arm rather than a reward. To fit their algorithm into our setting, we can do a simple reduction by constructing the loss of each arm $\ell_t(i)$ as $1-r_t(i)$ where $r_t(i)$ is the reward of $\a_i$. It is easy to verify that the results in~\cite{LG21} also holds for the reward setting. Let $\eta$ be the learning rate and $F:\bb R^{\abs{\+S}}\to \bb R\cup\set{\infty}$ be the potential function, where $\+S$ is the arm set. Without loss of generality, we index the arms in $\+S$ by $[\abs{\+S}]$.
\begin{algorithm}[h]
    \caption{Online Stochastic Mirror Descent(\cite{LG21})}
    \label{algo:OSMD}
    \Input{a set of arms $\+S$ and the number of rounds $T$}
\begin{algorithmic}[1]
    \Procedure{\textsc{MirrorDescent}}{{$\+S,T$}}
    \State $Q_1\gets \arg\min_{\*q\in \Delta_{(\abs{\+S}-1)}} F(\*q)$\;
    \For{$t=1,2\dots, T$}
        \State Sample arm $A_t\sim Q_t$, observe reward $r_t(A_t)$ and let $\ell_t(A_t) = 1-r_t(A_t)$\;
        \State Compute reward estimator $\hat \ell_t$ as 
                \[
                    \hat \ell_t(i) = \1{A_t=i}\tp{\ell_t(i) -\frac{1}{2} + \frac{\eta}{8}\tp{1+\frac{1}{Q_t(i) + \sqrt{Q_t(i)}}}} - \frac{\eta Q_t(A_t)}{8\tp{ Q_t(i) + \sqrt{Q_t(i)} }}
                \]
        \State Set $Q_{t+1} = \arg\min_{\*q\in \Delta_{(\abs{\+S}-1)}} \inner{\*q}{\hat\ell_t} + \frac{1}{\eta}\cdot B_F(\*q,Q_t)$\;
    \EndFor
    \EndProcedure
\end{algorithmic}
\end{algorithm}


By choosing $\eta = \sqrt{\frac{8}{L}}$ and $F(\*q) = -2\sum_{i=1}^{\abs{\+S}} \sqrt{\*q(i)}$, the conclusion in Proposition~\ref{prop:ub-l-OSMD} can be directly derived from Theorem 11 in~\cite{LG21}.

\section{More on the BAR algorithm}\label{sec:ub-l-warmup}

In \Cref{sec:ub-l}, \Cref{algo:large-m} essentially implements a streaming-friendly BAR algorithm in each pass. This BAR algorithm serves two purposes: (1) After the $p$-th pass, it guarantees that an $\eps_p$-optimal arm is retained in memory, denoted as $\k_p$; (2) it incurs relatively small regret in pass $p$. For a better understanding of this process, we provide an algorithm for BAR problem in \Cref{algo:BAR} in the offline setting, that is, all arms are present in the memory. It is logically equivalent to the routine in \Cref{algo:large-m}.


Let $\eps_1<\eps_0$ be two parameters and $m\ge \frac{8n}{9}$. Our BAR algorithm executes the $\textsc{FindBest}$ subroutine twice. The first one is short and is performed on all the input arms. The purpose of this $\textsc{FindBest}$ subroutine is to identify an $\eps_0$-optimal arm, denoted as $\a'_1$. Since the subroutine is short, its regret is small.

The second one, which is longer, is performed on a set $\+S'$ that includes $\a'_1$ and $n-m+1$ randomly selected arms. As a result, the regret caused by this subroutine is with respect to the $\eps_0$-optimal arm $\a'_1$, so it can be bounded as well. In the second $\textsc{FindBest}$ subroutine, we obtain $\a'_2$. The objective of the second one is to eliminate $n-m$ inferior arms and output a set of size $m$ containing an $O\tp{\eps_1}$-optimal arm.

Set $L_1=\ceil{\frac{2n}{\eps_0^2}}$ and $L_2=\ceil{\frac{2(n-m+2)}{\eps_1^2}}$. The algorithm is shown in \Cref{algo:BAR}.
\begin{algorithm}[ht]
    \caption{The algorithm for Best Arm Retention}
    \label{algo:BAR}
    \Input{a set of $n$ arms $\+S$,  parameters $\eps_0,\eps_1\in(0,1)$ where $\eps_0<\eps_1$}\\
    \Output{an arm set of size $m$}
\begin{algorithmic}[1]
    \Procedure{\textsc{BAR}}{$\+S,\eps_0,\eps_1$}
    \State $\a'_{1}$ = \Call{FindBest}{$\+S,L_{1}$}\;
    \State Choose $n-m+1$ arms from $\+S\setminus\set{\a'_{1}}$ arms uniformly at random and let $\+S'$ be the set of these arms plus  $\a'_{1}$\label{line:warmup-l-sample}\;
    \State $\a'_{2}$ = \Call{FindBest}{$\+S',L_{2}$}\;
    \State Choose $n-m$ arms in $\+S'\setminus\set{\a'_{2},\a'_{1}}$ uniformly at random to drop\;
    \State \Return the remaining arms\;
    \EndProcedure
\end{algorithmic}
\end{algorithm}


Let $\k$ be the best arm among the output set of \Cref{algo:BAR}. Then we have the following lemma.
\begin{lemma}\label{lem:BAR}
    We have $\E{\mu_{\a^*} - \mu_{\k}}\leq \frac{2(n-m)\eps_1}{m}$. 
\end{lemma}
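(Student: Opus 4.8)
The plan is to mirror the analysis of \Cref{lem:ub-l-kingb}, since \Cref{algo:BAR} is exactly the offline analogue of the per-pass routine in \Cref{algo:large-m}. The key observation is that the best arm $\a^*$ is discarded only if $\a^*\in \+S'\setminus\set{\a'_1,\a'_2}$, and in particular only if $\a^*\in \+S'\setminus\set{\a'_1}$. So first I would condition on this event and write
\[
  \E{\mu_{\a^*}-\mu_{\k}} \le \Pr{\a^*\in \+S'\setminus\set{\a'_1}}\cdot \E{\mu_{\a^*}-\max\set{\mu_{\a'_1},\mu_{\a'_2}} \;\middle|\; \a^*\in \+S'\setminus\set{\a'_1}},
\]
because if $\a^*$ is not dropped then $\Delta_{\k}=0$, and if it is dropped then $\k$ is at least as good as both $\a'_1$ and $\a'_2$ which both survive.

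Next I would bound the probability factor. Since $\+S'$ consists of $\a'_1$ together with $n-m+1$ arms chosen uniformly at random from $\+S\setminus\set{\a'_1}$ (a set of size $n-1$), conditioning on $\a^*\neq \a'_1$ the chance that $\a^*$ lands in the random batch is $\frac{n-m+1}{n-1}$. Since $m\ge \frac{8n}{9}$, this is at most $\frac{2(n-m+1)}{m}$ (indeed $\frac{n-m+1}{n-1}\le \frac{2(n-m)}{m}$ for $n$ large, and including the ``$+1$'' is absorbed). Then I would drop the $\max$ down to just $\mu_{\a'_2}$ and apply \Cref{lem:ub-bandit-subroutine-simple} (= \Cref{lem:findbest-perform}) to the second $\textsc{FindBest}$ call, which runs on the set $\+S'$ of size $n-m+2$ for $L_2=\ceil{2(n-m+2)/\eps_1^2}$ rounds: this gives
\[
  \E{\mu_{\a^*_{\+S'}}-\mu_{\a'_2}} \le \sqrt{\tfrac{2\abs{\+S'}}{L_2}} \le \eps_1,
\]
and since $\a^*\in\+S'$ in the conditioned event, $\mu_{\a^*_{\+S'}}=\mu_{\a^*}$. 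Putting the two factors together yields $\E{\mu_{\a^*}-\mu_{\k}}\le \frac{2(n-m+1)}{m}\cdot\eps_1$, and folding the ``$+1$'' into the constant (again using $m\ge\frac{8n}{9}$, so $n-m+1\le (n-m)(1+o(1))$, or simply bounding crudely) gives the claimed $\frac{2(n-m)\eps_1}{m}$. One subtlety to be careful about: \Cref{lem:ub-bandit-subroutine-simple} holds unconditionally on the input arms, so conditioning on ``$\a^*\in\+S'\setminus\set{\a'_1}$'' does not invalidate its use — this is the same $(\spadesuit)$-style remark made in the proof of \Cref{lem:ub-l-kingb}.

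The main obstacle is essentially bookkeeping rather than conceptual: making the constant come out to exactly $2(n-m)/m$ (as opposed to $2(n-m+1)/m$ or $2(n-m+2)/m$) requires being slightly careful with the $+1$'s and $+2$'s introduced by adjoining $\a'_1$ to the sampled batch and by the size of $\+S'$, and invoking $m\ge\frac{8n}{9}$ (equivalently $n-m\le \frac{n}{9}$, so $n-m+2\le 2(n-m)$ once $n$ is not tiny) to absorb them. The role of $\a'_1$ and of the first short $\textsc{FindBest}$ — namely, that it guarantees $\a'_1$ is $\eps_0$-optimal so that the regret of the routine is controlled — is irrelevant to \Cref{lem:BAR} itself (which only concerns the quality of the retained arm $\k$), so I would not need to use the first $\textsc{FindBest}$ at all in this proof; it matters only for the regret bound, not for the retention guarantee.
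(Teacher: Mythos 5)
Your proposal is correct and follows essentially the same route as the paper's proof: the same decomposition conditioning on $\a^*\in\+S'\setminus\set{\a'_1}$, the same probability bound $\frac{n-m+1}{n-1}$, and the same application of Lemma~\ref{lem:ub-bandit-subroutine-simple} to the second $\textsc{FindBest}$ call. The only difference is trivial constant bookkeeping at the end (which in fact needs no asymptotics: $n-m+1\le 2(n-m)$ and $m\le n-1$ already give $\frac{n-m+1}{n-1}\le\frac{2(n-m)}{m}$).
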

\begin{proof}
    Note that $\a^*$ is dropped only if $\a^*\in \+S'\setminus\set{\a'_{2},\a'_{1}}$. Then we have
    \begin{align*}
        &\phantom{{}={}}\E{\mu_{\a^*} -\mu_{\k}}\\
        &= \Pr{\a^*\in \+S'\setminus\set{\a'_{1}}}\cdot \E{\mu_{\a^*} - \mu_{\k}\mid \a^*\in \+S'\setminus\set{\a'_{1}}}\\ 
        &\leq \Pr{\a^*\in \+S'\mid \a^*\neq \a'_{1}}\Pr{\a^*\neq\a'_{1}}  \cdot \E{\mu_{\a^*} - \mu_{\a'_{2}}\mid \a^*\in \+S'\setminus\set{\a'_{1}}} \\
        & = \frac{n-m+1}{n-1} \cdot\Pr{\a^*\neq\a'_{1}} \E{\mu_{\a^*} - \mu_{\a'_{2}}\mid \a^*\in \+S'\setminus\set{\a'_{1}}}\\
        & \leq \frac{n-m+1}{n-1} \cdot \E{\mu_{\a^*} - \mu_{\a'_{2}}\mid \a^*\in \+S'\setminus\set{\a'_{1}}}  \\
        &\overset{(\spadesuit)}{\leq} \frac{n-m+1}{n-1}\sqrt{\frac{2(n-m+2)}{L_{2}}} \leq \frac{2(n-m)}{n}\eps_1,
    \end{align*}
    where $(\spadesuit)$ follows from Lemma~\ref{lem:ub-bandit-subroutine-simple}.
\end{proof}

Lemma~\ref{lem:BAR} shows why we choose to include an additional $n-m+1$ arms in the Line~\ref{line:warmup-l-sample} of \Cref{algo:BAR} instead of $n-m$ arms, and why we need to ensure that $\a'_{1}$ will not be dropped. By doing so, the premise condition for the output not containing $\a^*$ is that $\a^*\in  \+S'\setminus\set{\a'_{1}}$. The probability of this event can then be bounded by $\frac{n-m+1}{n-1}$.

Now we show a regret bound of \Cref{algo:BAR} in the following lemma.
\begin{lemma}
    The expected regret generated by \Cref{algo:BAR} is no larger than $O\tp{{\frac{n}{\eps_0}} + \frac{(n-m)\eps_0}{\eps_1^2}}$.
\end{lemma}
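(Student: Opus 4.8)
The plan is to decompose the regret of \Cref{algo:BAR} into the regret contributed by the two $\textsc{FindBest}$ subroutines, bounding each one separately by combining \Cref{prop:ub-l-OSMD} with the quality guarantee of the arm identified by the previous subroutine. First I would handle the short $\textsc{FindBest}$ call on all $n$ arms: by \Cref{prop:ub-l-OSMD} the internal $\textsc{MirrorDescent}$ incurs regret at most $\sqrt{2nL_1}$ with respect to $\a^*$ (the true best arm, which is among the inputs here), and the final sampling step of $\textsc{FindBest}$ only replays arms already pulled, so the total regret of the first subroutine is $O(\sqrt{nL_1})$. With $L_1 = \ceil{2n/\eps_0^2}$ this is $O(n/\eps_0)$.

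Next I would bound the regret of the long $\textsc{FindBest}$ call on $\+S'$, which has $|\+S'| = n-m+2$ arms. Here the subtlety is that $\a^*$ need not be in $\+S'$, so I would measure regret against the benchmark $\a'_1$: \Cref{prop:ub-l-OSMD} gives that the $\textsc{MirrorDescent}$ regret relative to the best arm of $\+S'$ is at most $\sqrt{2(n-m+2)L_2}$, and since $\a'_1 \in \+S'$, the regret relative to $\a'_1$ is also at most this quantity. The remaining gap is $\E{\mu_{\a^*} - \mu_{\a'_1}} \le \sqrt{2n/L_1} \le \eps_0$ by \Cref{lem:ub-bandit-subroutine-simple} (restated as \Cref{lem:findbest-perform}), incurred over all $L_2$ rounds, contributing $L_2 \cdot \eps_0$. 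So the second subroutine's regret is $O(\sqrt{(n-m)L_2} + L_2 \eps_0)$. Plugging in $L_2 = \ceil{2(n-m+2)/\eps_1^2}$ gives $O\tp{\frac{n-m}{\eps_1} + \frac{(n-m)\eps_0}{\eps_1^2}}$; since $\eps_0 < \eps_1$ we have $\frac{n-m}{\eps_1} \le \frac{(n-m)\eps_0}{\eps_1^2} \cdot \frac{\eps_1}{\eps_0} \cdot \frac{\eps_0}{\eps_1}$... more simply, $\frac{n-m}{\eps_1} = \frac{(n-m)\eps_1}{\eps_1^2} \le \frac{(n-m)\eps_0}{\eps_1^2} \cdot \frac{\eps_1}{\eps_0}$, so I would instead just note $\frac{n-m}{\eps_1}$ is dominated when we also observe $n/\eps_0 \ge (n-m)/\eps_0 \ge (n-m)/\eps_1$. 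Summing the two contributions yields the claimed bound $O\tp{\frac{n}{\eps_0} + \frac{(n-m)\eps_0}{\eps_1^2}}$.

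Finally I would assemble these: the total regret is the sum $O(n/\eps_0) + O\tp{\frac{n-m}{\eps_1} + \frac{(n-m)\eps_0}{\eps_1^2}}$, and since $n/\eps_0 \ge (n-m)/\eps_1$ (using $n \ge n-m$ and $\eps_0 < \eps_1$), the $\frac{n-m}{\eps_1}$ term is absorbed, leaving $O\tp{\frac{n}{\eps_0} + \frac{(n-m)\eps_0}{\eps_1^2}}$ as desired.

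The main obstacle is the careful accounting in the second subroutine: one must not measure its regret against $\a^*$ directly (since $\a^*$ may be absent from $\+S'$, which would make the mirror-descent guarantee useless), but rather against $\a'_1$, and then pay the extra $\eps_0$-per-round price for the gap between $\a'_1$ and $\a^*$ established by \Cref{lem:ub-bandit-subroutine-simple}. Everything else is a routine substitution of the chosen values of $L_1$ and $L_2$ and an application of $\eps_0 < \eps_1$ to simplify the resulting expression.
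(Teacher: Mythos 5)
Your proposal is correct and follows essentially the same route as the paper's proof: bound the first $\textsc{FindBest}$ by the OSMD guarantee $\sqrt{2nL_1}=O(n/\eps_0)$, and decompose the second into the mirror-descent regret $\sqrt{2(n-m+2)L_2}$ relative to the best arm of $\+S'$ (hence relative to $\a'_1\in\+S'$) plus the per-round gap $\E{\mu_{\a^*}-\mu_{\a'_1}}\cdot L_2\le \eps_0 L_2$ from Lemma~\ref{lem:ub-bandit-subroutine-simple}. The only cosmetic difference is how the lower-order $\frac{n-m}{\eps_1}$ term is absorbed (you fold it into $n/\eps_0$ using the algorithm's stated condition $\eps_0<\eps_1$, while under the surrounding text's convention $\eps_1<\eps_0$ it folds into $(n-m)\eps_0/\eps_1^2$); either way the stated bound holds.
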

\begin{proof}
    The regret of the first $\textsc{FindBest}$ subroutine can be bounded by $\sqrt{2nL_1}$ according to Proposition~\ref{prop:ub-l-OSMD}. From Lemma~\ref{lem:ub-bandit-subroutine-simple}, we have
    \[
        \E{\mu_{\a^*} -\mu_{\a'_1}}\leq  \sqrt{\frac{2n}{L_1}} \leq \eps_0.
    \]
    Then the regret of the second $\textsc{FindBest}$ subroutine can be decomposed into two parts: the regret of $\textsc{MirrorDescent}$, which can be bounded by $\sqrt{2(n-m+2)L_2}$ and the regret generated due to not containing $\a^*$ in $\+S'$, which can be bounded by $\E{\mu_{\a^*}-\mu_{\a'_1}}\cdot L_2$.
    
    Therefore, the total regret is no larger than
    \begin{align*}
        \sqrt{2nL_1} + \sqrt{2(n-m+2)L_2} + \E{\mu_{\a^*}-\mu_{\a'_1}}\cdot L_2 &\leq \sqrt{2nL_1} + \sqrt{2(n-m+2)L_2} + \eps_0\cdot L_2 \\
        &=O\tp{{\frac{n}{\eps_0}} + \frac{(n-m)\eps_0}{\eps_1^2}}.
    \end{align*}
\end{proof}

Now let us take a closer look at \Cref{algo:large-m}. Essentially, what we want to do is perform two $\textsc{FindBest}$ operations on the $m$ arms in memory at the beginning of each pass, then discard $n-m$ arms, and read in the remaining $n-m$ arms. However, in order to adapt to the streaming setting, we cannot do this directly. This is because we cannot determine whether an arriving arm is one of the discarded $n-m$ arms or one of the $n-m$ arms that we have not encountered since we are only allowed to store the statistics of arms in memory. Therefore, we first discard $\frac{m}{2}$ arms and then select $n-m$ arms from the newly arrived $\frac{m}{2}$ arms to drop. At this point, there remains only $n-m$ arms in stream that are not in current memory. This satisfies the requirements of the streaming model while still achieving the desired effectiveness of our BAR algorithm.

\end{document}